\documentclass{article}


\usepackage{xcolor, colortbl} 
\usepackage{wrapfig} 

\usepackage{graphicx}
\usepackage{natbib}
\usepackage{soul}
\setstcolor{red}
\usepackage{nicefrac}
\usepackage{amsmath}
\usepackage{amsthm}
\usepackage{amssymb}
\usepackage{bm} 
\usepackage{bbm} 
\usepackage{mathtools}
\usepackage{enumitem}
\usepackage[ruled, linesnumbered]{algorithm2e}
\usepackage{hyperref}

\usepackage{subcaption}
\usepackage{diagbox}

\usepackage{makecell}
\setlength\extrarowheight{5pt}



\setlength{\textfloatsep}{0.5cm}
\setlength{\floatsep}{0.5cm}





\newtheorem{theorem}{Theorem}
\newtheorem{lemma}{Lemma}
\newtheorem{proposition}{Proposition}

\newtheorem{assumption}{Assumption}
\newtheorem{example}{Example}

\newtheorem{remark}{Remark}
\newtheorem{problem_def}{Problem}

\theoremstyle{definition}
\newtheorem{definition}{Definition}

\newcommand{\lp}{\left (} 
\newcommand{\rp}{\right )} 
\newcommand{\lb}{\left [}
\newcommand{\rb}{\right ]}

\newcommand{\mc}[1]{\mathcal{#1}}
\newcommand{\mbb}[1]{\mathbb{#1}}

\newcommand{\indi}[1]{\mathbbm{1}_{ \{#1\} }}

\newcommand{\reals}{\mathbb{R}} 
\usepackage{blindtext}

\newcommand{\X}{\mathcal{X}}
\newcommand{\Y}{\mathcal{Y}}
\newcommand{\F}{\mathcal{F}}

\DeclareMathOperator*{\argmax}{arg\,max}
\DeclareMathOperator*{\argmin}{arg\,min}

\newcommand{\tbf}[1]{\textbf{#1}}

\makeatletter
\newcommand{\pushright}[1]{\ifmeasuring@#1\else\omit\hfill$\displaystyle#1$\fi\ignorespaces}
\newcommand{\pushleft}[1]{\ifmeasuring@#1\else\omit$\displaystyle#1$\hfill\fi\ignorespaces}
\makeatother




\usepackage[preprint]{neurips_2021}



\usepackage[utf8]{inputenc} 
\usepackage[T1]{fontenc}    
\usepackage{url}            
\usepackage{booktabs}       
\usepackage{amsfonts}       
\usepackage{nicefrac}       
\usepackage{microtype}      
\usepackage{xcolor}         

\title{Adaptive Sampling for Minimax Fair Classification}

%

\author{%
  Shubhanshu Shekhar \\
  ECE Department, UCSD\\
  \texttt{shshekha@ucsd.edu} \\
   \And
   Greg Fields \\
   ECE Department, UCSD \\
   \texttt{grfields@ucsd.edu} \\
   \And
   Mohammad Ghavamzadeh \\
   Google Research \\
   \texttt{ghavamza@google.com} \\
   \And
   Tara Javidi \\
   ECE Department, UCSD \\
   \texttt{tjavidi@ucsd.edu} \\

}

\begin{document}

\maketitle

\vspace{-0.175in}
\begin{abstract}
\vspace{-0.05in}
Machine learning models trained on uncurated datasets can often end up adversely affecting inputs belonging to underrepresented groups. To address this issue, we consider the problem of adaptively constructing training sets which allow us to learn classifiers that are fair in a {\em minimax} sense. We first propose an adaptive sampling algorithm based on the principle of \emph{optimism}, and derive theoretical bounds on its performance. We also propose heuristic extensions of this algorithm suitable for application to large scale, practical problems. Next, by deriving algorithm independent lower-bounds for a specific class of problems, we show that the performance achieved by our adaptive scheme cannot be improved in general. We then validate the benefits of adaptively constructing training sets via experiments on synthetic tasks with logistic regression classifiers, as well as on several real-world tasks using convolutional neural networks (CNNs).
\end{abstract}

\newcommand{\tj}[1]{\textcolor{green}{(TJ:~#1)}}
\newcommand{\mgh}[1]{\textcolor{orange}{(MGH:~#1)}}
\newcommand{\ssnotes}[1]{\textcolor{blue}{(SS:~#1)}}
\newcommand{\greg}[1]{\textcolor{red}{(G:~#1)}}

\newcommand{\defined}{\coloneqq}
\newenvironment{proofoutline}
{\renewcommand\qedsymbol{}\proof[Proof Outline]}
{\endproof}


\newcommand*{\lpgpucb}{\texttt{LP-GP-UCB}\xspace}
\newcommand*{\heuristic}{\texttt{Heuristic}\xspace}
\newcommand*{\empirical}{\texttt{Empirical}\xspace}
\newcommand*{\epsg}{$\epsilon$-\texttt{greedy}\xspace}
\newcommand*{\toymodel}{\texttt{SyntheticModel1}} 
\newcommand*{\toymodelii}{\texttt{SyntheticModel2}} 
\newcommand{\smallestbudget}{\ttt{SmallestBudget}}
\newcommand{\f}{f}

\newcommand{\ttt}[1]{\texttt{#1}}

\newcommand{\Aopt}{\mc{A}_{\texttt{opt}}}
\newcommand{\Aeps}{\mc{A}_{\epsilon}}

\newcommand{\fairml}{Fair-ML\xspace}
\newcommand*{\loss}{\ell}
\newcommand*{\iid}{\text{i.i.d.}\xspace}

\newcommand{\Mu}{\pmb{\mu}}
\newcommand*{\N}{\mathbb{N}}
\newcommand{\att}{\mathfrak{a}}

\newcommand{\domain}{\mathcal{X}} 

\newcommand*{\Ehatz}[2][t]{\widehat{L}_{#1}(#2)}
\newcommand*{\Ez}[2][z]{L(#1, #2)}

\newcommand*{\Zover}{\mc{Z}_{o}}
\newcommand*{\Zunder}{\mc{Z}_{u}}

\newcommand{\cont}[1]{\mc{C}\lp #1 \rp }

\newcommand{\Tau}{\mathrm{T}}

\newcommand{\Oh}[1]{\mathcal{O}\lp #1 \rp}
\newcommand{\tOh}[1]{\tilde{\mathcal{O}} \lp #1 \rp }

\newcommand{\tPhi}{\widetilde{\Phi}}
\newcommand{\bl}{\;\bullet \;}

\newcommand{\propref}[1]{Proposition~\ref{#1}}
\newcommand{\algoref}[1]{Algorithm~\ref{#1}}
\newcommand{\thmref}[1]{Theorem~\ref{#1}} 
\newcommand{\lemmaref}[1]{Lemma~\ref{#1}}
\newcommand{\defref}[1]{Definition~\ref{#1}} 
\newcommand{\assumpref}[1]{Assumption~\ref{#1}}
\newcommand{\figref}[1]{Figure~\ref{#1}}

\definecolor{mygray}{rgb}{0.90, 0.90, 0.90}

\newcommand*{\Dz}{\mc{D}_{(z)}} 

\vspace{-0.1in}
\section{Introduction}
\vspace{-0.05in}
\label{sec:introduction}
\newlength{\threesubht}
\newsavebox{\threesubbox}
\vspace{-0.7em}
Machine learning~(ML) models are increasingly being applied for automating the decision-making process in several sensitive applications, such as loan approval and employment screening. However, recent work has demonstrated that discriminatory behaviour might get encoded in the model at various stages of the ML pipeline, such as data collection, labelling, feature selection, and training, and as a result, adversely impact members of some protected groups in rather subtle ways~\citep{barocas2016big}. This is why ML researchers have started to introduce a large number of {\em fairness measures} to include the notion of fairness in the design of their algorithms. Some of the important measures of fairness include demographic parity~\citep{zemel2013learning}, equal odds and opportunity~\citep{hardt2016equality,Woodworth17LN}, individual fairness~\citep{dwork2012fairness}, and minimax fairness~\citep{feldman2015certifying}. The {\em minimax} fairness is particularly important in scenarios in which it is necessary to be as close as possible to equality without introducing unnecessary harm~\citep{Ustun19FH}. These scenarios are common in areas such as healthcare and predicting domestic violence. A measure that has been explored to achieve this goal is predictive risk disparity~\citep{feldman2015certifying,Chen18CD,Ustun19FH}. Instead of using the common approach of putting constraints on the norm of discrimination gaps,~\citet{martinez2020minimax} has recently introduced the notion of {\em minimax Pareto fairness}. These are minimax classifiers that are on the Pareto frontier of prediction risk, i.e.,~no decrease in the predictive risk of one group is possible without increasing the risk of another one. 

In this paper, we are primarily interested in the notion of minimax fairness in terms of the predictive risk. However, instead of studying the training phase of the ML pipeline, our focus is on the data-collection stage, motivated by~\citet{Jo19} and~\citet{Holstein18}. In particular, we study the following question: \emph{given a finite sampling budget, how should a learner construct a training set consisting of elements from different protected groups in appropriate proportions to ensure that a classifier trained on this dataset achieves minimax fairness?}

Our work is motivated by the following scenario: suppose we have to learn a ML model for performing a task (e.g.,~loan approval) for inputs belonging to different groups based on protected attributes, such as race or gender. Depending upon the distribution of input-label pairs from different groups, the given task may be statistically harder for some groups. Our goal is to ensure that the eventual ML model has optimal predictive accuracy for the worst-off group. We show that, under certain technical conditions, this results in a model with comparable predictive accuracy over all groups. One approach for this would be to train separate classifiers for each group. However, this is often not possible as the group-membership information may not be available at the deployment time, or it may be forbidden by law to explicitly use the protected characteristics as an input in the prediction process~\citep[\S~1]{lipton2017does}. To ensure having a higher proportion of samples from the \emph{harder} groups without knowing the identity of the hard or easy groups apriori, we consider this problem in an \emph{active setting}, where a learner has to incrementally construct a training set by drawing samples one at a time (or a batch at a time) from different groups. Towards this goal, we propose and analyze an adaptive sampling scheme based on the principle of \emph{optimism}, used in bandits literature (e.g.,~\citealt{auer2002finite}), that detects the \emph{harder} groups and populates the training set with more samples from them in an adaptive manner. We also wish to note that bias in ML has multi-faceted origins and that our work here addresses dataset construction and cannot account for bias introduced by model selection, the underlying data distribution, or other sources as discussed in~\citet{hooker21},~\citet{Suresh19}.  We also endeavor to ensure \emph{minimax} fairness, but in some contexts another notion of fairness, such as those mentioned above, may be more appropriate or equitable.  In general, application of our algorithm is not a guarantee that the resulting model is wholly without bias. 

Our \emph{main contributions} are: \tbf{1)} We first propose an optimistic adaptive sampling strategy, $\Aopt$, for training set construction in Section~\ref{sec:algorithm}. This strategy is suited to smaller problems and admits theoretical guarantees.   We then introduce a heuristic variant of $\Aopt$ in Section~\ref{sec:heuristics} that is more suitable to practical problems involving CNNs. \tbf{2)} We obtain upper bounds on the convergence rate of $\Aopt$, and show its minimax near-optimality by constructing a matching lower bound in Section~\ref{subsec:theoretical}. \tbf{3)} Finally, we   demonstrate the benefits of our algorithm with empirical results on several synthetic and real-world datasets in Section~\ref{sec:empirical}.

{\bf Related Work.} The closest work to ours is by~\citet{abernethy2020adaptive}, where they propose an $\epsilon$-greedy adaptive sampling strategy. They present theoretical analysis under somewhat restrictive assumptions and also empirically demonstrate the benefits of their strategy over some baselines. We describe their results in more detail in Appendix~\ref{subsec:analysis_epsilon_greedy}, and employ the tools we develop to analyze our algorithm to perform a thorough analysis of the excess risk of their strategy under a much less restrictive set of assumptions and to show some necessary conditions on the value of their exploration parameter, $\epsilon$.  We find comparable empirical results for both algorithms given sufficient tuning of their respective hyperparameters, we report some of these results in Section~\ref{sec:empirical} and compare the algorithms in Appendix~\ref{appendix:epsilon_greedy}.  
In another related work,~\citet{anahideh2020fair} propose a fair adaptive sampling strategy that selects points based on a linear combination of model accuracy and fairness measures, and empirically study its performance. These results, however, do not obtain convergence rates of the excess risk of their respective methods and only offer implementations for small-scale datasets. 

The above results study the problem of fair classification in an {\em active} setting and target the {\em data-collection} stage of the ML pipeline. There are also works that take a {\em passive} approach to this problem and focus of the {\em training} phase of the pipeline.~\citet{agarwal2018reductions} design a scheme for learning fair binary classifiers with fairness metrics that can be written as linear constraints involving certain conditional moments. This class of fairness metrics, however, do not contain the minimax fairness measure.~\citet{diana2020convergent} propose a method for constructing (randomized) classifiers for minimax fairness w.r.t.~the empirical loss calculated on the given training set. Similarly,~\citet{martinez2020minimax} derive an algorithm for learning a Pareto optimal minimax fair classifier, under the assumption that the learner has access to the true expected loss functions. Thus, the theoretical guarantees in~\citet{diana2020convergent} and~\citet{martinez2020minimax} hold under the assumption of large training sets. Our work, in contrast, constructs the training set incrementally (active setting) from scratch while carefully taking into account the effects of the finite sample size. Finally, we note that the data-collection strategies proposed in our paper can, in principle, be combined with the training methods presented in~\citet{martinez2020minimax} and~\citet{diana2020convergent} to further guarantee the (minimax) fairness of the resulting classifier. We leave the investigation of this approach for future work.

Besides data-collection and training, there have been studies in the fair ML literature on other aspects of the ML pipeline, such as pre-processing~\citep{celis2020data}, learning feature representations~\citep{zemel2013learning}, post-processing~\citep{hardt2016equality}, and model documentation~\citep{Mitchell18}. We refer the readers to a recent survey by~\citet{caton2020fairness} for more detailed description of these results.

\vspace{-1em}

\section{Problem Formulation}
\label{sec:problem-formulation}
\vspace{-0.5em}

Consider a classification problem with the feature (input) space $\X$, label set $\Y$, and a protected attribute set $\mc{Z} = \{z_1, \ldots, z_m\}$. For any $z \in \mc{Z}$, we use $P_z(x,y)$ as a shorthand for $P_{XY|Z=z}\lp X=x, Y=y \mid Z=z\rp$ to denote the feature-label joint distribution given that the protected feature value is $Z=z$. We also use $\mbb{E}_z[\cdot]$ as a shorthand for the expectation w.r.t.~$P_z$. A (non-randomized) classifier is a mapping $f: \X \mapsto \Y$, which assigns a label $y \in \Y$ to every feature $x \in \X$. For a family of classifiers $\F \subset \Y^\X$, a loss function $\loss: \F \times \X \times \Y \mapsto \reals$, and a mixture distribution over the protected attributes $\pi\in\Delta_m$, we define the \emph{$\pi$-optimal classifier} $f_\pi$ as 
%
\begin{equation}
\label{eq:optimal_classifier_mixture} 
f_\pi \in \argmin_{f \in \mc{F}} \; \mbb{E}_\pi \lb \loss(f, X, Y) \rb := \sum_{z \in \mc{Z}} \pi(z) \; \mbb{E}_z \lb \loss(f, X, Y) \rb. 
\end{equation}
%
When $\pi$ lies on the corners of the simplex $\Delta_{m}$, i.e., $\pi(z)=1$ for a $z \in \mc{Z}$, we use the notation $f_z$ instead of $f_\pi$. For $\pi$ in the interior of $\Delta_m$, it is clear that the risk of $f_\pi$ on group $z$, i.e., $\mbb{E}_z\lb \loss(f_\pi, X, Y)\rb$, must be larger than the best possible classification loss for $P_{XY|Z=z}$. In this paper, our goal is to develop an \emph{active sampling scheme} to find the fair mixture distribution $\pi^*$ in a {\em minimax} sense ({\em minimizing the maximum risk among the groups}), i.e.,
%
\begin{equation}
\label{eq:optimal_mixture}
\pi^* \in \argmin_{\pi \in \Delta_m}\; \max_{z \in \mc{Z}}\; L(z, f_\pi) := \mbb{E}_z \lb \loss(f_\pi, X, Y) \rb. 
\end{equation}
%
The active sampling problem that we study in this paper can be formally defined as follows: 

\begin{problem_def}
\label{problem_def}
Suppose $\mc{F}$ denotes a family of classifiers, $\loss$ a loss function, $n$ is a sampling budget, and $\mc{O}:\mc{Z}\mapsto \X \times \Y$ an oracle that maps any attribute $z \in \mc{Z}$ to a feature-label pair $(X,Y) \sim P_{XY|Z=z}$. The learner designs an adaptive sampling scheme $\mc{A}$ that comprises a sequence of mappings $(A_t)_{t=1}^n$ with $A_t: (\X\times \Y)^{t-1} \mapsto \mc{Z}$ to adaptively query $\mc{O}$ and construct a dataset of size $n$. Let $\pi_n \in \Delta_m$ denote the resulting empirical mixture distribution over $\mc{Z}$, where $\pi_n(z) = N_{z,n}/n$ and $N_{z,n}$ is the number of times that $\mc{A}$ samples from $P_z$ in $n$ rounds. Then, the quality of the resulting dataset is measured by the excess risk, or sub-optimality, of the $\pi_n$-optimal classifier $f_{\pi_n}$, i.e.,
%
\begin{equation}
    \label{eq:risk}
    \mc{R}_n \lp \mc{A} \rp \coloneqq \max_{z \in \mc{Z}} \; L\lp z, f_{\pi_n} \rp  \;- \; \max_{z \in \mc{Z}} \; L(z, f_{\pi^*}) \; ,
\end{equation}
%
where $\pi^*$ is the fair mixture distribution defined by~\eqref{eq:optimal_mixture}. Hence, the goal of the learner is to design a strategy $\mc{A}$ which has a small excess risk $\mc{R}_n(\mathcal A)$. 
\end{problem_def}

Informally, the algorithm should adaptively identify the harder groups $z$ and dedicate a larger portion of the overall budget $n$ to sample from their distributions (see Section~\ref{subsec:motivating_example} for an illustrative example).


\vspace{-0.1in}
\subsection{Properties of the Fair Mixture}
\label{subsec:prop-fair-mixture}
\vspace{-.75em}
As discussed above, our goal is to derive an active sampling scheme that allocates the overall budget $n$ over the attributes $z\in\mathcal Z$ in a similar manner as the {\em unknown} fair mixture $\pi^*$, defined by~\eqref{eq:optimal_mixture}. Thus, it is important to better understand the properties of $\pi^*$ and the $\pi$-optimal classifiers $f_\pi$, defined by~\eqref{eq:optimal_classifier_mixture}. We state three properties of $f_\pi$ and $\pi^*$ in this section. We refer the readers to~\citet{martinez2020minimax} for the definitions of Pareto front and convexity discussed in this section. 

{\bf Property~1.} As discussed in~\citet[Section~4]{martinez2020minimax}, any  $f_\pi$ that solves~\eqref{eq:optimal_classifier_mixture} for a $\pi$ with $\pi(z)>0,\;\forall z\in\mathcal Z$, belongs to the Pareto front $\mathcal P_{\mathcal Z,\mathcal F}$ of the risk functions $\{L(z,f)\}_{z\in\mathcal Z},\;\forall f\in\mathcal F$. 

{\bf Property~2.} We prove in Proposition~\ref{prop:sufficient} (see Appendix~\ref{proof:proposition_sufficient} for the proof) that under the following two assumptions on the conditional distributions $\{P_z\}_{z\in\mathcal Z}$, function class $\mathcal F$, and loss function $\ell$, there exists a unique fair mixture $\pi^*$, whose classifier $f_{\pi^*}$ has equal risk over all attributes $z\in\mathcal Z$. 

\begin{assumption}
\label{assump:ideal_case}
The mapping $\pi \mapsto L(z,f_\pi)$ is continuous for all attributes $z \in \mc{Z}$. Furthermore, if $\pi, \nu \in \Delta_m$ are such that $\pi(z) > \nu(z)$ for an attribute $z\in \mc{Z}$, then $L(z,f_\pi) < L(z,f_\nu)$.
\end{assumption}    

The above assumption indicates that increasing the weight of an attribute $z$ in the mixture distribution must lead to an increase in the performance of the resulting classifier on the distribution $P_z$. 
\begin{assumption}
\label{assump:ideal_case2}
For any two distinct attributes $z, z' \in \mc{Z}$, we must have $L(z,f^*_z) < L(z',f^*_z)$, for any $f^*_z \in \argmin_{f \in \mc{F}}\, L(z,f)$. 
\end{assumption} 
This assumption requires that any optimal classifier corresponding to distribution $P_z$ to have higher risk w.r.t.~the distribution $P_{z'}$ of any other attribute $z'\in\mc{Z}$. Note that Assumption~\ref{assump:ideal_case2} may not always hold, for example, if one attribute is \emph{significantly} easier to classify than another.


\begin{proposition}
\label{prop:sufficient}
Let Assumptions~\ref{assump:ideal_case} and~\ref{assump:ideal_case2} hold for the conditional distributions $\{P_z\}_{z \in \mc{Z}}$, the loss function $\loss$, and the function class $\mc{F}$. Then, there exists a unique $\pi^* \in \Delta_m$ that achieves the optimal value for problem~\eqref{eq:optimal_mixture} and satisfies 
$L(z_1, f_{\pi^*}) = L(z_2, f_{\pi^*}) = \cdots = L(z_m, f_{\pi^*})\defined M^*$.
\end{proposition}
%

In other words, Assumptions~\ref{assump:ideal_case} and~\ref{assump:ideal_case2} define a regime where there exists a fair mixture $\pi^*$, whose classifier $f_{\pi^*}$ achieves complete parity in the classification performance across all attributes. Moreover, Property~1 indicates that $f_{\pi^*}$ also belongs to the Pareto front $\mathcal P_{\mathcal Z,\mathcal F}$. Therefore, under these two assumptions, the equal risk classifier not only belongs to the Pareto front, but it is also the minimax
Pareto fair classifier (Lemma~3.1 in~\citealt{martinez2020minimax}). Note that, in general, without Assumption 1, the classifier that attains equality of risk might have worse performance on all attributes than the minimax Pareto fair classifier (Lemma~3.2 in~\citealt{martinez2020minimax}). 

{\bf Property~3.} We may show that when both the function class $\mathcal F$ and risk functions $\{L(z,\cdot)\}_{z\in\mathcal Z}$ are {\em convex}, $f_{\pi^*}$ is a minimax Pareto fair classifier. As originally derived by~\citet{Geoffrion68PE} and then restated by~\citet[Theorem~4.1]{martinez2020minimax}, under these convexity assumptions, the Pareto front $\mathcal P_{\mathcal Z,\mathcal F}$ is convex and any classifier on $\mathcal P_{\mathcal Z,\mathcal F}$ is a solution to~\eqref{eq:optimal_classifier_mixture}. This together with Property~1 indicates that the classifier $f_{\pi^*}$ corresponding to the fair (minimax optimal) mixture $\pi^*$ is on the Pareto front, and additionally, is a minimax Pareto fair classifier.


\vspace{-0.1in}
\subsection{Synthetic Models}
\label{subsec:motivating_example}
\vspace{-0.075in}

It is illustrative to consider a class of synthetic models which we use as a running example throughout the paper. 

\begin{definition}[\toymodel]
\label{def:toymodel}
Set $\X = \mbb{R}^2$, $\mc{Y} = \{0, 1 \}$, and $\mc{Z} = \{u, v\}$. Each instance of our synthetic model is defined by the set of distributions $P_{XY|Z}$ that satisfy the following: $P_{Y|Z}(Y=1|Z=z) = 0.5$ for both $z\in \mc{Z}$, and $P_{X|Y=y,Z=z}= \mathcal{N}(\mu_{yz}, I_2)$ for all $(y,z) \in \Y \times \mc{Z}$. 
Thus, each instance of this model can be represented by a mean vector $\bm{\mu}=(\mu_{yz}:y\in\mc{Y},\;z\in\mc{Z})$.
\end{definition}

\begin{figure}[b]
\captionsetup[subfigure]{labelformat=empty}
     \centering
    \subcaptionbox{
    \label{fig:synth1}}{\includegraphics[width=0.43\columnwidth]{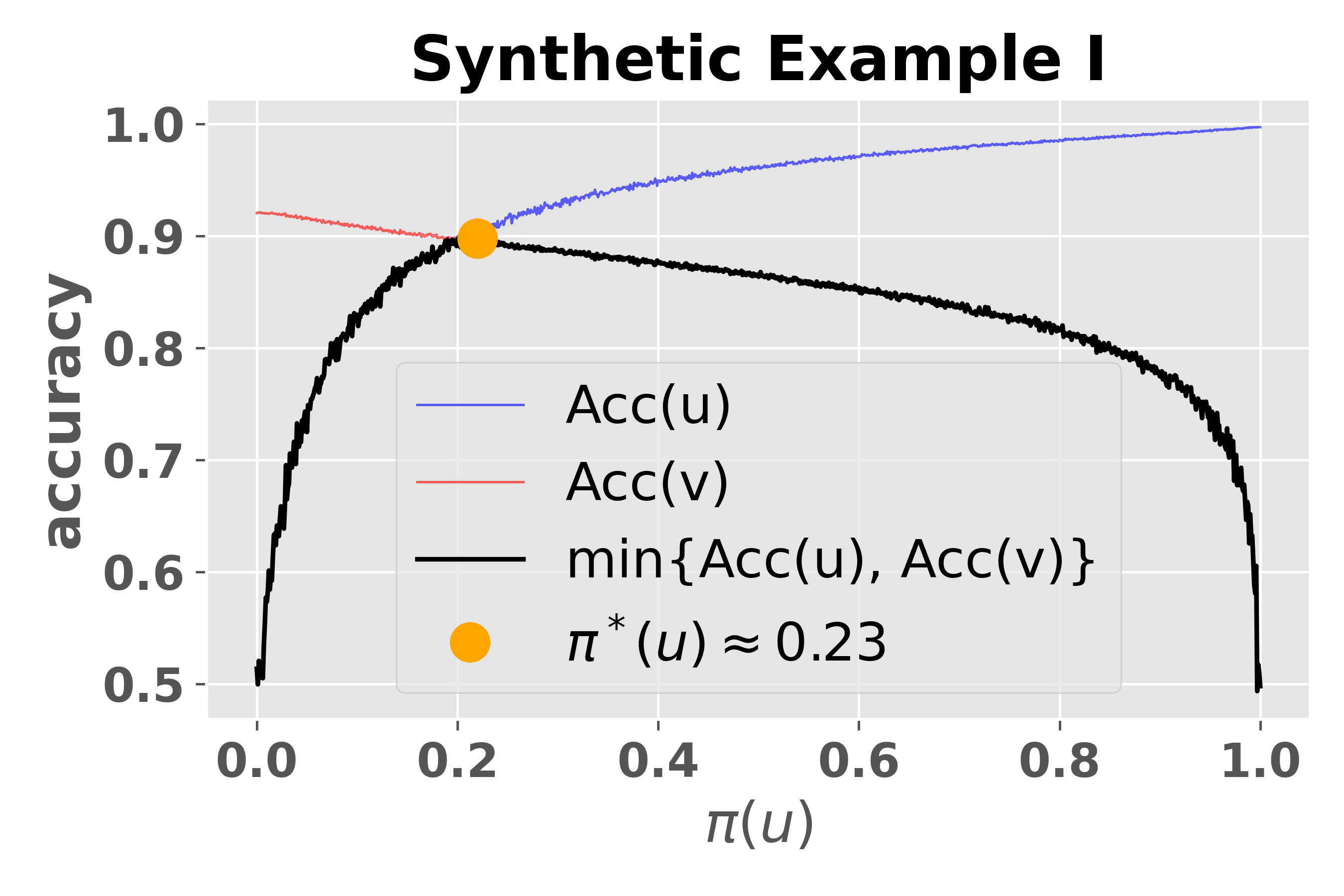}}
    \hfill   
    \subcaptionbox{
    \label{fig:synth2}}{\includegraphics[width=0.43\columnwidth]{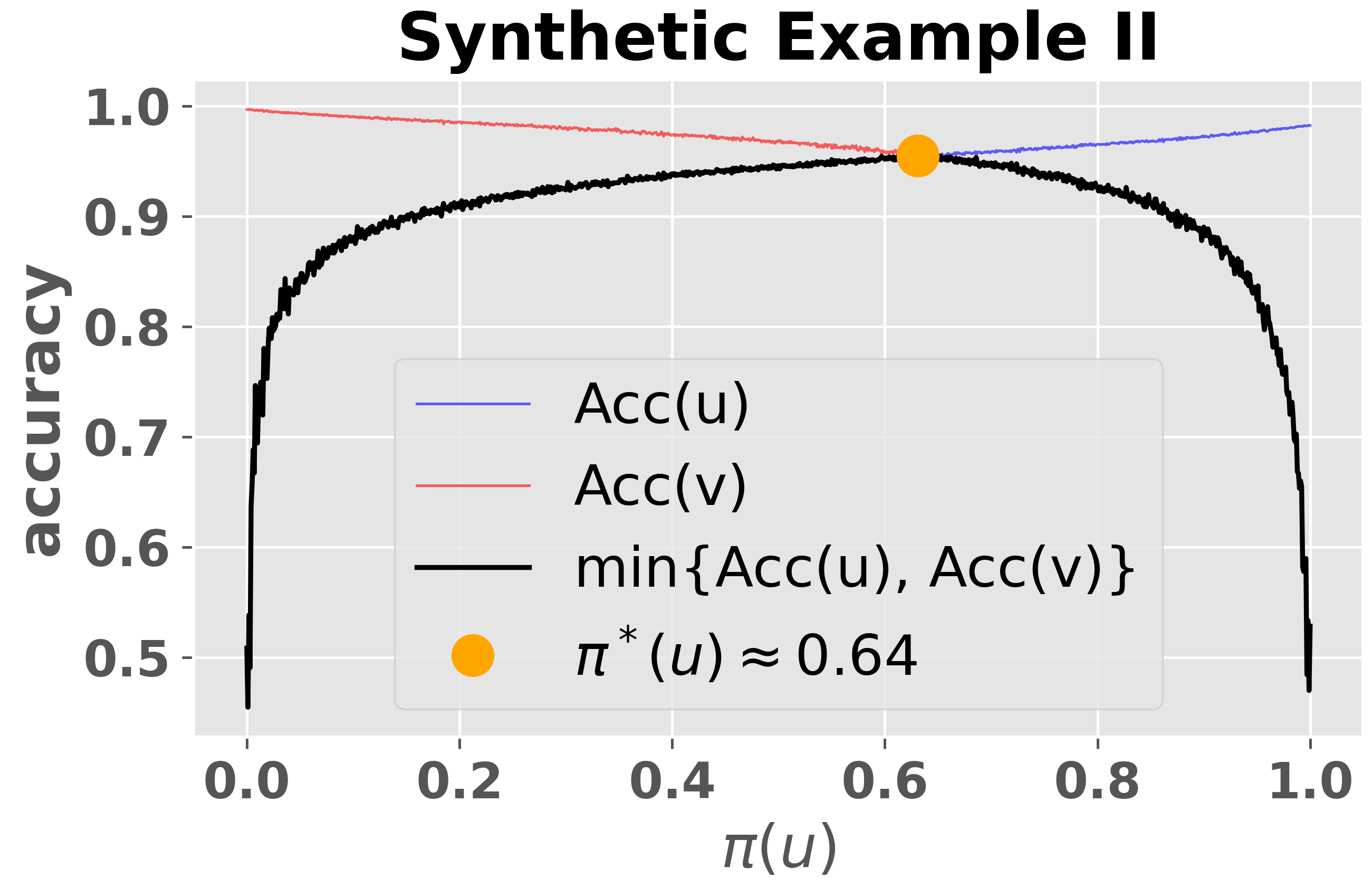}} \\ 
\vspace{-2.6em} 
\caption{The variation of the  minimum prediction accuracy over the two attributes $\mc{Z}=\{u, v\}$ with $\pi(u)$ for Instance I, on the left, and Instance II, on the right, of the \toymodel. }
\label{fig:motivating_example}
\vspace{-.25cm}
\end{figure}

The idea behind this class of models is that the `hardness' of the classification problem for a value of $z\in \mc Z$ depends on the distance between $\mu_{0z}$ and $\mu_{1z}$. The more separated these two mean vectors are, the easier it is to distinguish the labels. Thus, depending on this distance, it is expected that different protected attributes may require different fractions of samples in the training set to achieve comparable test accuracy. To illustrate this, we consider two instances of the \toymodel: {\bf I.} $\mu_{0u} = (-2,2)$, $\mu_{1u} = (2,-2)$, $\mu_{0v} = (-1,-1)$, and $\mu_{1v} = (1,1)$, and {\bf II.} $\mu_{0u} = (-1.5, 1.5)$, $\mu_{1u}\!=(1.5,-1.5)$, $\mu_{0v} = (-2,-2)$, and $\mu_{1v} = (2,2)$. 

For both model instances, we trained a logistic regression classifier over training sets with $1001$ equally spaced values of $\pi(u)$ in $[0,1]$. Figure~\ref{fig:motivating_example} shows the test accuracy of the learned classifier for both attributes~(blue and red curves) as well as their minimum~(black curve)  for different values of $\pi(u)$. Since the pair $(\mu_{0u}, \mu_{1u})$ is better separated than $(\mu_{0v}, \mu_{1v})$ in the first instance, it is easier to classify inputs with $Z=u$ than those with $Z=v$, and thus, it requires fewer training samples from $P_{z=u}$ than $P_{z=v}$ to achieve the same accuracy. This is reflected in Figure~\ref{fig:motivating_example} {\em (left)} that shows the best (min-max) performance is achieved at $\pi(u)\approx 0.23$. An opposite trend is observed in the second instance ({\em right} plot in Figure~\ref{fig:motivating_example}), where the mean vectors corresponding to the attribute $Z=v$ are better separated, and hence, require fewer training samples to achieve the same accuracy.


\section{Optimistic Adaptive Sampling Strategy}
\label{sec:algorithm}

\vspace{-1em}
In this section, we present our {\em optimistic adaptive sampling} algorithm $\Aopt$ for Problem~\ref{problem_def}.  
%
%
Algorithm~\ref{algo:algo1full} contains pseudo-code for $\Aopt$, the algorithm proceeds in the following phases:



\textbf{\textit{Phase~1}} In the initial phase $t \leq m = |\mc{Z}|$, we draw two independent samples from $P_z = P_{XY|Z=z}$, for each attribute $z\in\mc{Z}$, and add one to the training dataset $\mc{D}_t$ and the other one to $\mathcal{D}_z$ (Line~\ref{algoline:init_samples}). We use $\mc{D}_t$ to learn a common (over all $z \in \mc{Z}$) classifier $\hat{f}_t$ via \emph{empirical risk minimization}~(ERM). The independent datasets $\{\mathcal{D}_{z_i}\}_{i=1}^m$ are used for estimating the performance of $\hat{f}_t$ for each $z \in \mc{Z}$. 



\begin{algorithm}[t]
\SetAlgoLined
\SetKwInOut{Input}{Input}\SetKwInOut{Output}{Output}
\SetKwFunction{LocalPoly}{LocalPoly}
\SetKwFunction{Expand}{Expand}
\newcommand\mycommfont[1]{\ttfamily\textcolor{blue}{#1}}
\SetCommentSty{mycommfont}

\KwIn{$n$~(budget), $\F$~(function class), $\ell$~(loss function), $\xi \in (0,1)$~(forced exploration term)}
 \tbf{Initialize:}~ $\mc{D}_0 = \emptyset$; $e_{z,1} = +\infty$ and $\Dz = \emptyset$, for all $z \in \mc{Z}$;
 
\tcc{Draw two independent samples from each $z\in \mc{Z}$}
\For{$t=1,\ldots, m$}
 {
 $z \leftarrow z_t \in \mc{Z}$; \\
$\lp X_t^{(i)}, Y_t^{(i)}\rp_{i=1,2} \sim P_{z}$; $\quad \Dz = \{ (X_t^{(1)}, Y_t^{(1)}) \}$; $\quad \mc{D}_t \leftarrow \mc{D}_{t-1} \cup \{ (X_t^{(2)},Y_t^{(2)})\}$;
 \label{algoline:init_samples}}

$\pi_t \leftarrow \lp \frac{1}{m}, \ldots, \frac{1}{m}\rp$; $\quad \hat{f}_t \in \argmin_{f \in \mc{F}}\; \frac{1}{t} \sum_{(x,y) \in \mc{D}_t}\, \ell(f, x, y)$; $\quad N_{z,t} \leftarrow 1,\;\forall z \in \mc{Z}$; \\
\BlankLine
\For {$t = m+1,  \ldots, n$} 
{
\tcc{ choose the next distribution $P_{XY|Z=z}$ to sample}
\uIf{$\min_{z \in \mc{Z}}\; N_{z,t} < t^\xi$ \label{algoline:00}}
{
$z_t \in \argmin_{z \in \mc{Z}} \; N_{z,t}$ $\qquad$ \tcp{Forced exploration} \label{algoline:forced} 
}
\Else{
$z_t = \argmax_{z \in \mc{Z}}\; U_t(z, \hat{f}_t) $ \label{algoline:choose} \\}

\tcc{Collect data}
$\lp X_t^{(i)}, Y_t^{(i)}\rp_{i=1,2} \sim P_{z_t}$ \label{algoline:observe}

\BlankLine
\tcc{Perform the updates}
Update $\big(e_{z}(N_{z,t})\big)_{z \in \mc{Z}}$, \quad $\mc{D}_{t} \leftarrow \mc{D}_{t-1} \cup \{(X_t^{(1)}, Y_t^{(1)})\}, \quad \mc{D}_{(z_t)} \leftarrow \mc{D}_{(z_t)} \cup \{ (X_t^{(2)}, Y_t^{(2)} \}$. \label{algoline:update1} \\
Update $\pi_t \leftarrow \frac{t-1}{t}\pi_t + \frac{1}{t}\indi{z_t}$  \label{algoline:update2} 

\BlankLine
\tcc{Update the classifier $\hat{f}_t$}
$\hat{f}_t \in \argmin_{f \in\F} \;  \frac{1}{t} \sum_{(x,y) \in \mc{D}_t}\,\ell(f, x, y)$ 
\label{algoline:f_hat} \\

}

\KwOut{$\pi_n$}
 \caption{Optimistic Sampling for Fair Classification~($\Aopt$)}
 \label{algo:algo1full}
\end{algorithm}

\textbf{\textit{Phase~2}} In each round $t > m$, we choose an attribute $z_t$ according to the following \tbf{selection rule}: {\em if} there exists an attribute $z$ whose number of samples in $\mc{D}_t$, denoted by $N_{z_t,t}$, is fewer than $t^\xi$ for some input $\xi\in(0,1)$ (Line~\ref{algoline:00}), we set it as $z_t$ (Line~\ref{algoline:forced}), {\em else} we set $z_t$ as the attribute which has the largest upper confidence bound (UCB) (described below) for the risk of the classifier  $f_{\pi_t}$ (Line~\ref{algoline:choose}). Here $\pi_t$ denote the empirical mixture distribution (over the attributes $z$) of $\mc{D}_t$.



\textbf{\textit{Phase~3}} When the attribute $z_t$ is selected in {\em Phase~2}, the algorithm draws a pair of independent samples from $P_{z_t}$, adds one to $\mc{D}_t$ and the other to $\mc{D}_{z_t}$, and updates $N_{z_t,t}$, $\pi_t$, and the uniform deviation bound $e_{z_t}(N_{z_t,t})$ (described below) (Lines~\ref{algoline:observe} to~\ref{algoline:update2}). The updated dataset $\mc{D}_t$ is then used to learn a new candidate classifier $\hat{f}_t$ (Line~\ref{algoline:f_hat}). {\em Phases~2~and~3} are repeated until the sampling budget is exhausted, i.e.,~$t=n/2$ (note that we sample twice at each round).

\tbf{\textit{Calculating UCB.}} To construct the UCB, we introduce two additional assumptions:

\begin{assumption}
\label{assump:ideal_case3}
There exist positive constants $\epsilon_0, C >0$ such that for any function $f \in \mc{F}$ and any $\pi \in \Delta_m$, with $\pi(z)>0,\;\forall z \in \mc{Z}$, if we have $\mbb{E}_\pi \lb \loss \lp f, X, Y \rp \rb \leq \mbb{E}_\pi \lb \loss \lp f_\pi, X, Y \rp \rb + \epsilon$, for some $0 < \epsilon \leq \epsilon_0$, then $ \pi(z) \times \lvert \mbb{E}_z \lb \loss \lp f, X, Y \rp \rb - \mbb{E}_z \lb \loss \lp f_\pi, X, Y \rp \rb \rvert \leq 2C \epsilon,\;\forall z \in \mc{Z}$.
\end{assumption}
 
This assumption requires that whenever for a $f$ and a $\pi$, $\sum_{z} \pi(z)\big(\mbb{E}_z \lb \loss(f, X, Y) \rb - \mbb{E}_z \lb \loss(f_\pi, X, Y) \rb\big)$ is small, the \emph{absolute value} of the individual components of the sum, i.e., $\pi(z)\lvert \mbb{E}_z \lb \loss(f, X, Y) \rb - \mbb{E}_z \lb \loss(f_\pi, X, Y) \rb \rvert$, cannot be too large. Note that this does not exclude the chance that $\lvert \mbb{E}_z \lb \loss(f, X, Y) \rb - \mbb{E}_z \lb \loss(f_\pi, X, Y) \rb \rvert$ is large; for instance, when $\pi(z)$ is small for some $z$. A sufficient condition for this assumption to hold is that the mapping  $\pi \mapsto f_{\pi}$ is injective, and any other local optimizer is at least $\epsilon_0$ sub-optimal. 

\begin{assumption}
\label{assump:ideal_case4}
Let $\delta \in [0,1]$ be a confidence parameter.  For each $z\in \mc{Z}$, there exists a monotonically non-increasing sequence, $\{e_z(N,\mc{F},\delta):N \geq 1\}$, with $\lim_{N\to\infty}e_z(N,\mc{F},\delta)=0$, such that the following event holds with probability at least $1-\delta/2$:
\vspace{-0.1in}
\begin{small}
\begin{align*} 
\Omega = \bigcap_{z \in \mc{Z}} \bigcap_{N=1}^\infty \Big\{\sup_{f \in \mc{F}}\, \big\vert \widehat{L}_N(z, f) - L(z, f) \big\vert \leq e_{z}(N, \mc{F}, \delta)
\Big\},
\end{align*} 
\end{small}
\vspace{-0.1in}

where $\widehat{L}_N(z,f):=\frac{1}{N}\sum_{i=1}^N \ell\big(f,X_z^{(i)},Y_z^{(i)}\big),\ \forall z\in\mc{Z}$, with $\big(X_z^{(i)}, Y_z^{(i)}\big)_{i=1}^N$ being an i.i.d. sequence of input-label pairs from $P_z$. 
\end{assumption}

This assumption is a standard uniform convergence requirement which is satisfied by many commonly used families of classifiers, several examples are detailed below in Remark~\ref{remark:instantiate}.  In what follows, we will drop the $\mc{F}$ and $\delta$ dependence and refer to $e_z(N,\mc{F},\delta)$ as $e_z(N)$ for all $z\in\mc{Z}$ and $N\geq 1$.  Given an appropriate sequence of $e_z(N)$, we construct the UCB for the risk function $L(z, f_{\pi_t})$, defined by Equation~\eqref{eq:optimal_mixture}, with the following expression:
%
\begin{small}
\begin{equation}
U_t(z,\hat{f}_t) \coloneqq \frac{1}{|\mc{D}_z|} \sum_{(x,y) \in \mc{D}_z} \!\!\!\!\loss(\hat{f}_t, x, y) \;+\; e_{z}\lp N_{z,t}\rp
 +  \frac{2C}{\pi_t(z)}\sum_{z' \in \mc{Z}} \pi_t(z') e_{z'}(N_{z',t})\;.
    \label{eq:ucb}
\end{equation}
\end{small} 
\vspace{-0.1in}

Where $C$ is the constant described in Assumption~\ref{assump:ideal_case3}.  The first term on the RHS is the empirical loss of the learned classifier at round $t$ on attribute $Z=z$, so by Assumption~\ref{assump:ideal_case4} the first two terms of the RHS of Equation~\ref{eq:ucb} give a high probability upper bound on $L(z,\hat{f}_t)$, the expected loss of $\hat{f}_t$ conditioned on $Z=z$.  The third term then provides an upper bound on the difference $|L(z,f_{\pi_t})-L(z,\hat{f}_t)|$ and so   altogether this provides the desired UCB on $L(z,f_{\pi_t})$, the expected loss of the $\pi_t$ optimal classifier on attribute $Z=z$. The form of the third term is due to Assumption~\ref{assump:ideal_case3} and is discussed in detail in Appendix~\ref{proof:excess_risk1}.
\vspace{-.5em}
\subsection{Theoretical Analysis}
\label{subsec:theoretical}
\vspace{-.75em}
In this section, we derive an upper-bound on the excess risk $\mathcal{R}_n(\Aopt)$ of \algoref{algo:algo1full}. We also show that the performance achieved this algorithm cannot in general be improved, by obtaining an algorithm independent lower-bound on the excess risk for a particular class of problems.

\tbf{Upper Bound.} We begin by obtaining an upper bound on the convergence rate of the excess risk of $\Aopt$, the proof of this result is in Appendix~\ref{proof:excess_risk1}.

\begin{theorem}
\label{theorem:regret_simple}
Let Assumptions~\ref{assump:ideal_case}-3~ hold and define $\pi_{\min} \coloneqq \min_{z \in\mc{Z}}\pi^*(z)$. Fix any $A$ such that $\pi_{\min}/2 \leq A < \pi_{\min}$. Suppose the query budget $n$ is sufficiently large, as defined in Equation~\ref{eq:ni} in Appendix~\ref{proof:excess_risk1}. Then, with probability $1-\delta$, the excess risk of \algoref{algo:algo1full} can be upper-bounded as 
%
\begin{equation}
\label{eq:excess-risk-Aopt}
\mathcal{R}_n(\Aopt) = \max_{z \in \mc{Z}} \; L(z, f_{\pi_n}) - M^* = \mc{O} \Big(\frac{ |\mc{Z}|C}{\pi_{\min}} \;\max_{z \in \mc{Z}} \; e_{z }(N_A)\Big),
\end{equation}
%
where $M^* = L(z, f_{\pi^*}),\;\forall z \in \mc{Z}$ (see Proposition~\ref{prop:sufficient}) and $N_A = n \pi^2_{\min}/(2\pi_{\min}-A)$. 
\end{theorem}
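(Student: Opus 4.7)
The plan is to condition throughout on the good event $\Omega$ of \assumpref{assump:uniform_deviation}, which has probability at least $1-\delta/2$; on this event, all deviations $|\widehat{L}_N(z,f) - L(z,f)|$ are bounded by $e_z(N)$ simultaneously over $f \in \mc{F}$ and $N \geq 1$, so the remainder of the argument is deterministic.

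The first step is to verify that $U_t(z, \hat{f}_t)$ defined in \eqref{eq:ucb} is genuinely an upper confidence bound on $L(z, f_{\pi_t})$. I would split
\[
L(z, f_{\pi_t}) - \widehat{L}_{N_{z,t}}(z,\hat{f}_t) = \bigl[L(z, f_{\pi_t}) - L(z, \hat{f}_t)\bigr] + \bigl[L(z, \hat{f}_t) - \widehat{L}_{N_{z,t}}(z, \hat{f}_t)\bigr].
\]
The second bracket is bounded by $e_z(N_{z,t})$ directly from \assumpref{assump:uniform_deviation}. For the first, since $\hat{f}_t$ is an ERM over $\mc{D}_t$ (whose empirical mixture is $\pi_t$), applying the uniform deviation term by term yields $\mbb{E}_{\pi_t}[\loss(\hat{f}_t,X,Y)] \leq \mbb{E}_{\pi_t}[\loss(f_{\pi_t},X,Y)] + 2\sum_{z'}\pi_t(z')e_{z'}(N_{z',t})$. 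The budget lower bound in Eq.~\ref{eq:ni} keeps this aggregate excess below the $\epsilon_0$ of \assumpref{assump:ideal_case3}, so that assumption upgrades aggregate closeness to the per-attribute bound $\pi_t(z)\,|L(z,\hat{f}_t) - L(z,f_{\pi_t})| \leq 2C\sum_{z'}\pi_t(z')e_{z'}(N_{z',t})$, which matches the third term of $U_t(z,\hat{f}_t)$ after dividing by $\pi_t(z)$.

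With the UCB inequality in hand, the standard optimism argument gives, at every round $t > m$ in which Line~\ref{algoline:choose1} is executed, $\max_z L(z, f_{\pi_t}) \leq U_t(z_t, \hat{f}_t)$ where $z_t$ is the chosen attribute. The forced-exploration Line~\ref{algoline:forced1} guarantees $N_{z,t} \geq \sqrt{t}$ for every $z$, which both keeps all $\pi_t(z)$ strictly positive and controls every confidence width via the monotonicity of $e_z$. Evaluated at $t = n/2$, this produces an excess risk bound of the form $\mathcal{O}\bigl(|\mc{Z}|C\max_z e_z(N_{z,n})/\pi_{\min}\bigr)$ as soon as a uniform lower bound $N_{z,n} \geq N_A$ on the pull counts can be shown.

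The main obstacle is precisely this tracking step, because the target proportions $\pi^*(z)$ are only defined implicitly via the loss-equalizing property of \propref{prop:sufficient} (see also Remark~\ref{remark:alb}). I would argue by contradiction, leveraging the monotonicity clause of \assumpref{assump:ideal_case}: if some attribute $z^\dagger$ ended the run with $N_{z^\dagger,n} < N_A$, then across a sufficiently long window of rounds $\pi_t(z^\dagger)$ must have stayed strictly below $\pi^*(z^\dagger)$, which by \assumpref{assump:ideal_case} forces $L(z^\dagger, f_{\pi_t})$ to remain above $M^*$ by a strictly positive gap. Once $t$ is large enough (as quantified by Eq.~\ref{eq:ni}), the widths of every $U_t(z,\hat{f}_t)$ are smaller than this gap, so the optimistic selection rule Line~\ref{algoline:choose1} would repeatedly pick $z^\dagger$, contradicting the undersampling assumption. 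Combining the resulting $N_{z,n} \geq N_A$ with the UCB inequality of the previous paragraph and the monotonicity of $e_z$ completes the proof.
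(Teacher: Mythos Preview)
Your setup of the UCB property (first two paragraphs) is correct and matches the paper. The gap is in the third paragraph: the optimism inequality $\max_z L(z, f_{\pi_t}) \leq U_t(z_t, \hat{f}_t)$ does \emph{not} yield an excess-risk bound even once you know $N_{z,n} \geq N_A$ for every $z$. Unrolling $U_t(z_t,\hat{f}_t)$ gives at best
\[
\max_z L(z, f_{\pi_t}) \;\leq\; L(z_t, f_{\pi_t}) + 2e_{z_t}(N_{z_t,t}) + \tfrac{4C}{\pi_t(z_t)}\rho_t,
\]
which is vacuous unless you already know $L(z_t, f_{\pi_t}) \leq M^*$ (or nearly so). A uniform lower bound on pull counts only lower-bounds each $\pi_n(z)$; it neither upper-bounds any $\pi_n(z)$ nor forces $\pi_n$ close to $\pi^*$, and \assumpref{assump:ideal_case} supplies no quantitative modulus of continuity for $\pi \mapsto L(z, f_\pi)$ that you could appeal to.

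The paper closes this gap by a different mechanism. Rather than tracking, it works \emph{backward} through the run in at most $|\mc{Z}|$ phases. It first sets $\Zover = \{z : \pi_n(z) > \pi^*(z)\}$ and lets $t_0$ be the last round at which some $z_0 \in \Zover$ was selected. Then $\pi_{t_0}(z_0) \geq \pi_n(z_0) > \pi^*(z_0)$, so the monotonicity clause of \assumpref{assump:ideal_case} gives $L(z_0, f_{\pi_{t_0}}) < M^*$ directly---this is how the anchor to $M^*$ is obtained without any quantitative continuity. Combining with the UCB selection rule yields $\max_z L(z, f_{\pi_{t_0}}) \leq M^* + B_0$ (Lemma~\ref{lemma:over_represented}). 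The argument is then iterated on the remaining attributes (Lemma~\ref{lemma:increase} guarantees each phase is nonempty), each step using $\pi_{t_i}(z_i) > \pi_{t_{i-1}}(z_i)$ and monotonicity to compare $L(z_i, f_{\pi_{t_i}})$ against the previous phase's bound rather than against $M^*$; Lemmas~\ref{lemma:one_step}--\ref{lemma:final_step} sum the resulting $B_i$'s. Your contradiction-based tracking sketch has a secondary issue as well: even if $L(z^\dagger, f_{\pi_t}) > M^* + \eta$, other under-sampled attributes can have equally large true losses and equally inflated UCBs (through the $1/\pi_t(z)$ factor), so you cannot conclude that $z^\dagger$ specifically is the one repeatedly selected.
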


\begin{remark}
\label{remark:instantiate}
 The uniform deviation bounds, $e_z(N)$, defined in Assumption~\ref{assump:ideal_case4} and the bound on the excess risk of \algoref{algo:algo1full} (see Equation~\ref{eq:excess-risk-Aopt}) can be instantiated for several commonly used classifiers (function classes $\mathcal F$) to obtain an explicit convergence rate in $n$. If $\mc{F}$ has a finite VC-dimension, $d_{VC}$, then a suitable deviation bound is $e_z(N) = {\small 2 \sqrt{ \lp 2d_{VC} \log{\lp \nicefrac{2eN}{d_{VC}}\rp}+2\log{\lp \nicefrac{2N^2\pi^2|\mc{Z}|}{3\delta} \rp} \rp / N}}$. And if $\mc{F}$ has Rademacher complexity $\mathfrak{R}_n$, we can choose $e_z(N) = {\small 2\mathfrak{R}_N + \sqrt{\log{\lp \nicefrac{N^2\pi^2|\mc{Z}}{3\delta}\rp / N }}}$.  Furthermore, with these uniform deviation bounds and if $\mathfrak{R}_n$ is  $\mc{O}\lp n^{-\alpha} \rp$, for some $\alpha>0$, then we obtain excess risk bounds of   
\begin{small}
\begin{equation*}
\label{eq:excess-risk-instantiation}
\mathcal{R}_n(\Aopt) = \mc{O} \Big(\frac{ |\mc{Z}|C}{\pi_{\min}^{1.5}} \; \sqrt{\frac{d_{VC}}{n}}\Big), \qquad 
\mathcal{R}_n(\Aopt) = \mc{O} \Big(\frac{ |\mc{Z}|C}{\pi_{\min}^{1+\alpha}n^\alpha} \Big),
\end{equation*}
\end{small}
 
 for the VC-dimension and Rademacher cases, respectively.  These conditions on $\mathcal F$ are satisfied by several commonly used classifiers, such as linear, SVM, and multi-layer perceptron. 
\end{remark}

\begin{remark}[Analysis of \epsg]
\label{remark:eps-greedy}
\citet{abernethy2020adaptive} only analyze the greedy version (i.e.,~$\epsilon=0$) of their \epsg sampling strategy with $|\mathcal Z|=2$, and show that at time $n$, either the excess risk is of $\mc{O} \big(\max_{z \in \mc{Z}} \sqrt{ \nicefrac{ 2 d_{VC}\lp \loss \circ \mc{F} \rp \log (2/\delta)}{N_{z,n}}}\big)$, or the algorithm draws a sample from the attribute with the largest loss. However, due to the greedy nature of the algorithm they analyzed, there are no guarantees that $N_{z,n} = \Omega(n)$, and thus, in the worst case the above excess risk bound is $\mc{O}(1)$. We show in Appendix~\ref{appendix:epsilon_greedy} how the techniques we developed for the analysis of \algoref{algo:algo1full} can be suitably employed to study their \epsg strategy. In particular, we obtain sufficient conditions on $\epsilon$ under which the excess risk of \epsg converges to zero, and the rate at which this convergence occurs.
\end{remark}

{\bf Lower Bound.} Let $\mc{Q}=(\bm{\mu}, \mc{F}, \loss_{01})$ denote the class of problems where $\bm{\mu} \in \mc{M}$ is an instance of the \texttt{SyntheticModelI} described in Section~\ref{subsec:motivating_example}, $\mc{F}$ is the class of linear classifiers in two dimensions, and $\loss_{01}$ is the $0-1$ loss. For this function class $\mc{F}$, $e_{z}(N) = \mc{O} ( \sqrt{\log(N)/N})$ for $z \in \mc{Z} = \{u, v\}$, which implies that the excess risk achieved by both $\Aopt$ and \epsg strategies is of $\mc{O}\lp \sqrt{\log(n)/n} \rp$. We prove in Appendix~\ref{appendix:proof_lower} that this convergence rate (in terms of $n$) cannot in general be improved by showing that $\max_{Q \in \mc{Q}}\; \mbb{E}_Q\lb \mc{R}_n \lp \mc{A} \rp  \rb = \Omega \lp 1/\sqrt{n} \rp$.

\vspace{-1em}
 


\section{Heuristic Extensions}
\label{sec:heuristics}
\vspace{-1em}
We show in Appendix~\ref{appendix:parameterC} that if the uniform deviation bounds, $e_z(N)$, can be chosen to decrease to 0 sufficiently quickly as $N$ increases, then we can omit the $C$ dependent third term in Equation~\eqref{eq:ucb} and still attain the same regret bounds given in Theorem~\ref{theorem:regret_simple}. The resulting two-term UCB is then only the high probability upper bound on $L(z,\hat{f}_t)$.  We will use this UCB as the basis for several practical modifications to our optimistic adaptive sampling strategy.  First we note that in UCB based algorithms the confidence bounds necessary to attain theoretical results often do not produce optimal empirical results.  So, following standard practice, we introduce a hyperparameter, $c_0$ in Equation~\eqref{eq:UCB} below, which we can tune to optimize the exploration/exploitation trade-off.  

Practical applications of our strategy run into two further challenges.  As described in Section~\ref{sec:algorithm}, $\Aopt$ requires re-training the classifier in every iteration. While this can be implemented in small problems, it becomes infeasible for problems involving large models, such as CNNs. Also, as noted in Section~\ref{subsec:prop-fair-mixture}, for data where one attribute is significantly easier than the other, Assumption 2 may not hold. In this case it may not be beneficial to continue to sample from the attribute with the largest loss.  To address these issues we present a heuristic variant of \algoref{algo:algo1full}. 

For the first challenge we make the following modifications to Algorithm~\ref{algo:algo1full}.  \tbf{1)} We expand Phase 1 to $n_0$ rounds, \tbf{2)} at each subsequent round we draw two batches of size $b_0$ from the chosen attribute, \tbf{3)} instead of re-training from scratch each round, we update the previous model, $\hat{f}_{t-1}$, and \tbf{4)} instead of training to convergence, we perform one gradient step over the entire accumulated training set, $\mc{D}_t$.  

To address the second challenge we modify the UCB by adding a term based on the Mann-Kendall statistic~\citep{mann45}~\citep{kendall48} which, for time series data $(X_i)_{i=1}^n$, is given by {\small $ S = \sum_{i=1}^{n-1} \sum_{j=i+1}^n \operatorname{sgn}{(X_j-X_i)}$ }. This is designed to identify monotonic upwards or downward trends in time series.  We calculate the statistic for each attribute, and denote them $S_{z_t}$, for the accuracy of the classifier, $\hat{f}_t$, on the attribute's validation set, $\mc{D}_{z_t}$, over time.  Intuitively, this is tracking whether training on additional samples from an attribute is improving the classifier's accuracy on that attribute. We incorporate the statistic into the UCB as follows:

    
    
   

\vspace{-1em}
\begin{small}
\begin{equation}
\label{eq:UCB}
\widetilde{U}_t(z, \hat{f}_t) \defined \frac{1}{|\mathcal{D}_z|}\sum_{(x,y) \in \mathcal{D}_z} \!\!\!\!\loss(\hat{f}_t, x, y)\;+\, \frac{c_0}{\sqrt{N_{z,t}}}\;+\, c_1 \frac{S_{z_t}}{\sqrt{\operatorname{var(S_{z_t})}}}, 
\end{equation}
\end{small}
to incentivize the algorithm to sample attributes on which accuracy is increasing. $c_1$ is a free parameter controlling the importance of per-attribute loss trends. We note that the general ability to modify the UCB in this manner is a strength of our algorithm, as it allows for a great deal of interpretability and adaptability in practical applications.  \\
The parameters $c_0$, $c_1$, $n_0$, and $b_0$ are chosen based on the specific problem and available computational resources. We describe our selection of these terms Appendix~\ref{appendix:experiments}.  In the experiments we will refer to the variant of this algorithm with $c_1=0$ as $\Aopt$, and the variant with $c_1>0$ as $\Aopt +$.
\vspace{-1em} 

\section{Empirical Results}
\label{sec:empirical}
\vspace{-1em}

\begin{figure}[t]
\captionsetup[subfigure]{labelformat=empty}
     \centering
    \subcaptionbox{
    \label{fig:toyexpt1}}{\includegraphics[width=0.4\columnwidth]{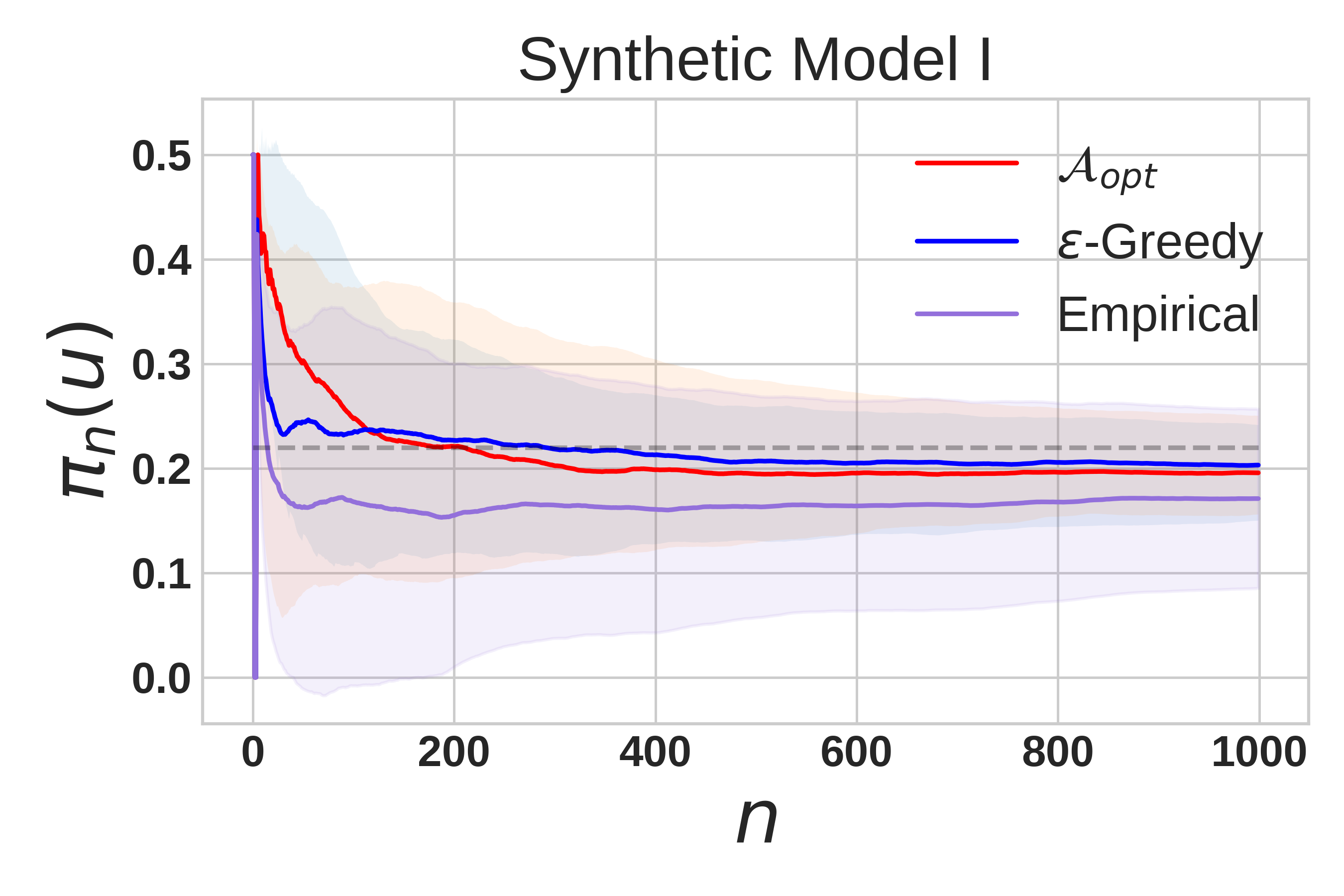}}
    \hfill   
\subcaptionbox{
\label{fig:toyexpt2}}{\includegraphics[width=0.4\columnwidth]{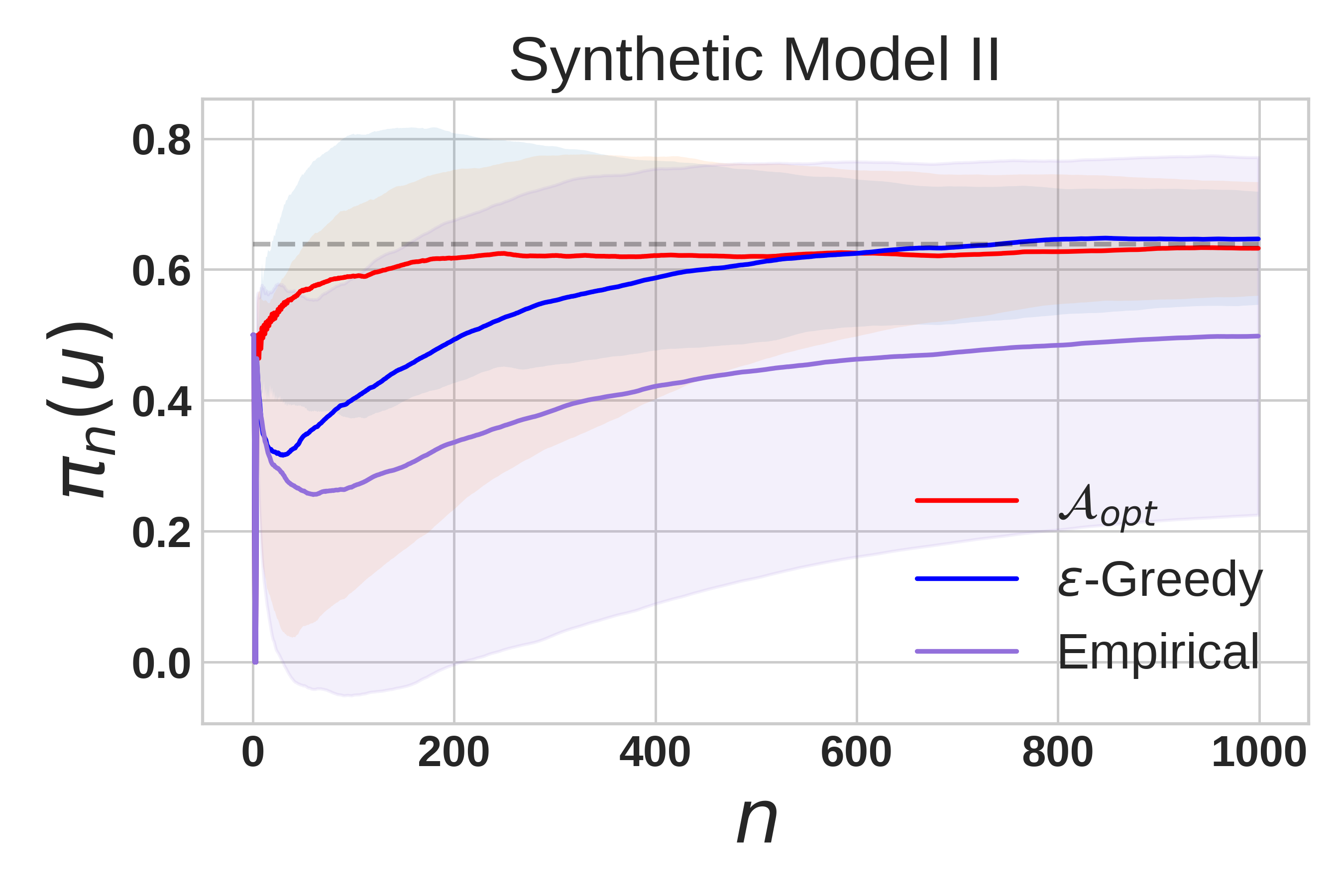}} \\ 
\vspace{-2.5em} 
\caption{The figure shows the convergence of $\pi_n(u)$ for the three algorithms $\Aopt$, \epsg~(with $\epsilon=0.1$) and \empirical (i.e.,~\epsg with $\epsilon=0$), to the optimal value $\pi^*(u)$ for the two instances of the $\toymodel$ introduced in Section~\ref{subsec:motivating_example}. 
}
\label{fig:experiment0}
\vspace{-1.2em}
\end{figure}

\begin{figure}[b]
     \centering
    \includegraphics[width=0.45\columnwidth]{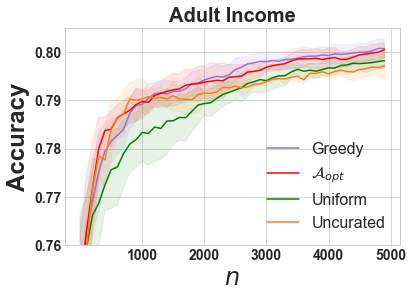}
\caption{Minimum test accuracy over all attributes for {\em Adult}  dataset as a function of the sampling budget $n$, averaged over 10 trials for {\em Adult}}
\label{fig:LR_datasets}
\end{figure}

We evaluate the performance of our proposed active sampling algorithm $\Aopt$ on both synthetic and real datasets, and compare it with the following baselines: \tbf{1)} \epsg scheme of~\citet{abernethy2020adaptive}, \tbf{2)} \ttt{Greedy} scheme, which is equivalent to \epsg with $\epsilon=0$, \tbf{3)} \ttt{Uniform}, where an equal number of samples are drawn for each attribute, and \tbf{4)} \ttt{Uncurated}, where samples are drawn according to the natural distribution of the dataset. We note that, for some datasets, the natural distribution is uniform, for those we omit the results for the \ttt{Uncurated} scheme. 

We will make the code for these experiments available in the supplemental materials. All experiments were run for multiple trials--the number of which is indicated in the results--and we report the average over trials with shaded regions on plots indicating $\pm 1$ standard deviation. Experiments for image datasets were run on a single GPU provided by Google Colab or other shared computing resources.    



\textbf{Synthetic Dataset.} In this experiment, we compare how the $\pi_n(u)$ returned by the different algorithms converge to $\pi^*(u)$ for the two synthetic models introduced in Section~\ref{subsec:motivating_example} with  $\mc{F}$ chosen as the family of logistic regression (LR) classifiers. Since the feature space is two dimensional, we use the version of $\Aopt$, \epsg, and \empirical schemes in which we train (from scratch) the classifier in each round. For $\Aopt$, we use UCB given in~\ref{eq:UCB} with $c_0 = 0.1$ and for \epsg, we use $\epsilon=0.1$. These values of $c_0$ and $\epsilon$ are selected via a grid search. \\
Figure~\ref{fig:experiment0} shows how $\pi_n(u)$ changes with $n$ for the three algorithms averaged over $100$ trials. As expected, the algorithms with an exploration component (i.e.,~$\Aopt$ and \ttt{$\epsilon$-greedy}) eventually converge to the optimal $\pi^*(u)$ value in both cases, whereas the \ttt{Empirical} scheme that acts greedily often gets stuck with a wrong mixture distribution, resulting in high variability in its performance.



\tbf{ {\em Adult} Dataset. }
For the remaining experiments we find that, for properly tuned values of $\epsilon$ and $c_0$, both $\Aopt$ and \epsg attain comparable minimum test error.  So we omit the \epsg results for the purpose of clarity, see Appendix~\ref{appendix:epsilon_greedy} for a detailed comparison of the two algorithms. \\
We now analyze the performance of the remaining algorithms on a dataset from the UCI ML Repository~\citep{Dua19} that is commonly used in the fairness literature: the {\em Adult} dataset.  It is a low dimensional problem, so we use LR classifiers and the exact version of each sampling algorithm, where the optimal classifier is computed with each new sample.  \\
We set $\mc{Z}$ to be white men, non-white men, white women, and non-white women.  The minimum test accuracy, over all attributes, at each iteration is displayed in Figure~\ref{fig:LR_datasets}.  Each sampling scheme approaches $80.5\%$ accuracy as sample size grows, this matches the results achieved under the LR algorithms reported in Table 4a) in ~\citet{martinez2020minimax}.  This is approximately the maximum accuracy an LR classifier can achieve on the white male sub-population in isolation and so additional sampling, or other fairness algorithms, cannot improve on this performance in a minimax sense.  We do note, however, that the adaptive algorithms hold a sizable advantage over \texttt{Uniform} at small sample sizes. 


\begin{figure}[t]
     \centering
    \includegraphics[width=0.45\columnwidth]{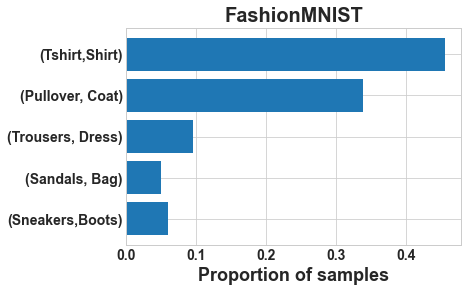}
    \caption{Mixture distribution learned by $\Aopt$ over 500 training rounds on {\em FashionMNIST}}
\label{fig:fmnistmix}
\end{figure}

\tbf{Image Datasets.} We also compare the performance of the different sampling schemes on larger scale problems and more complex hypothesis classes.  To this end we use three image datasets: {\em UTKFace}~\citep{zhifei17}, {\em FashionMNIST}~\citep{Xiao18}, and {\em Cifar10}~\citep{Krizhevsky09} with CNNs. 
We use the heuristic variants of the $\Aopt$, with $c_1=0$, and \ttt{Greedy} algorithms with batch sizes of 50. The CNN architecture, data transforms, and further algorithm parameters are detailed in Appendix~\ref{appendix:experiments}.

\begin{figure}[b]
\captionsetup[subfigure]{labelformat=empty}
     \centering
    \subcaptionbox{
    \label{fig:fashionmnist}}{\includegraphics[width=0.43\columnwidth]{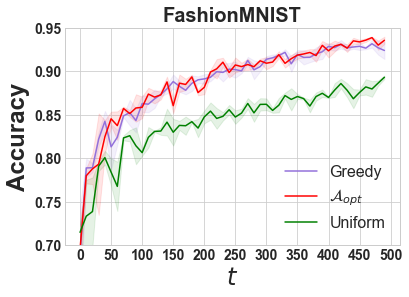}}
    \hfill   
    \subcaptionbox{
    \label{fig:face}}{\includegraphics[width=0.43\columnwidth]{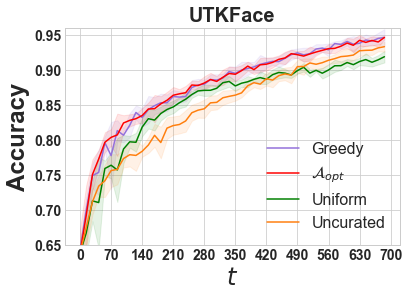}} \\ 
\vspace{-2.5em} 
\caption{Minimum test accuracy, over all attributes, for both {\em FashionMNIST} and {\em UTKFace} as a function of the time step $t$, averaged over 10 trials. }
\label{fig:real_datasets}
\vspace{-.5cm}
\end{figure}

The {\em UTKFace} dataset consists of face images annotated with age, gender, and ethnicity. We choose $Y = \{\ttt{Male,Female}\}$ and set $\mc{Z}$ to the five ethnicities. The minimum test accuracy, over all attributes, is shown in Figure~\ref{fig:face} and demonstrates a clear separation between the adaptive algorithms and both \ttt{Uniform} and \ttt{Uncurated} sampling.  \ttt{Uncurated} does particularly poorly in the low sample regime here, in contrast with the {\em Adult} dataset, where \ttt{Uncurated} performed comparably to the adaptive algorithms.  This is because, for {\em Adult} dataset, the lowest accuracy attribute is over-represented in the dataset with white men at $63\%$ of all samples.  Whereas, for {\em UTKFace}, the accuracy was lowest for Asian people, who are under-represented in the available data at only $14\%$ of all samples.

The other two datasets, {\em FashionMNIST} and {\em Cifar10}, were chosen to provide a controlled setting to demonstrate the existence of hard and easy attributes on real-world data. For both, we divide the labels into 5 pairs and assign each an "attribute".  The pairs were chosen according to existing confusion matrices to have pairs that are both easy and hard to distinguish from each other, see Appendix~\ref{appendix:experiments} for more details.  

For {\em FashionMNIST}, each pair in Figure~\ref{fig:fmnistmix} is one attribute and within each pair one item is assigned $Y=0$ and the other $Y=1$.  Then a single binary CNN classifier is learned simultaneously over all 5 pairs.  Figure~\ref{fig:fmnistmix} shows the mixture distribution generated by the $\Aopt$ sampling scheme, which allocated the vast majority of samples to the (Tshirt, Shirt) and (Pullover, Coat) pairs.  This makes intuitive sense, since both pairs of items are qualitatively very similar to each other, and aligns with common confusion matrices which indicate that those items are frequently misclassified as each other by standard classifiers.  Figure~\ref{fig:fashionmnist} displays the worst case accuracy for each  scheme on {\em FashionMNIST} as a function of time step and shows that both adaptive algorithms outperform \ttt{Uniform} sampling throughout the training process.  Finally, Figure~\ref{fig:fmnistattributes} shows the test accuracy on each attribute for both $\Aopt$ and \ttt{Uniform} sampling schemes.  $\Aopt$ maintains a much smaller spread between the accuracy over different attributes.  This equity is particularly desirable from a fairness perspective as it ensures no attribute will have a large advantage over any other, in the sense of expected accuracy.  It also illustrates our assumption that the optimal distribution will tend to equalize the losses across attributes. 

\begin{figure}[t]
\captionsetup[subfigure]{labelformat=empty}
     \centering
    \subcaptionbox{
    \label{fig:fmnistopt}}{\includegraphics[width=0.43\columnwidth]{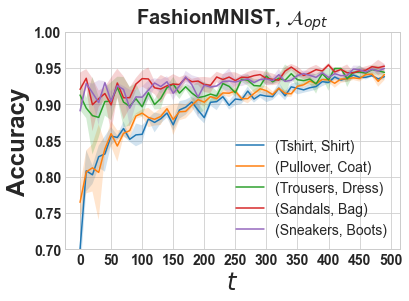}}
    \hfill   
    \subcaptionbox{
    \label{fig:fmnistunif}}{\includegraphics[width=0.43\columnwidth]{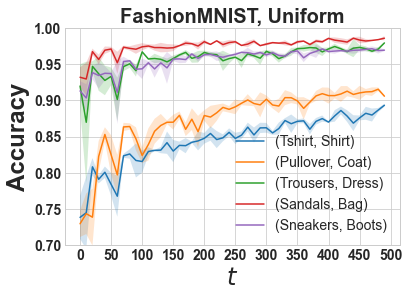}} \\ 
\vspace{-2.6em} 
\caption{Test accuracy for each attribute in {\em FashionMNIST} as a function of the time step, $t$, for both $\Aopt$ and \ttt{Uniform} sampling schemes, averaged over 10 trials. }
\label{fig:fmnistattributes}
\vspace{-.25cm}
\end{figure}

Final accuracy for all experiments, per-attribute accuracy for {\em UTKFace}, all results for {\em CIFAR10}, and details of the {\em Adult} dataset are included in Appendix~\ref{appendix:experiments}, along with results for the {\em German} dataset, another UCI ML Repository dataset.

         
         
         
         
         


\tbf{Empirical results when Assumption 2 is violated.} 
\begin{figure}[t]
     \centering
    \includegraphics[width=0.45\columnwidth]{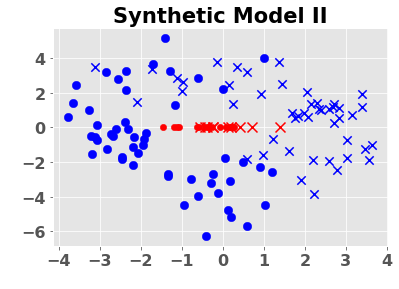}
    \caption{An instance of \toymodelii with attribute $Z=u$ shown in red and $Z=v$ shown in blue.}
    \label{fig:trenddata}
\end{figure}

\begin{figure}[b]
\vspace{-.5cm}
\captionsetup[subfigure]{labelformat=empty}
     \centering
    \subcaptionbox{
    \label{fig:trendacc0}}{\includegraphics[width=0.45\columnwidth]{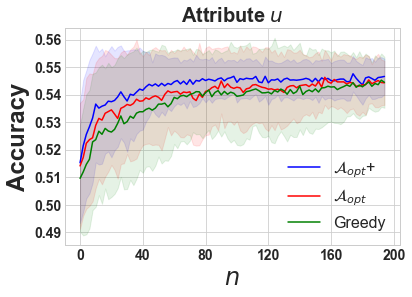}}
    \hfill   
\subcaptionbox{
\label{fig:trendacc1}}{\includegraphics[width=0.45\columnwidth]{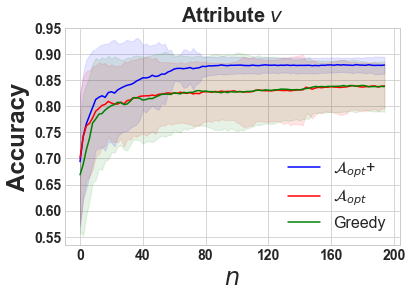}} \\ 
\vspace{-2.75em} 
\caption{Test accuracy as a function of sampling budget for both attributes from the dataset shown in Fig. \ref{fig:trenddata}, averaged over 100 trials.  
}
\label{fig:trendexperiment}
\vspace{-.5cm}
\end{figure}

Finally, we consider the case where our assumption of a unique, risk equalizing mixture distribution may not hold.  The practical effects of this situation can be seen in the {\em Adult} dataset where additional samples of the white male attribute show diminishing returns, while other attributes can attain higher accuracy given more samples.  In such a scenario, each adaptive algorithm will continue to select samples mainly from the worst case group despite this granting little improvement in performance.  

To evaluate this scenario we introduce a second synthetic data model: an instance of {\toymodelii} is illustrated in Figure~\ref{fig:trenddata} and specified in detail in Definition~\ref{def:toymodelii} in  Appendix~\ref{appendix:experiments}.  In this model we create one attribute, shown in red, which allows a relatively small maximal accuracy that can be attained with few samples.  In contrast, the blue attribute can be classified with high accuracy given few sample, as the bulk of its mass is in the central, separable clusters shown in the figure. But classification of this attribute also benefits from many additional samples as the sparser regions in the top left and bottom right are explored.  In this setting, constantly sampling from the attribute with the lowest empirical accuracy is inadvisable as additional samples of the red attribute cannot increase its accuracy, while additional samples of the blue attribute can increase its accuracy.  

We compared the performance of $\Aopt +$, the heuristic variation of our algorithm with a trend-based statistic included, $\Aopt$, and \texttt{Greedy} on {\toymodelii}.  Figure~\ref{fig:trendexperiment} shows the results, each of the three algorithms achieves similar minimax accuracy and performs worst on attribute $Z=u$, shown in red in Figure~\ref{fig:trenddata}.  But $\Aopt +$ achieves $4$ points higher average accuracy than both \texttt{Greedy} and $\Aopt$ with the original UCB on the other attribute, $Z=v$.  This demonstrates that the additional term in the $\Aopt +$ UCB is effective at recognizing when the hardest group is not benefiting from additional training samples and redistributing them more effectively.  


\section{Conclusion}
\label{sec:conclusion}
We considered the problem of actively constructing a training set in order to learn a classifier that achieves minimax fairness in terms of predictive loss. We proposed a new strategy for this problem~($\Aopt$) and obtained theoretical guarantees on its performance. 
We then showed that the theoretical performance achieved by $\Aopt$ and \epsg~\citep{abernethy2020adaptive} cannot be improved in general, by obtaining algorithm independent lower-bounds for the problem. 

Our experiments demonstrated that adaptive sampling schemes can achieve superior minimax risk [Fig.~\ref{fig:LR_datasets}, Fig.~\ref{fig:real_datasets}] and smaller disparity between per-attribute risks [Fig.~\ref{fig:fmnistattributes}] compared to both uniform and uncurated schemes.  The results in Fig.~\ref{fig:experiment0} show the necessity of exploration, as purely greedy algorithms can converge to sub-optimal mixture distributions.  And finally Fig.~\ref{fig:trendexperiment} shows the versatility of our general UCB-based strategy in its ability to readily incorporate new terms to accommodate the practical challenges posed by real-world problems.

\vspace{-0.5em}
Our theoretical results rely on the existence of a unique risk equalizing mixture distribution $\pi^*$, a condition that may not always hold. Thus, an important future work is to relax this assumption, and to design and analyze algorithms that identify pareto optimal mixture distributions that achieve minimax fairness. Another important direction is to derive and analyze active sampling strategies for other fairness measures. 


\newpage
\bibliographystyle{apalike}
\bibliography{ref}


\newpage


\newpage
\onecolumn 
\begin{appendix}
\newpage 

\section{Proof of Proposition~\ref{prop:sufficient}}
\label{proof:proposition_sufficient}
\begin{proof}
First we note that there must exist at least one optimum mixture distribution $\pi^*$ since by the \assumpref{assump:ideal_case}, the objective function is continuous and the domain of optimization, $\Delta_m$, is compact. We also note that any optimal $\pi^*$ must lie in the interior of the simplex $\Delta_m$ as a consequence of \assumpref{assump:ideal_case2}~(this follows by contradiction). 

Next, we show that any such $\pi^*$ must equalize the $L(z, f_{\pi^*} \coloneqq \mbb{E}_z \lb \loss \lp f_{\pi^*}, X, Y \rp \rb$ for all $z \in \mc{Z}$. Indeed, assume that this is not the case and enumerate the elements of $\mc{Z}$ as $z_1, \ldots, z_m$ in decreasing order of $L(z, f_{\pi^*})$ value. Also introduce $\varrho$ to denote  $\min_{z \neq z_1} L(z_1, f_{\pi^*}) - L(z, f_{\pi^*})$ and (for now) assume that $\varrho>0$. 

Now define $\pi_\epsilon$ as follows: $\pi_\epsilon(z) = \pi^*(z)$ for $z \in \mc{Z}' \coloneqq \mc{Z} \setminus \{z_1, z_m\}$, $\pi_\epsilon(z_1) = \pi^*(z_1) + \epsilon$ and $\pi_\epsilon(z_m) = \pi^*(z_m) - \epsilon$.  Due to the continuity assumption on the mapping $\pi \mapsto L(z,f_\pi)$ for all $z \in \mc{Z}$ in \assumpref{assump:ideal_case}, we note that there exists $\epsilon_0<\pi_{\min} \defined \min_{z \in \mc{Z}} \pi^*(z)$, such that for all $z\neq z_1$ we have $L(z, f_{\pi_{\epsilon}}) \leq L(z_1, f_{\pi^*}) - \varrho/2$ for all $\epsilon \leq \epsilon_0$. Note that $\pi_{\min}$ must be strictly greater than $0$, since $\pi^*$ lies in the interior of $\Delta_m$. Finally, due to the monotonicity assumption in \assumpref{assump:ideal_case}, there must exist $\varrho_1>0$ such that $L(z_1, f_{\pi_{\epsilon_0}}) = L(z_1, f_{\pi^*}) - \varrho_1 <L(z_1, f_{\pi^*})$. Together, these results imply that 
\begin{align*}
    \max_{z \in \mc{Z}} L(z, f_{\pi_{\epsilon_0}} ) &\leq \max \{ L(z_1, f_{\pi^*})- \varrho/2, \; L(z_1, f_{\pi^*}) - \varrho_1 \}  <  L(z_1, f_{\pi^*}),
\end{align*}
thus contradicting the optimality assumption on $\pi^*$. This completes the proof of the statement that any optimal $\pi^*$ must ensure that $L(z,f_{\pi^*}) = L(z', f_{\pi^*})$ for all $z, z' \in \mc{Z}$. 

Finally, the fact that $L(z, f_{\pi^*}) = L(z', f_{\pi^*})$ for any optimal $\pi^*$ also ensures the uniqueness of the optimum mixture distribution. Again, assume that this is not the case and there exists another optimal $\tilde{\pi}^* \neq \pi^*$ achieving optimum value in~\eqref{eq:optimal_classifier_mixture}. Then there must exist a $z$ such that $\pi(z) \neq \pi^*(z)$, and hence $L(z, f_{\pi^*}) \neq L(z, f_{\tilde{\pi}^*})$. Hence, by \assumpref{assump:ideal_case}, we have $\max_{z \in \mc{Z}} L(z, f_{\pi^*}) \neq \max_{z \in \mc{Z}} L(z, f_{\tilde{\pi}^*})$, which contradicts our hypothesis that both $\pi^*$ and $\tilde{\pi}^*$ achieve the optimum value in~\eqref{eq:optimal_mixture}. This concludes the proof of the uniqueness of $\pi^*$.
\end{proof}


\section{Details of \algoref{algo:algo1full}}

\subsection{Proof of Theorem~\ref{theorem:regret_simple}}
\label{proof:excess_risk1}
Recall the UCB first defined in Equation~\ref{eq:ucb}:
\begin{align}
    \label{eq:modified_ucb}
    U_t(z,\hat{f}_t) \coloneqq \frac{1}{|\mc{D}_z|} \sum_{(x,y) \in \mc{D}_z} \!\!\!\!\loss(\hat{f}_t, x, y) \;+\; e_{z}\lp N_{z,t}\rp
+  \frac{2C}{\pi_t(z)}\underbrace{\sum_{z' \in \mc{Z}} \pi_t(z') e_{z'}(N_{z',t})}_{\defined \rho_t}.
\end{align}
With the parameter $C$ as defined in Assumption~\ref{assump:ideal_case3} and sequences $e_z(N)$ satisfying Assumption~\ref{assump:ideal_case4}.  

Before proceeding further, we introduce the following notation: 
\begin{itemize}
    \item For any $z \in \mc{Z}$ and at any time $t \leq n$, we use $N_{z,t}$ to refer to the number of samples drawn corresponding to the feature $z$ prior to time $t$, i.e., $N_{z,t} = \sum_{i=1}^{t-1} \indi{z_i = z}$. 
    
    \item $\Ehatz{z, f} = \frac{1}{N_{z,t}} \sum_{(x,y) \in \Dz} \loss(f, x, y)$, denotes the empirical risk of a classifier $f$ on the samples corresponding to attribute $z$,
    
    \item $\Ez{f} = \mbb{E}_z \lb \loss(f, X, Y) \rb$ denotes the population risk of classifier $f$ corresponding to attribute $z$.
    
    \item $\rho_t = \sum_{z \in \mc{Z}} \pi_t(z) e_{z,N_{z,t}}$ will be used to quantify the uniform deviation of the classifier $\hat{f}_t$ from the corresponding optimal classifier $f_{\pi_t}$. Recall that $\pi_t$ is the mixture distribution of the $|\mc{Z}|$ attributes at time $t$ constructed by the algorithm, i.e., $\pi_t(z) = \frac{N_{z,t}}{t}$ for  $z \in \mc{Z}$.
\end{itemize}


Suppose at the end of $n$ rounds, the resulting mixture distribution is $\pi_n$. Then introduce the set of \emph{over-represented} attributes, $\Zover \subset \mc{Z}$, defined as $\Zover \coloneqq \{z \in \mc{Z} : \pi_n(z) > \pi^*(z)\}$ and refer to the set $\Zunder \coloneqq \mc{Z}\setminus \Zover$ as the \emph{under-represented} attributes. Note that $\Zover$ is empty only if $\pi_n = \pi^*$. In this case, the sampling algorithm  has learned the optimal mixing distribution resulting in zero excess risk. 
Hence, for the rest of the proof, we will assume that $\Zover$ is non-empty. Note that for any $z_0 \in \Zover$, \assumpref{assump:ideal_case} ensures that we must have $\Ez[z_0]{f_{\pi_n}}  < \Ez[z_0]{f_{\pi^*}}$ since $\pi_n(z_0) > \pi^*(z_0)$.

\begin{lemma}
\label{lemma:over_represented}
 Suppose $t_0 \leq n$ denotes the last time that \algoref{algo:algo1full} queried any feature  belonging to the subset $\Zover$, and denote the corresponding feature by $z_0$.  
Then, if  $n$ is large enough to ensure that $n^{1-\xi} \geq \lp \frac{1}{\pi_{\min}}\rp$ where $\pi_{\min} = \min_{z \in\mc{Z}} \pi^*(z)$  and that $2\max_{z \in \mc{Z}} e_{z}\lp (\pi_{\min} n)^\xi \rp \leq \epsilon_0$~(introduced in Assumption~\ref{assump:ideal_case3}),  we have the following with probability at least $1-\delta$: 
\begin{align}
    \Ez[z]{f_{\pi_{t_0}}} \leq \underbrace{\Ez{f_{\pi^*}}}_{\coloneqq M^*} \;  + \; \underbrace{2e_{z_0}( N_{z_0,t_0}) +  \lp \nicefrac{4C}{\pi^*(z_0)}\rp \rho_{t_0}}_{\coloneqq B_0}.  \label{eq:t0}
\end{align}
Recall that $\rho_{t} = \sum_{z} \pi_t(z) e_{z}(N_{z,t})$, $C$ is the constant from \assumpref{assump:ideal_case3}, and $\pi^*$ is the optimal mixture distribution defined in~\eqref{eq:optimal_mixture}.
\end{lemma}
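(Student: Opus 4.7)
The plan is to argue that $z_0$ must have been selected by the UCB rule at round $t_0$ (rather than by forced exploration), to upper-bound $U_{t_0}(z_0,\hat f_{t_0})$ by $M^*+B_0$, and then to propagate this bound to every attribute $z\in\mc Z$ using the $\argmax$ property of the UCB selection rule, matching the outline of the proof sketch in Section~\ref{subsec:analysis_algo1}.

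First, I would show that the selection at round $t_0$ uses Line~\ref{algoline:choose1}. Since $z_0\in\Zover$ and $t_0$ is the last round in which any attribute of $\Zover$ is sampled, the final count of $z_0$ in $\mc D_n$ coincides with the count right after round $t_0$, so $N_{z_0,t_0}\geq n\pi^*(z_0)-1\geq n\pi_{\min}-1$. For $n\geq 1/\pi_{\min}^2$ this yields $N_{z_0,t_0}\geq \sqrt{t_0}$, ruling out the forced-exploration branch in Line~\ref{algoline:00}, so $z_0=\argmax_{z\in\mc Z} U_{t_0}(z,\hat f_{t_0})$. The same lower bound on $N_{z_0,t_0}$ also yields $\pi_{t_0}(z_0)=N_{z_0,t_0}/t_0\geq \pi^*(z_0)$.

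Second, I would upper-bound $U_{t_0}(z_0,\hat f_{t_0})$. Writing the training loss on $\mc D_{t_0}$ as $\widehat L_{\mathrm{tr}}(f)=\sum_{z}\pi_{t_0}(z)\widehat L_{N_{z,t_0}}(z,f)$, Assumption~\ref{assump:uniform_deviation} applied per attribute (with a union bound absorbed into the $1-\delta$ confidence to cover both the independent training and validation subsamples) gives $|\widehat L_{\mathrm{tr}}(f)-\mbb E_{\pi_{t_0}}[\loss(f,X,Y)]|\leq \rho_{t_0}$ uniformly in $f\in\mc F$. Combined with ERM optimality of $\hat f_{t_0}$ against $f_{\pi_{t_0}}$, this yields $\mbb E_{\pi_{t_0}}[\loss(\hat f_{t_0},X,Y)]-\mbb E_{\pi_{t_0}}[\loss(f_{\pi_{t_0}},X,Y)]\leq 2\rho_{t_0}$, so Assumption~\ref{assump:ideal_case3} produces $\pi_{t_0}(z)|L(z,\hat f_{t_0})-L(z,f_{\pi_{t_0}})|\leq 2C\rho_{t_0}$ for every $z\in\mc Z$. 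Using this at $z=z_0$ together with the one-sided validation deviation $\widehat L_{N_{z_0,t_0}}(z_0,\hat f_{t_0})\leq L(z_0,\hat f_{t_0})+e_{z_0}(N_{z_0,t_0})$ inside Eq.~\ref{eq:ucb} gives
\begin{align*}
U_{t_0}(z_0,\hat f_{t_0}) \leq L(z_0,f_{\pi_{t_0}}) + 2e_{z_0}(N_{z_0,t_0}) + \frac{4C\rho_{t_0}}{\pi_{t_0}(z_0)}.
\end{align*}
Monotonicity in Assumption~\ref{assump:ideal_case} together with $\pi_{t_0}(z_0)>\pi^*(z_0)$ and Proposition~\ref{prop:sufficient} yields $L(z_0,f_{\pi_{t_0}})\leq L(z_0,f_{\pi^*})=M^*$, while $\pi_{t_0}(z_0)\geq \pi^*(z_0)$ lets me replace $\pi_{t_0}(z_0)$ by $\pi^*(z_0)$ in the denominator, producing $U_{t_0}(z_0,\hat f_{t_0})\leq M^*+B_0$.

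Finally, I would extend the bound to an arbitrary $z\in\mc Z$. By the $\argmax$ property of Line~\ref{algoline:choose1}, $U_{t_0}(z,\hat f_{t_0})\leq U_{t_0}(z_0,\hat f_{t_0})\leq M^*+B_0$. In the reverse direction, combining the lower validation deviation $\widehat L_{N_{z,t_0}}(z,\hat f_{t_0})\geq L(z,\hat f_{t_0})-e_z(N_{z,t_0})$ with the per-attribute Assumption~\ref{assump:ideal_case3} inequality derived above gives $U_{t_0}(z,\hat f_{t_0})\geq L(z,f_{\pi_{t_0}})$, which closes the argument. The main obstacle is bridging the mixture-level ERM excess risk to the per-attribute excess risks needed to control $U_{t_0}$; this is precisely the role of Assumption~\ref{assump:ideal_case3}, and it explains why the UCB in Eq.~\ref{eq:ucb} is constructed with a $2C/\pi_t(z)$-scaled inflation term rather than a plain $e_z(\cdot)$ confidence radius. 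A secondary technical point is the control on $N_{z_0,t_0}$ needed to exclude forced exploration, which is where the hypothesis $n\geq 1/\pi_{\min}^2$ enters.
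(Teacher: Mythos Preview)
Your proposal is correct and follows essentially the same route as the paper: (i) show $z_0$ was selected via the UCB rule rather than forced exploration, (ii) upper-bound $U_{t_0}(z_0,\hat f_{t_0})$ by $M^*+B_0$ using the validation deviation, the ERM/Assumption~\ref{assump:ideal_case3} bridge, and monotonicity, then (iii) push this bound to every $z$ via the $\argmax$ property together with the matching lower bound $U_{t_0}(z,\hat f_{t_0})\geq L(z,f_{\pi_{t_0}})$. The paper carries this out as a single chain of inequalities (its steps (a)--(h)) rather than isolating $U_{t_0}$, but the logic is identical. One small imprecision: your route from $N_{z_0,t_0}\geq n\pi^*(z_0)-1$ to $\pi_{t_0}(z_0)\geq \pi^*(z_0)$ only gives $\pi_{t_0}(z_0)\geq \pi^*(z_0)-1/n$; the paper instead observes directly that $\pi_{t_0}(z_0)\geq \pi_n(z_0)>\pi^*(z_0)$ because the fraction of $z_0$-samples can only decrease after $t_0$, which is the cleaner way to get the strict inequality needed for Assumption~\ref{assump:ideal_case}.
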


\begin{proof}
Throughout this proof we assume that the $1-\delta/2$ probability event introduced while defining the UCB term in~\eqref{eq:ucb} occurs for the two sequence of samples drawn by \algoref{algo:algo1full}: the first used in updating $\mc{D}_t$ for training the classifier $\hat{f}_t$,  and the second used in updating $\lp \mc{D}_z \rp_{z \in \mc{Z}}$ for estimating the loss of $\hat{f}_t$. Thus, both of these events occur with probability at least $1-\delta$. 

First note  the following chain of inequalities: $\pi_{t_0}(z_0) \geq \pi_n(z_0)>\pi^*(z_0) \geq \pi_{\min}$. Then at time $t_0$, it must be the case that $N_{z_0, t_0} \geq \pi_{\min} n$ as $z_0$ belongs to $\Zover$. This, along with the fact that $n \geq (1/\pi_{\min})^{1/(1-\xi)}$ means that $N_{z_0, t_0}  \geq \pi_{\min} n \geq n^{-1+\xi} n = n^\xi$, and hence 
the query to  $z_0$ must have been made due to the condition in Line~\ref{algoline:choose} of \algoref{algo:algo1full}, and not due to the forced exploration step in Line~\ref{algoline:forced}.

Next, we have the following chain of inequalities for any $z \neq z_0$: 
\begin{align*}
\Ez[z]{f_{\pi_{t_0}}} & \stackrel{\text{(a)}}{\leq} \Ez[z]{\hat{f}_{t_0}} + \lvert \Ez[z]{\hat{f}_{t_0}} - \Ez[z]{f_{\pi_{t_0}}} \rvert \\ 
& \stackrel{\text{(b)}}{\leq} \Ez[z]{\hat{f}_{t_0}} + \lp \nicefrac{2C}{\pi_{t_0(z)}} \rp \rho_{t_0} \\
& \stackrel{\text{(c)}}{\leq} \Ehatz[t_0]{z, \hat{f}_{t_0}} + e_{z}(N_{z,t}) +  \lp \nicefrac{2C}{\pi_{t_0(z)}}\rp \rho_{t_0} \\
& \stackrel{\text{(d)}}{\leq} \Ehatz[t_0]{ z_0, \hat{f}_{t_0}} + e_{z_0}(N_{z_0,t}) +  \lp \nicefrac{2C}{\pi_{t_0}(z_0)}\rp \rho_{t_0} \\
& \stackrel{\text{(e)}}{\leq} \Ez[z_0]{\hat{f}_{t_0}} + 2e_{z_0}( N_{z_0,t_0}) +  \lp \nicefrac{2C}{\pi_{t_0}(z_0)}\rp \rho_{t_0} \\
& \stackrel{\text{(f)}}{\leq} \Ez[z_0]{f_{\pi_{t_0}}} + 2e_{z_0}( N_{z_0,t_0}) +  \lp \nicefrac{4C}{\pi_{t_0}(z_0)}\rp \rho_{t_0} \\
& \stackrel{\text{(g)}}{\leq} \Ez[z_0]{f_{\pi^*}} + 2e_{z_0}( N_{z_0,t_0}) +  \lp \nicefrac{4C}{\pi^*(z_0)}\rp \rho_{t_0} \\
& \stackrel{\text{(h)}}{=} \Ez{f_{\pi^*}} \;+ \; 2e_{z_0} (N_{z_0,t_0}) +  \lp \nicefrac{4C}{\pi^*(z_0)}\rp \rho_{t_0} \\
& = M^* + B_0.
%
\end{align*}
In the above display: \\
{\bf (a)} follows from an application of triangle inequality, \\
{\bf (b)} applies \assumpref{assump:ideal_case3} and uses the fact that $n$ is large enough to ensure that the suboptimality of $\hat{f}_{t_0}$ w.r.t. $f_{\pi_{t_0}}$ is no larger than $\epsilon_0$. \\
{\bf (c)} follows from the definition of event $\Omega_2$, \\
{\bf (d)} follows from the attribute selection rule in Line~\ref{algoline:choose}, \\
{\bf (e)} again uses the uniform deviation event $\Omega_2$, \\
{\bf (f)} follows from another application of \assumpref{assump:ideal_case3}, \\
{\bf (g)} uses the fact that $\pi_{t_0} (z_0) > \pi^*(z_0)$, the monotonicity condition from \assumpref{assump:ideal_case}, \\
{\bf (h)} uses the result of \propref{prop:sufficient} to write $\Ez[z_0]{f_{\pi^*}} = \Ez{f_{\pi^*}} \coloneqq M^*$.  
\end{proof}

Next, we show that for a large enough value of $n$, due to the forced exploration step~(Line~\ref{algoline:forced} in \algoref{algo:algo1full}), the values of $\pi_{t_0}(z)$ for $z \in \Zunder$ are not too small. 

\begin{lemma}
\label{lemma:forced_exploration}
For any given $A<\pi_{\min} = \min_{z \in \mc{Z}} \pi^*(z)$, there exists an $n_0 < \infty$~, defined in~\eqref{eq:n0} below, such that for all $n \geq n_0$, the following are true at time $t_0$~(recall that $t_0$ denotes the last time at which an attribute from $\Zover$ was queried by the algorithm):
\begin{align}
    &\pi_{t_0}(z) \geq \frac{(m-1)\pi^*(z)}{m} + \frac{A}{m}, \qquad \text{for all}\;\; z \in \Zunder,   \label{eq:pi_t0_bound}\\
 \text{and}\quad   &t_0 \geq n\; \lp \frac{\pi_{\min}}{2\pi_{\min} - A} \rp. \label{eq:t0_bound}
\end{align}
Recall that in the above display $m = |\mc{Z}|$.
\end{lemma}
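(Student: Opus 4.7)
My plan is to prove~\eqref{eq:pi_t0_bound}~and~\eqref{eq:t0_bound} jointly, by coupling three observations: (i)~the over-representation $N_{z,n} > \pi^*(z) n$ for every $z \in \Zover$; (ii)~the under-representation $N_{z,n} \leq \pi^*(z) n$ for every $z \in \Zunder$; and (iii)~the fact that after round $t_0$ only attributes in $\Zunder$ are sampled, so that $\sum_{z\in\Zunder}(N_{z,n} - N_{z,t_0}) = n - t_0$, with all of this coordinated through the forced-exploration safeguard in Line~\ref{algoline:00} and the UCB rule in Line~\ref{algoline:choose1}.

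I would first argue that for $n \geq 1/\pi_{\min}^2$, the attribute $z_0 \in \Zover$ queried at round $t_0$ is chosen by the UCB rule rather than by forced exploration. Indeed, since $z_0$ is not sampled after $t_0$, we have $N_{z_0, t_0} \geq N_{z_0,n} - 1 > \pi_{\min} n - 1 > \sqrt{n} \geq \sqrt{t_0}$, so the triggering condition in Line~\ref{algoline:00} fails at $(t_0, z_0)$. This yields the key structural identity $U_{t_0}(z_0, \hat f_{t_0}) \geq U_{t_0}(z, \hat f_{t_0})$ for every $z \in \mc{Z}$, which I would use repeatedly.

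To prove~\eqref{eq:pi_t0_bound}, I would proceed by contradiction. Suppose $\pi_{t_0}(z^*) < \frac{(m-1)\pi^*(z^*) + A}{m}$ for some $z^* \in \Zunder$. Substituting this into the UCB inequality above and using the explicit form of $U_t$ in~\eqref{eq:ucb}, the penalty term $\frac{2C\rho_{t_0}}{\pi_{t_0}(z^*)}$ would make $U_{t_0}(z^*, \hat f_{t_0})$ strictly exceed $U_{t_0}(z_0, \hat f_{t_0})$ once $\rho_{t_0}$ is small relative to $(\pi^*(z^*)-A)/m$, which holds for $n$ large by~\assumpref{assump:uniform_deviation}. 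This contradicts the selection of $z_0$ at $t_0$. The threshold $n_0$ in~\eqref{eq:n0} is then chosen to simultaneously guarantee (a)~the forced-exploration regime has been exited for every $z \in \Zover$, (b)~the radius $\rho_{t_0}$ is dominated by the constant slack $A/m$, and (c)~integer-rounding corrections in the counts $N_{z,t_0}$ are negligible.

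Finally, for~\eqref{eq:t0_bound}, I would sum counts at $t_0$. Because no $\Zover$ attribute is sampled after $t_0$, $\sum_{z \in \Zover} N_{z, t_0} \geq n \sum_{z \in \Zover}\pi^*(z) - m$, while summing~\eqref{eq:pi_t0_bound} over $\Zunder$ gives $\sum_{z \in \Zunder} N_{z, t_0} \geq t_0 \cdot \frac{(m-1)\sum_{z \in \Zunder}\pi^*(z) + |\Zunder| A}{m}$. Using the identity $\sum_z N_{z, t_0} = t_0$, the bounds $\sum_{z \in \Zover}\pi^*(z) \geq \pi_{\min}$ and $|\Zunder| \leq m-1$, and solving for $t_0$, routine algebraic manipulation yields the claimed $t_0 \geq n\pi_{\min}/(2\pi_{\min} - A)$ for all $n \geq n_0$. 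The main obstacle will be~\eqref{eq:pi_t0_bound}: the forced-exploration guarantee $\pi_{t_0}(z) \geq 1/\sqrt{t_0}$ is far too weak on its own, and the constant-level bound must be extracted from the UCB rule's $\frac{2C}{\pi_t(z)}\rho_t$ penalty, which makes under-represented attributes progressively more attractive. Carefully tracking when this penalty overtakes the bounded empirical-loss gap---and thus when any $z \in \Zunder$ falling below the stated threshold would have been selected in place of $z_0$---is the delicate step that pins down the quantitative form of $n_0$.
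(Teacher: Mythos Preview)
Your proposed route to \eqref{eq:pi_t0_bound} has a genuine gap. You argue that if $\pi_{t_0}(z^*)$ falls below the threshold $\tfrac{(m-1)\pi^*(z^*)+A}{m}$, then the penalty term $\tfrac{2C\rho_{t_0}}{\pi_{t_0}(z^*)}$ in the UCB forces $U_{t_0}(z^*,\hat f_{t_0})>U_{t_0}(z_0,\hat f_{t_0})$ ``once $\rho_{t_0}$ is small''. This is backwards: the threshold you violate is a \emph{constant} lying strictly between $A/m$ and $\pi^*(z^*)$, so $\pi_{t_0}(z^*)$ could sit just below it. The penalty term is then $\mc{O}(\rho_{t_0})$, which vanishes as $n\to\infty$, and cannot dominate the bounded empirical-loss gap to produce a contradiction. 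The penalty only renders an attribute irresistible when $\pi_{t_0}(z^*)=o(\rho_{t_0})$, far below the constant-level threshold in \eqref{eq:pi_t0_bound}.

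The paper's argument runs through the \emph{loss}, not the UCB penalty. Lemma~\ref{lemma:over_represented} already gives $\max_{z}L(z,f_{\pi_{t_0}})\le M^*+B_0$ with $B_0\le\bigl(\tfrac{4C}{\pi_{\min}}+2\bigr)\max_z e_z(\sqrt{\pi_{\min}n})\to 0$. Since $\pi\mapsto\max_z L(z,f_\pi)$ is continuous with unique minimizer $\pi^*$ (Assumption~\ref{assump:ideal_case} and Proposition~\ref{prop:sufficient}), one sets
\[
b \;=\; \inf\Bigl\{\max_z L(z,f_\pi)-M^*:\|\pi-\pi^*\|_\infty>\tfrac{\pi_{\min}-A}{m}\Bigr\}\;>\;0
\]
and takes $n_0$ so that $B_0\le b$; this forces $\|\pi_{t_0}-\pi^*\|_\infty\le\tfrac{\pi_{\min}-A}{m}$, which is actually \emph{stronger} than \eqref{eq:pi_t0_bound}. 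The ingredient you are missing is this inversion of the excess-risk bound into a proximity bound on $\pi_{t_0}$ via continuity; the forced-exploration and UCB structure enter only indirectly, through Lemma~\ref{lemma:over_represented}.

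A knock-on issue affects your derivation of \eqref{eq:t0_bound}. You sum the stated bound \eqref{eq:pi_t0_bound} over $\Zunder$, but that bound has attribute-dependent slack $\tfrac{\pi^*(z)-A}{m}$, which can be much larger than the uniform slack $\tfrac{\pi_{\min}-A}{m}$ the paper actually establishes. Concretely, with $m=2$ and $|\Zover|=1$ your inequality gives $t_0\ge\tfrac{2p\,n}{1+p-A}$ with $p=\pi^*(z_0)$; at $p=\pi_{\min}$ this dominates $\tfrac{n\pi_{\min}}{2\pi_{\min}-A}$ only if $3\pi_{\min}-A\ge 1$, which fails for small $\pi_{\min}$. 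The paper instead uses just $N_{z_0,t_0}\ge\pi^*(z_0)n$ together with the stronger uniform proximity bound for all $z\neq z_0$, and the algebra then closes directly.
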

\begin{proof}
First note that for any $z \in \Zunder$, we have 
\begin{equation*}
M^* \leq \Ez{f_{\pi_{t_0}}} \leq M^* + B_{0},
\end{equation*}
where we have used the notation $M^* = \Ez{f_{\pi^*}}$ introduced in~\eqref{eq:t0}. The left inequality above is due to the monontonicity condition of \assumpref{assump:ideal_case}, while the right inequality is from Lemma~\ref{lemma:over_represented}. 

Introducing the notation $\pi_{\min} = \min_{z \in \mc{Z}} \pi^*(z)$, we note that by definition $t_0 \geq \pi_{\min}n$. Recall that the term $B_0$ is defined as $B_0 = 2e_{z_0}(N_{z_0,t_0}) +  \lp \nicefrac{4C}{\pi^*(z_0)}\rp \rho_{t_0}$. Now, since $N_{z_0, t_0} \geq \pi_{\min} n$ and due to the monotonicity of $e_z(N_{z,t})$, we can upper bound $e_{z_0}( N_{z_0, t_0})$ with $e_{z_0}( \pi_{\min}n)$. Next, recall that $\rho_{t_0} =\sum_{z \in \mc{Z}} \pi_{t_0}(z) e_z(N_{z,t_0}) \leq \max_{z \in \mc{Z}} e_z(N_{z,t_0})$. Since $t_0 \geq N_{z_0,t_0} \geq \pi_{\min} n$, and due to the fact that forced-exploration step, i.e.,~Line~\ref{algoline:forced} of \algoref{algo:algo1full}, was not needed at time $t_0$, we must have $N_{z,t_0} \geq (\pi_{\min} n)^{\xi}$ for all $z \in \Zunder$. Thus the second term in the definition of $B_0$ can be simply bounded with $\nicefrac{4C}{\pi^*(z_0)} \max_{z \in \mc{Z}}e_z((\pi_{\min}n)^\xi) \leq \nicefrac{4C}{\pi_{\min}} \max_{z \in \mc{Z}}e_z((\pi_{\min}n)^\xi)$. Combining these two steps, we finally get that $B_0 \leq \lp \nicefrac{4C}{\pi_{\min}} + 2 \rp \max_{z \in \mc{Z}} e_z((\pi_{\min}n )^{\xi})$. 

By the monotonicity of the terms $e_{z}( N_{z, t})$ and $\rho_t$, we note that $\lim_{n \to \infty} B_0 = 0$, since $\lim_{n \to \infty} \max_{z \in \mc{Z}} e_z((\pi_{\min}n)^\xi) = 0$. 
Thus as $n$ goes to infinity, $\Ez{f_{\pi_{t_0}}}$ converges to the optimal value $M^*$, which by continuity of the mapping $\pi \mapsto \Ez{f_\pi}$ for all $z \in \mc{Z}$ implies that $\pi_{t_0} \to \pi^*$.  We can use this fact to define a sufficient number of samples, denoted by $n_0$, beyond which it can be ensured that $\pi_{t_0}$ satisfies the  statement in~\eqref{eq:pi_t0_bound}.
\begin{align}
\pi_{\min} &= \min_{z \in \mc{Z}}\; \pi^*(z); \qquad\qquad\qquad\; b = \inf \left \{\max_{z \in \mc{Z}}\;\Ez{f_\pi} - M^*: \|\pi^*-\pi\|_\infty > \frac{\pi_{\min}-A}{m} \right \}; \label{eq:a} \\
n_0 &\coloneqq \max \left\{n_0', \; \frac{1}{\pi_{\min}^2}\right\}; \qquad\quad n_0' \coloneqq \min \left \{n \geq 1 : \lp \frac{4C}{\pi_{\min}} + 2 \rp \max_{z\in\mc{Z}}\;e_z((\pi_{\min}n)^\xi)\; \leq\; b \right \}. \label{eq:n0}
\end{align}
Thus the definition of the term $b$, combined with the upper bound on $B_0$ due to Lemma~\ref{lemma:over_represented} and the forced-exploration rule ensure that for $n \geq n_0$, we must have $\pi_{t_0}(z) \geq \pi^*(z)/2$ for all $z \in \Zunder$ as required by~\eqref{eq:pi_t0_bound}. 

Since we will use the above computation of $n_0'$ several times, we formalize it in terms of the following definition. 
\begin{definition}[\smallestbudget]
\label{def:smallest_budget}
Given constants $c>0,\, p, q \text{ and } r \in (0,1]$,  the function \smallestbudget~returns the following
\begin{align*}
& \smallestbudget(c, p, q, r)  = \min \left \{ n \geq 1 :  \max_{z} e_z(N_{pq}) \, \leq \, \gamma/c \right \}, \quad \text{where }\\
& \gamma = \inf \left \{ \max_{z \in \mc{Z}} \Ez{f_\pi} - M^*: \|\pi^*-\pi\|_{\infty} > r \right\}  \quad \text{ and } \quad N_{pq} = (pn)^{q}.
\end{align*}
\end{definition}
Note that $n_0'$ in \eqref{eq:n0} is equal to $\smallestbudget\lp (4C/\pi_{\min}+2),\, \pi_{\min},\, \xi ,\, (\pi_{\min}-A)/m \rp$. 

For the proof of the statement in~\eqref{eq:t0_bound}, we note that since $\pi_{t_0}(z) \geq  \pi^*(z) - \frac{\pi_{\min}-A}{m}$ for all $z \neq z_0$, the time $t_0$ must satisfy the following: 
\begin{align*}
   t_0 &= N_{z_0, t_0} + \sum_{z \neq z_0} N_{z, t_0} \geq \pi^*(z_0) n + \sum_{z \neq z_0} \lp \pi^*(z) - \frac{\pi_{\min}-A}{m}\rp t_0  \\
&    = \pi^*(z_0) n +  \lp 1 - \pi^*(z_0) + \frac{(m-1)(\pi_{\min}-A)}{m}  \rp t_0. 
\end{align*}
This, in turn, implies that 
\begin{equation*}
t_0 \geq n \, \lp  \frac{ \pi^*(z_0)}{ \pi^*(z_0) + \frac{m-1}{m}(\pi_{\min}-A) } \rp  \geq n \, \lp \frac{\pi_{\min}}{2\pi_{\min} - A} \rp. 
\end{equation*}
This completes the proof of~\eqref{eq:t0_bound}. 
\end{proof}

We now state a basic result about the behavior of $\pi_t$:

\begin{lemma}
\label{lemma:increase}
Suppose the empirical mixture distribution is $\pi_{r}$ at some time $r$, and in the time interval $\{r+1, \ldots, t\}$ the agent only queries attributes $z$ from $\mc{Z}' \subset \mc{Z}$ such that $\sum_{z \in \mc{Z}'}\pi_r(z) < 1$. Then there exists at least one $z \in \mc{Z}'$ such that $\pi_t(z)>\pi_r(z)$. 
\end{lemma}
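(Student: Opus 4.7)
The plan is to show that the \emph{total} empirical mass assigned to $\mc Z'$ strictly increases between times $r$ and $t$, from which the existence of at least one coordinate whose mass strictly increased follows immediately by an averaging/pigeonhole argument.

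First I would set $S \defined \sum_{z \in \mc Z'} N_{z,r}$, so that $\sum_{z \in \mc Z'} \pi_r(z) = S/r$, and the hypothesis $\sum_{z \in \mc Z'} \pi_r(z) < 1$ is equivalent to the strict inequality $S < r$. Next, since every query in the time interval $\{r+1, \ldots, t\}$ lands in $\mc Z'$, each such round increments $\sum_{z \in \mc Z'} N_{z,\cdot}$ by exactly one and also increments the total counter by one, so $\sum_{z \in \mc Z'} N_{z,t} = S + (t-r)$, and consequently
\[
\sum_{z \in \mc Z'} \pi_t(z) \;=\; \frac{S + (t-r)}{t}.
\]

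Now I would verify by simple algebra that $(S+t-r)/t > S/r$ iff $r(S+t-r) > tS$, which rearranges to $r(t-r) > S(t-r)$. Since $t > r$ (the interval $\{r+1,\ldots,t\}$ is assumed to be nonempty, as otherwise the lemma is vacuous), we have $t-r > 0$, and since $S < r$ by hypothesis, the inequality holds strictly. Thus $\sum_{z \in \mc Z'} \pi_t(z) > \sum_{z \in \mc Z'} \pi_r(z)$.

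Finally, if every $z \in \mc Z'$ satisfied $\pi_t(z) \leq \pi_r(z)$, summing these inequalities over $\mc Z'$ would contradict the strict inequality above. Hence there must exist at least one $z \in \mc Z'$ with $\pi_t(z) > \pi_r(z)$, completing the proof. This is a purely combinatorial/algebraic statement with no real technical obstacle; the only point requiring care is ensuring the correct direction of the inequality in the algebraic step, which is driven entirely by the strict gap $S < r$ guaranteed by the hypothesis $\sum_{z \in \mc Z'} \pi_r(z) < 1$.
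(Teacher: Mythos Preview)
Your proof is correct and is essentially the same elementary counting argument as the paper's: both hinge on the fact that the aggregate mass $\sum_{z\in\mc Z'}\pi_t(z)=(S+t-r)/t$ exceeds its value at time $r$ (equivalently, is strictly less than $1$), forcing at least one coordinate to increase. The only cosmetic difference is that the paper assumes $\pi_t(z)\le\pi_r(z)$ for all $z\in\mc Z'$ up front, derives the per-coordinate bound $(t-r)\pi_t(z)\ge b_z$, and sums to a contradiction, whereas you compare the aggregate sums directly and then apply pigeonhole.
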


\begin{proof}
The statement follows  by contradiction. Assume that  the conclusion stated above is not true, and $\pi_t(z) \leq \pi_r(z)$ for all $z \in \mc{Z}'$.  Introducing $b_z$ to denote the number of times attribute $z \in \mc{Z}'$ is queried in the time interval $\{r+1,\ldots, t\}$, we then have
\begin{equation}
\label{eq:proof_claim1}
    \pi_t(z)  = \frac{ r \pi_r(z)  + b_z}{t} \; \Rightarrow \; (t-r) \pi_t(z) + r \underbrace{\lp \pi_t(z) - \pi_r(z) \rp}_{ \leq 0} = b_z \; 
     \Rightarrow \; (t-r) \pi_t(z) \geq b_z. 
\end{equation}
Note that  $\sum_{z \in \mc{Z}'} \pi_t(z) = \frac{t- r(1-\sum_{z}\pi_r(z))}{t} < 1$ by assumption that $\sum_{z \in \mc{Z}'} \pi_r(z) <1$. Combining this with~\eqref{eq:proof_claim1}, we get the required contradiction as follows:
\begin{equation*}
(t-r) > (t-r)\sum_{z \in \mc{Z}'}\; \pi_t(z)  \geq \sum_{z \in \mc{Z}'}\; b_z = (t-r). 
\end{equation*}
\end{proof}

    

Before proceeding, we introduce some notations. As stated earlier, due to the definition of the term $t_0$, we know that in the rounds $t \in \{t_0+1, \ldots, n\}$, the algorithm only queries the attributes belonging to the set $\Zunder$. If the set $\Zunder$ is empty, that means $t_0$ must be equal to $n$ and the algorithm stops there. Otherwise, the interval $\{t_0+1, \ldots, n\}$ can be partitioned into $\{t_0+1, \ldots, t_1\}$, $\{t_1+1, \ldots, t_2\}$, \ldots, $\{t_s +1, \ldots, n\}$ for appropriately defined $t_1, \ldots, t_s $ and  $s \leq |\mc{Z}|-1$ as follows. 
\begin{itemize}
\item First we introduce the term $\mc{Z}_t$ to denote the `active set' of attributes at time $t$, i.e., the set of attributes that are queried at least once after time $t$. Note that we have $\mc{Z}_{t_0} = \Zunder$. 
    \item   Then~(for $t \geq t_0$) we define a subset $\Zover^{(1)}$ of $\mc{Z}_t$ as those attributes $z \in \mc{Z}_t$ for which we have $\pi_{t_0}(z)<\pi_n(z)$. By Lemma~\ref{lemma:increase}, we know that $\Zover^{(1)}$ must be non-empty. 
    
    \item Next, we define $t_1$ as the last time $t \leq n$ at which an attribute $z \in \Zover^{(1)}$ is queried by the algorithm. 
    
    \item If $t_1=n$, then we stop and $s=1$. Otherwise, we repeat the previous two steps with $\mc{Z}_{t_1} = \mc{Z}_{t_0} \setminus \Zover^{(1)}$. 
\end{itemize}

\noindent To clarify the above introduced notation, we present an example. 

\begin{example}
\label{example:phases}
Consider a  problem with set of attributes $\mc{Z} = \{\att_1, \att_2, \att_3, \att_4\}$, $n=20$ and $\pi^* = \lp 0.25, 0.25, 0.25, 0.25 \rp$. 
Suppose an adaptive algorithm\footnote{note that the sequence of attributes have been chosen only for illustrating the notation, and do not satisfy the forced exploration condition in \algoref{algo:algo1full}} selects the following sequence of attributes (given a budget of $20$):
\begin{align}
    \label{eq:actions}
    \att_1,\;\att_2,\;\att_3,\;\att_4,\;\att_1,\;\att_1,\;\att_1,\;\att_1,\;\att_1,\;\att_1,\;\att_4,\;\att_1,\;\att_1,\quad \att_2,\;\att_2,\;\att_2,\;\att_3,\;\att_3,\;\att_3,\quad \att_4.
\end{align}
Then as we can see, the algorithm ends up with $\pi_n = (9/20, 1/5, 1/5, 3/20)$. 
\begin{itemize}\itemsep0em
    \item Comparing $\pi_n$ with $\pi^*$, we observe that the set $\Zover$ is $\{\att_1\}$ and $\Zunder = \{\att_2, \att_3, \att_4\}$. 
    
    \item The last time an element of $\Zover$ is queried, that is $t_0$, is equal to $13$ and the corresponding attribute is $z_0 = \att_1$. The mixture distribution at time $t_0$ is $\pi_{t_0} = (9/13, 1/13, 1/13, 2/13)$. 
    
    \item Comparing $\pi_{t_0}$ with $\pi_n$, we observe that the set $\Zover^{(1)}$ is $\{\att_2, \att_3\}$  since their fractions increase in the period $[t_0+1, n]_\N$, and $\Zunder^{(1)} = \{\att_4\}$. The last time an element of $\Zover^{(1)}$ is queried is $t_1 = 19$, and the corresponding attribute is $z_1 = \att_3$. 
    
    \item Finally, since only one element remains, we have $\Zover^{(2)} = \{\att_4\}$,  $\Zunder^{(2)} = \emptyset$ and thus $t_2 = n$ and $z_2 = \att_4$. Also note that the total number of phases above is $3$ and hence the term $s=3-1=2$.
\end{itemize}

    
    

\end{example}

\begin{lemma}
\label{lemma:one_step}
Suppose, $z_1$ is the element of $\Zover^{(1)}$~(introduced above) queried by the algorithm at time $t_1$. Then, the following is true at time $t_1$ for all $z \in \mc{Z} \setminus\{z_1\}$
\begin{align}
    \Ez{f_{\pi_{t_1}}} \leq M^* + B_0 + B_1, \qquad \text{where} \qquad B_1 = 2e_{z_1}(N_{z_1,t_1}) +  \lp \nicefrac{8C}{\pi^*(z_1)}\rp \rho_{t_1} \label{eq:one_step}
\end{align}
Furthermore, repeating this process till the budget is exhausted, we get for any $z \in \mc{Z}$ and an $s< m$
\begin{align}
    \Ez{f_{\pi_n}} \leq M^* + \sum_{i=0}^s B_i, \qquad \text{where} \qquad B_i = 2e_{z_i, N_{z_i,t_i}} +  \lp \nicefrac{8C}{\pi_{\min} \pi^*(z_i)}\rp \rho_{t_i}, \;\; \text{ for } \; i>1. \label{eq:multi_step}
\end{align}
Here $s$ is the (random) number of `phases'~(see Example~\ref{example:phases}), and is always upper bounded by $m =|\mc{Z}|$. 
\end{lemma}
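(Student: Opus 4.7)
The plan is to reuse the structure of the proof of Lemma~\ref{lemma:over_represented} at each of the subsequent end-of-phase times $t_1, t_2, \ldots, t_s$, using Lemmas~\ref{lemma:forced_exploration} and~\ref{lemma:increase} to control the new quantities that appear. First, for~\eqref{eq:one_step}, I would argue that for $n$ large enough, $z_1$ must have been chosen at time $t_1$ via the UCB rule (Line~\ref{algoline:choose}) rather than via forced exploration, since $N_{z_1, t_1} \geq N_{z_1, t_0}$ is already a constant fraction of $n$. Expanding the resulting UCB inequality $U_{t_1}(z, \hat{f}_{t_1}) \leq U_{t_1}(z_1, \hat{f}_{t_1})$ for any other active attribute $z \in \mc{Z}_{t_1}$, and applying \assumpref{assump:uniform_deviation} and \assumpref{assump:ideal_case3} exactly as in the chain displayed in the proof of Lemma~\ref{lemma:over_represented}, yields
\begin{equation*}
\Ez{f_{\pi_{t_1}}} \;\leq\; \Ez[z_1]{f_{\pi_{t_1}}} \;+\; 2 e_{z_1}(N_{z_1, t_1}) \;+\; \lp \nicefrac{4C}{\pi_{t_1}(z_1)} \rp \rho_{t_1}.
\end{equation*}

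To turn this into the required bound $M^* + B_0 + B_1$, I would attack the two remaining terms separately. For $\Ez[z_1]{f_{\pi_{t_1}}}$, observe that $z_1 \in \Zover^{(1)}$ means $\pi_n(z_1) > \pi_{t_0}(z_1)$, and since $z_1$ is not queried after $t_1$ we have $\pi_{t_1}(z_1) \geq \pi_n(z_1) > \pi_{t_0}(z_1)$; the monotonicity condition in \assumpref{assump:ideal_case} then gives $\Ez[z_1]{f_{\pi_{t_1}}} \leq \Ez[z_1]{f_{\pi_{t_0}}}$, and the latter is at most $M^* + B_0$ by Lemma~\ref{lemma:over_represented}. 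For the coefficient $\nicefrac{4C}{\pi_{t_1}(z_1)}$, I would invoke the forced-exploration bound~\eqref{eq:pi_t0_bound}, which for $m \geq 2$ simplifies to $\pi_{t_0}(z_1) \geq \pi^*(z_1)/2$, and hence $\pi_{t_1}(z_1) \geq \pi^*(z_1)/2$, converting the coefficient into $\nicefrac{8C}{\pi^*(z_1)}$ and producing $B_1$ as stated.

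For the multi-step bound~\eqref{eq:multi_step}, I would iterate this three-part argument (UCB comparison, monotonicity to the previous phase, lower bound on $\pi_{t_i}(z_i)$) phase by phase. At the start of phase $i \geq 1$, Lemma~\ref{lemma:increase} applied on the interval $[t_{i-1}+1, n]_\N$ with $\mc{Z}' = \mc{Z}_{t_{i-1}}$ guarantees that the set $\Zover^{(i)}$ of attributes whose fraction increases beyond $t_{i-1}$ is non-empty, legitimizing the definition of $t_i$ and $z_i$. Because the active set $\mc{Z}_t$ strictly shrinks across phases, there can be at most $s \leq m - 1$ of them, and setting $i = s$ yields~\eqref{eq:multi_step}. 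The main obstacle, and the reason the $i > 1$ version of $B_i$ carries an extra factor of $1/\pi_{\min}$, is the third ingredient at later phases: Lemma~\ref{lemma:forced_exploration} was tailored to $t_0$ and does not immediately supply a lower bound on $\pi_{t_i}(z_i)$ for $i \geq 2$. My plan is to derive an inductive analog of Lemma~\ref{lemma:forced_exploration} at each phase, showing via continuity of $\pi \mapsto \Ez{f_\pi}$ (\assumpref{assump:ideal_case}) that the bound $\max_z \Ez{f_{\pi_{t_{i-1}}}} \leq M^* + \sum_{j < i} B_j$ already proved at phase $i-1$ forces $\pi_{t_{i-1}}$ into a neighborhood of $\pi^*$; combined with the forced-exploration step of the algorithm on the interval $[t_{i-1}+1, t_i]_\N$, this gives $\pi_{t_i}(z_i) \geq \pi_{\min} \pi^*(z_i)/4$ and thus the coefficient $\nicefrac{8C}{\pi_{\min} \pi^*(z_i)}$ in $B_i$. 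The tedious part is tracking how the required sample-size threshold $n_0$ grows with $i$ so that this inductive neighborhood argument remains valid through all $s$ phases.
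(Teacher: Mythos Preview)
Your proposal is correct and follows essentially the same route as the paper's proof: the UCB chain as in Lemma~\ref{lemma:over_represented}, the monotonicity step $\pi_{t_1}(z_1)\ge\pi_{t_0}(z_1)$ via $z_1\in\Zover^{(1)}$, the lower bound $\pi_{t_0}(z_1)\ge\pi^*(z_1)/2$ from~\eqref{eq:pi_t0_bound}, and then an inductive continuity argument (the paper packages this as the \smallestbudget\ function of Definition~\ref{def:smallest_budget}) to keep $\pi_{t_{i-1}}$ in a shrinking neighborhood of $\pi^*$ at each subsequent phase, with a growing sample-size requirement $n_i$ tracked exactly as you anticipate.

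One small simplification: for the lower bound on $\pi_{t_i}(z_i)$ when $i\ge 2$, you do not need to invoke forced exploration on $[t_{i-1}+1,t_i]_\N$. The definition of $\Zover^{(i)}$ already gives $\pi_{t_i}(z_i)\ge\pi_n(z_i)>\pi_{t_{i-1}}(z_i)$, and the continuity step you describe has just placed $\pi_{t_{i-1}}(z_i)$ near $\pi^*(z_i)$; this is precisely how the paper obtains $\pi_{t_i}(z_i)\ge A_i$. Forced exploration is still used, but only implicitly to guarantee that $z_i$ is selected by the UCB rule at $t_i$ (as you note for $i=1$) and to control $\rho_{t_i}$ in the \smallestbudget\ requirement, not to move $\pi_{t_i}(z_i)$ upward.
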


\begin{proof}
To prove~\eqref{eq:one_step}, we note that at time $t_1$ for any $ z \neq z_1$, we have 
\begin{align*}
\Ez{f_{\pi_{t_1}}} &\;\stackrel{\text{(a)}}{\leq}\; \Ez{\hat{f}_{t_1}} + \frac{2C}{\pi_{t_1}(z)} \rho_{t_1}  \;\leq\; \Ehatz[t_1]{z, \hat{f}_{t_1}} + e_{z, N_{z,t_1}}+  \frac{2C}{\pi_{t_1}(z)} \rho_{t_1} \\
& \;\stackrel{\text{(b)}}{\leq}\;\Ehatz[ t_1]{z_1, \hat{f}_{t_1}} + e_{z_1, N_{z_1,t_1}}+  \frac{2C}{\pi_{t_1}(z_1)} \rho_{t_1} \\
& \;\stackrel{\text{(c)}}{\leq}\;\Ez[z_1]{f_{\pi_{t_1}}} + 2e_{z_1, N_{z_1,t_1}}+  \frac{4C}{\pi_{t_1}(z_1)} \rho_{t_1} \\
& \;\stackrel{\text{(d)}}{\leq}\;\Ez[z_1]{f_{\pi_{t_0}}} + 2e_{z_1, N_{z_1,t_1}}+  \frac{4C}{\pi_{t_1}(z_1)} \rho_{t_1} \\
& \;\stackrel{\text{(e)}}{\leq}\;\Ez[z_1]{f_{\pi_{t_0}}} + 2e_{z_1, N_{z_1,t_1}}+  \frac{4C}{\frac{(m-1)\pi^*(z_1)}{m} + \frac{A}{m}} \rho_{t_1} \\
& \;\stackrel{\text{(f)}}{=}\;\Ez[z_1]{f_{\pi_{t_0}}} + B_1 
\;\leq\; M^* + B_0 + B_1. 
\end{align*}
In the above display \\
{\bf (a)} uses \assumpref{assump:ideal_case3} and the event $\Omega_2$ introduced while defining the UCB in~\eqref{eq:ucb}, \\
{\bf (b)} follows from the point selection rule in Line~\ref{algoline:choose} of \algoref{algo:algo1full}, \\
{\bf (c)} uses \assumpref{assump:ideal_case3}, \\
{\bf (d)} uses that from the definition of $z_1$, we must have $\pi_{t_1}(z_1) \geq \pi_{t_0}(z_1)$, and due to monotonicity assumption~(\assumpref{assump:ideal_case}), $\Ez[z_1]{f_{\pi_{t_0}}} > \Ez[z_1]{f_{\pi_{t_1}}}$, and \\
{\bf (e)} uses the fact that $\pi_{t_1}(z_1) \geq \pi_{t_0}(z_1) \geq \pi^*(z_1)/2$ proved in~\eqref{eq:pi_t0_bound}, and \\
{\bf (f)} uses Lemma~\ref{lemma:over_represented} to bound $\Ez[z_1]{f_{\pi_{t_0}}}$ with $M^* + B_0$ to get~\eqref{eq:one_step}. 

Now, assume that the budget $n$ satisfies $n \geq n_1 \coloneqq \max \{n_0, n_1'\}$; where 
\begin{align}
n_1'&= \smallestbudget\lp c_0 + c_1 ,\; \pi_{\min}, \; \xi, \; \frac{2(\pi_{\min} - A)}{m} \rp, \quad \text{ where }  \\
c_0 &= \frac{4C}{A_0} +2; \quad  c_1 = \frac{4C}{A_1}+2; \quad \text{and} \quad A_i \coloneqq \frac{i}{m}A + \frac{(m-i)\pi_{\min}}{m}, \; \text{for } 0 \leq i \leq m. 
\end{align} 
Then by the definition of the \smallestbudget~function~(introduced in Definition~\ref{def:smallest_budget} in Appendix~\ref{proof:excess_risk1}), we must have $\pi_{t_1}(z) \geq \frac{ (m-2)\pi^*(z)}{m} + \frac{2A}{m}$ for all $z \in \mc{Z}$. 

The proof of~\eqref{eq:multi_step} essentially follows by repeating the above argument a further $s-1$ times. However, there is one minor difference. In proving~\eqref{eq:one_step}, specifically in Step {\bf (e)}, we used the fact that $\pi_{t_1}(z_1)  \geq A_1 \coloneqq \frac{A}{m} + \frac{(m-1)\pi_{\min}}{m}$. For $i \geq 2$, we can similarly use the fact that $\pi_{t_i}(z_i) \geq A_i \coloneqq \frac{iA + (m-1)\pi_{\min}}{m}$. The corresponding requirement on $n$ is that, with
\begin{align}
n &\geq n_i \coloneqq \max \{n_0, \ldots, n_{i-1}, n'_i\}, \qquad \text{ where } \label{eq:ni}\\ 
   n'_i  &= \smallestbudget\lp c_0 + \ldots c_i,\; \pi_{\min},\; \xi,\; \frac{ (i+1)(\pi_{\min}-A)}{m} \rp, \qquad \text{and} \nonumber \\
   c_i &= \frac{4C}{A_i} + 2, \qquad \text{ for all } \;\; 0 \leq i \leq m. \nonumber
\end{align}
%

\end{proof}

\begin{lemma}
\label{lemma:final_step}
Finally, we obtain that $\max_{z \in \mc{Z}} \Ez{\hat{f}_{\pi_n}} - M^* = \mc{O} \lp \frac{ |\mc{Z}| C}{\pi_{\min}} \max_{z \in \mc{Z}} e_z(N_A) \rp$, where $N = A \lp \frac{\pi_{\min}}{2\pi_{\min}- A} \rp n$. 
\end{lemma}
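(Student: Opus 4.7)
The starting point is equation \eqref{eq:multi_step} from Lemma~\ref{lemma:one_step}, which gives for every $z \in \mc{Z}$
\[
\Ez{f_{\pi_n}} \leq M^* + \sum_{i=0}^s B_i,
\]
with the number of phases $s+1$ bounded above by $|\mc{Z}|$ by construction (see Example~\ref{example:phases}). Maximizing over $z$ on the left reduces the task to controlling $\sum_{i=0}^s B_i$ by the desired quantity $\mc{O}\bigl(|\mc{Z}| C / \pi_{\min} \cdot \max_z e_z(N_A)\bigr)$.

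I would then show that each individual $B_i$ is of order $\mc{O}\bigl((C/\pi_{\min}) \max_{z\in \mc{Z}} e_z(N_A)\bigr)$. Recall that $B_i = 2 e_{z_i}(N_{z_i,t_i}) + \kappa_i\, \rho_{t_i}$, where $\kappa_i$ is proportional to $C/\pi^*(z_i)$, with an extra $1/\pi_{\min}$ factor for $i \geq 1$. The first summand is controlled via the monotonicity of $e_{z_i}$ from \assumpref{assump:uniform_deviation} together with a lower bound $N_{z_i,t_i} \geq N_A$. For $i = 0$ this lower bound is immediate from $z_0 \in \Zover$, which gives $N_{z_0,t_0} \geq \pi_{\min} n \geq N_A$. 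For $i \geq 1$ it follows by combining the inductive generalizations of \eqref{eq:t0_bound} and \eqref{eq:pi_t0_bound} already used at the end of Lemma~\ref{lemma:one_step}, yielding $N_{z_i,t_i} = t_i\, \pi_{t_i}(z_i) \geq A \cdot n\pi_{\min}/(2\pi_{\min} - A) = N_A$. The second summand $\rho_{t_i} = \sum_z \pi_{t_i}(z)\, e_z(N_{z,t_i})$ is a convex combination and can be bounded by $\max_z e_z(\min_z N_{z,t_i})$; here the forced-exploration rule in Line~\ref{algoline:forced} together with the sufficient-budget threshold \eqref{eq:ni} ensures that every $N_{z,t_i}$ is large enough for the monotonicity argument to give $\rho_{t_i} \leq \max_z e_z(N_A)$. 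Using $\pi^*(z_i) \geq \pi_{\min}$ finally absorbs $\kappa_i$ into the claimed $C/\pi_{\min}$ prefactor.

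Summing the $s+1 \leq |\mc{Z}|$ phase-wise bounds on $B_i$ then produces the stated rate. The main obstacle I foresee is propagating the per-phase lower bound on $\pi_{t_i}(z_i)$ from one phase to the next---essentially extending \eqref{eq:pi_t0_bound} from $i=0$ to arbitrary $i$---in a way that is compatible with the budget thresholds \eqref{eq:ni}. This is largely an inductive bookkeeping argument built on Lemma~\ref{lemma:forced_exploration} and the inductive scheme used at the end of the proof of Lemma~\ref{lemma:one_step}, but it demands care so that the $n_i$ remain finite and the final bounds add up with the correct dependence on $|\mc{Z}|$ and $\pi_{\min}$.
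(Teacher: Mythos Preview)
Your proposal is correct and follows essentially the same route as the paper: start from the phase-wise decomposition \eqref{eq:multi_step}, bound each $B_i$ by lower-bounding every $N_{z,t_i}$ via the inductive generalization of Lemma~\ref{lemma:forced_exploration} (which yields $\min_z \pi_{t_i}(z) \geq A_i \geq A$ and $t_i \geq n\pi_{\min}/(2\pi_{\min}-A)$, hence $N_{z,t_i} \geq N_A$), apply monotonicity of $e_z$, and sum over the at most $|\mc{Z}|$ phases.

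One small point worth tightening: to land on the stated $C/\pi_{\min}$ prefactor rather than $C/\pi_{\min}^2$, the paper does not work from the literal expression for $B_i$ in \eqref{eq:multi_step} (which carries the extra $1/\pi_{\min}$ you flagged) but instead bounds the coefficient in front of $\rho_{t_i}$ directly by $4C/A_i + 2$, using $\pi_{t_i}(z_i) \geq A_i$ from the inductive version of \eqref{eq:pi_t0_bound}; since $A_i \geq A \geq \pi_{\min}/2$ by hypothesis, this gives $c_i \leq 8C/\pi_{\min} + 2$. If you instead plug $\pi^*(z_i) \geq \pi_{\min}$ into the displayed form of $B_i$ with its explicit $1/\pi_{\min}$ factor, you pick up one power of $\pi_{\min}$ too many.
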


\begin{proof}
First note that for all $t_i$, $i=1,2,\ldots, s$, we have $\rho_t \leq \max_{z \in \mc{Z}} e_z(N_{z,t})$. Now, we note the following: for any $i = 0, \ldots s$, we have $\min_{z \in \mc{Z}} \pi_{t_i}(z) \geq A_i \geq A$. Also for all $i=0,\ldots s$, we also have trivially, using~\eqref{eq:t0_bound},  $t_i \geq N_A \coloneqq \pi_{\min}/(2\pi_{\min}-A) n$. Together these two statements imply the following: 
\begin{itemize}
    \item $B_0 \leq c_0 \max_{z \in \mc{Z}} e_z(N_{z,t_0}) \leq c_0 \max_{z \in \mc{Z}} e_z(N_A) =\lp \frac{4C}{\pi_{\min}} + 2 \rp \max_{z \in \mc{Z}} e_z(N_A)$.
    
    \item Similarly, for $i \geq 1$ we have  $B_i \leq c_i \max_{z \in \mc{Z}} e_z(N_{z,t_i}) \leq c_i \max_{z \in \mc{Z}} e_z(N_A) \leq \lp \frac{4C}{A_i} + 2 \rp \max_{z \in \mc{Z}} e_z(N_A)$. 
\end{itemize}
Combining the above two points, we get the following 
\begin{equation*}
   \sum_{i=1}^{s} B_i \leq \sum_{i=1}^{s} \lp \frac{4C}{A_{i-1}} + 2 \rp \max_{z \in \mc{Z}} e_z(N_A) \; 
\end{equation*}
which, if $A \geq \pi_{\min}/2$, implies 
\begin{equation*}
\max_{z \in \mc{Z}} \Ez{f_{\pi_n}} - M^* = \mc{O} \lp \frac{ |\mc{Z}| C}{\pi_{\min}} \max_{z \in \mc{Z}} e_z(N_A) \rp
\end{equation*}
as required. Note that the above result holds under the assumption that $n$ is large enough to ensure that the conditions in~\eqref{eq:ni} is satisfied for all $0 \leq i \leq s$. Since $s \leq m-1$, a sufficient condition for this is that the condition in~\eqref{eq:ni} is satisfied for all $0 \leq i \leq m-1$. 
\end{proof}
\subsubsection{Knowledge of parameter $C$}
\label{appendix:parameterC}
In our analysis above, we did not impose any condition on the forced exploration parameter $\xi$; instead we used knowledge of the parameter $C$ from Assumption~\ref{assump:ideal_case3}.  We now show that if  for some $0<\xi <1$, it is known that $\lim_{N \to \infty} \max_{z \in \mc{Z}} \frac{e_z(N^\xi)}{N^{\xi-1}} = 0$, then we can remove the $C$ dependent term from~\eqref{eq:ucb} and obtain the same guarantees for $\Aopt$ as in Theorem~\ref{theorem:regret_simple}.

The proof will follow the same outline as in the previous section, and to avoid repetition, we obtain a result analogous to Lemma~\ref{lemma:over_represented}.  
In particular,  with the same definition of $t_0$ and $z_0$ as in Lemma~\ref{lemma:over_represented} and Lemma~\ref{lemma:forced_exploration}, the following is true at time $t_0$:
\begin{align*}
    \Ez{f_{\pi_{t_0}}} & \leq \Ehatz[t_0]{z, f_{\pi_{t_0}}} + e_{z}(N_{z, t_0}) \\
    & \leq \underbrace{\Ehatz[t_0]{z, \hat{f}_{t_0}} + e_{z}(N_{z, t_0})}_{ = U_{t_0}(z, \hat{f}_{t_0})}  + \frac{2C}{\pi_{t_0}(z)}\rho_{t_0} \\
    & \stackrel{(a)}{\leq} U_{t_0}(z_0, \hat{f}_{t_0}) +  \frac{2C}{\pi_{t_0}(z)}\rho_{t_0}  \\
    & \leq \Ez[z_0]{f_{\pi_{t_0}}}+ 2 e_{z}(N_{z_0, t_0}) + 2C \rho_{t_0} \lp   \frac{1}{\pi_{t_0}(z_0)}+ \frac{1}{\pi_{t_0}(z)} \rp \\ 
    &\stackrel{(b)}{\leq}  \Ez[z_0]{f_{\pi_{t_0}}} +\lp \frac{2 C}{\pi_{\min}}  + \frac{2C}{t_0^{\xi-1}} \rp \rho_{t_0}\\ 
    & \stackrel{(c)}{\leq} \Ez[z_0]{f_{\pi_{t_0}}} +\underbrace{\lp \frac{2 C}{\pi_{\min}}  + \frac{2C}{t_0^{\xi-1}} \rp \max_{z \in \mc{Z}} e_z ( t_0^{\xi} ) }_{\coloneqq B_0'} \\ 
\end{align*}
In the above display, \\
\tbf{(a)} uses the fact that at time $t_0$, the attribute $z_0$ was selected by maximizing $U_{t_0}(z, \hat{f}_{t_0})$, \\
\tbf{(b)} uses the fact that $\pi_{t_0}(z_0) > \pi_{\min}$ and  $\pi_{t_0}(z) \geq t_0^{\xi-1}$ for all $z \neq z_0$ since the forced exploration step was not invoked at time $t_0$, and \\
\tbf{(c)} uses the fact that $\rho_{t_0} \leq \max_{z \in \mc{Z}} e_{z}(N_{z,t_0}) \leq \max_{z \in \mc{Z}} e_z(t_0^\xi)$ due to the monotonicity of $e_z(N)$ and the fact that $N_{z, t_0} \geq t_0^{\xi}$ for all $z \in \mc{Z}$ at time $t_0$. 

Now, to continue with the rest of the proof as in Appendix~\ref{proof:excess_risk1}, we need that the term $B_0'$ converges to zero as $n$~(and hence, $t_0$) goes to infinity. The first term of $B_0'$, i.e., $2C \max_{z} e_z(t_0^\xi)/\pi_{\min}$, converges to zero from the condition that the uniform confidence bound converges to zero as the number of samples, $t_0^\xi$, goes to infinity; while the second term, $\frac{2C}{t_0^{\xi-1}} \max_{z}e_z(t_0^\xi)$ converges to zero from the assumption on $\xi$ made at the beginning of this section. Using this fact, we can then proceed as in Lemmas~\ref{lemma:forced_exploration},~\ref{lemma:one_step},~and~\ref{lemma:final_step} to get the final result.

\subsection{Relation to Active Learning in Bandits}
\label{appendix:alb}

Our formulation of the minimax fair classification problem diverges from the Active learning in bandit~(ALB) problem due to the fact that drawing samples from one attribute can reduce the performance of another.  To make the discussion concrete, consider an ALB problem with two distributions $Q_1 \sim N(\mu_1, \sigma_1^2)$ and $Q_2 \sim N(\mu_2, \sigma_2^2)$ for $\mu_1, \mu_2 \in \reals$ and $\sigma_1^2, \sigma_2^2 \in (0,\infty)$. Let  $N_{i,t}$ denote the number of samples allocated by an agent to $Q_i$ for $i=1,2$ and $t\geq 1$. Then, we can construct high probability confidence sequences around $\hat{\mu}_{i,t}$ (the empirical counterparts to $\mu_i$ at time $t$) of non-increasing~(in $t$) lengths $e_{i,t}$ such that $|\hat{\mu}_{i,t}-\mu_i|\leq e_{i,t}$. If at time $t$, the agent decides to draw a sample from arm $1$, it would not change the quality of its estimate of $Q_2$ at time $t+1$ and beyond.  

On the other hand, let us  consider Instance~2 of \toymodel$(\Mu)$~(introduced in \defref{def:toymodel}) with $\mc{F}$ being the set of all  linear classifiers passing through origin. This is illustrated in \figref{fig:alb}, where the circles denote the one-square-deviation region around the mean values. The shaded circles correspond to the attribute $Z=v$, and the circles with dashed boundaries correspond to the label $Y=0$. For example, $P_{X|YZ}(\cdot|Y=0,Z=u)$ is denoted by the top left circle.
Suppose at some time $t_1$, the current classifier $\hat{f}_{t_1}$~(represented by the solid gray line passing through origin in \figref{fig:alb}) has low~(resp. high) accuracy for the protected attribute $Z=v$~(resp. $Z=u$).  To remedy this, the agent may draw more samples from the distribution of attribute $Z=v$ to skew the training dataset distribution towards $Z=v$. This would result in the updated classifier $\hat{f}_{t_2}$~(the dashed gray line in \figref{fig:alb}) at some time $t_2>t_1$ to achieve high accuracy for the attribute $Z=v$. But this increased accuracy for $Z=v$ comes at the cost of a reduction in the prediction accuracy for the attribute $Z=u$, as shown in \figref{fig:alb}. 

The above discussion highlights the key distinguishing feature of our problem from the prior work in ALB. Because of this distinction, the existing analyses of the ALB algorithms do not carry over directly to our case. As a result, to quantify the performance of \algoref{algo:algo1full}, we device a new `multi-phase' approach which is described in  detail in  Appendix.~\ref{proof:excess_risk1}.

\begin{figure}[hbt]
    \centering
    \includegraphics[scale=0.4]{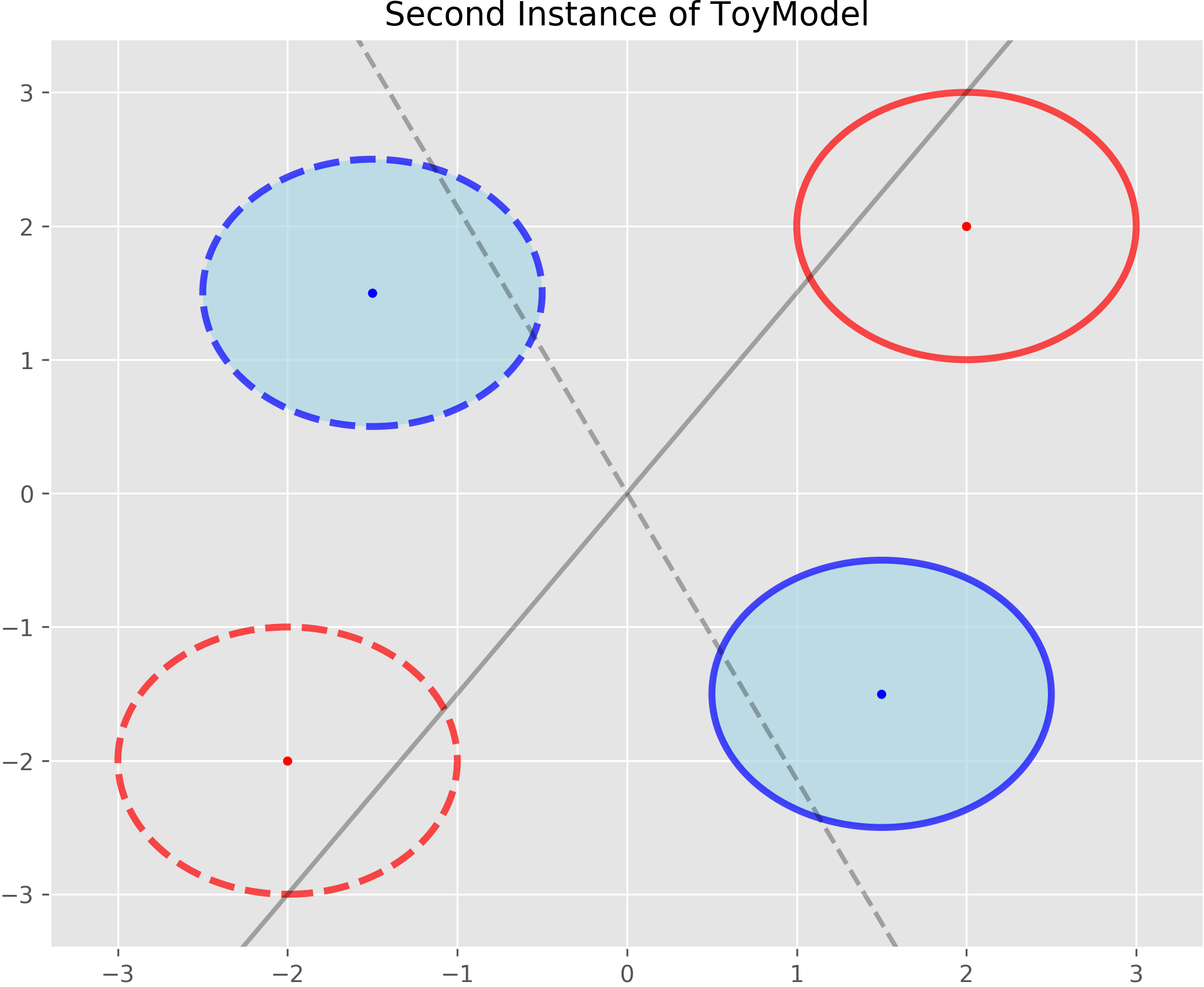}
    \caption{Figure demonstrates the second instance of \toymodel~introduced in Section~\ref{subsec:motivating_example}. Here the circles denote the one-standard-deviation regions of the distributions of features $X$ conditioned on $Y$ and $Z$. Shaded circles correspond to $Z=u$ and circles with dashed boundaries represent $Y=0$. The solid gray line represents a possible  linear classifier that may be learned if the dataset has fewer examples from $Z=v$. If an adaptive algorithm addresses this by populating the data-set with numerous examples from $Z=v$, a possible updated linear classifier is shown by the dashed gray line. This demonstrates the main distinguishing feature of our problem w.r.t. the active learning in bandits~(ALB) problem: \emph{allocating samples to improve the performance on one attribute can have an adverse effect on the performance of the resulting classifier on other attributes}.}
    \label{fig:alb}
\end{figure}

\newpage 
\section{Details of \texorpdfstring{\epsg}{epsilon-greedy} strategy}
\label{appendix:epsilon_greedy}

The \epsg strategy originally proposed by \citet{abernethy2020adaptive} proceeds as follows at time $t$: with probability $1-\epsilon$, draw a pair $(X_t, Y_t)$ from the distribution $P_{z_t}$ where $z_t$ is the attribute with largest empirical loss; and with probability $\epsilon$ draw $(Z_t, X_t, Y_t)$ from the population distribution $P_{XYZ}$. If $\pi_Z(\cdot)$ denotes the marginal of the population distribution over $\mc{Z}$, i.e., $\pi_{Z}(z) = \sum_{x, y} P_{XYZ}(x, y, z)$; then the \epsg strategy can be equivalently described as follows: \\
For any $t=1,2,\ldots$, do 
\begin{itemize}
    \item Draw a random variable $Q_t \sim \ttt{Bernoulli}(\epsilon)$. 
    \item If $Q_t = 1$, draw $Z_t \sim \pi_Z$. Else set $Z_t \in \argmax_{z \in \mc{Z}} \hat{L}_t(z, \hat{f}_t)$. 
    \item Draw a pair of samples from the distribution $P_{Z_t}$. 
\end{itemize}

For simplicity of presentation, we assume that $\pi_Z$ is the uniform distribution, i.e., $\pi_Z(z) = 1/m$ for all $z \in \mc{Z}$. However, our theoretical results can be easily generalized to the case of any $\pi_Z$ which places non-zero mass on all $z \in\mc{Z}$. 


\subsection{Analysis of the \texorpdfstring{\epsg}{epsilon-greedy} Sampling Strategy}
\label{subsec:analysis_epsilon_greedy}

The \epsg strategy proposed in~\citet{abernethy2020adaptive} proceeds as follows: for any $t \geq 1$, it maintains a candidate classifier $\hat{f}_t$ along with its empirical loss on $m$ validation sets corresponding to the attributes $z \in \mc{Z}$. At each round $t$, the algorithm selects an attribute $z_t$ as follows: with probability $\epsilon$, $z_t$ is drawn from a fixed distribution 
over $\mc{Z}$, and with probability $1-\epsilon$, it is set to the distribution with the largest empirical loss. Having chosen $z_t$, the algorithm draws a pair of samples from $P_{z_t}$ and updates the training and validation sets, as well as the classifier $\hat{f}_t$. This process is continued until the sampling budget is exhausted. 

\citet{abernethy2020adaptive} present two theoretical results on the performance of their \epsg strategy. The first one~\citep[Theorem~1]{abernethy2020adaptive}, is an asymptotic consistency result derived under somewhat restrictive assumptions: $\mc{X} = \mbb{R}$, $\mc{F}$ is the class of threshold classifiers, and the learner has access to an oracle which returns the exact loss, $L(z, f_{\pi_t})$, for every $\pi_t$. In their second result~\citep[Theorem~2]{abernethy2020adaptive}, they analyze the greedy version (i.e.,~$\epsilon=0$) of the algorithm with $|\mc{Z}| = 2$, and show that at time $n$, either the excess risk is of $\mc{O} \big(\max_{z \in \mc{Z}} \sqrt{ \nicefrac{ 2 d_{VC}\lp \loss \circ \mc{F} \rp \log (2/\delta)}{N_{z,n}}}\big)$, or the algorithm draws a sample from the attribute with the largest loss.
However, due to the greedy nature of the algorithm analyzed, there are no guarantees that $N_{z,n} = \Omega(n)$, and thus, in the worst case the above excess risk bound is $\mc{O}(1)$. 

We now show how the techniques we developed for the analysis of \algoref{algo:algo1full} can be suitably employed to study the \epsg strategy under the same assumptions as in Theorem~\ref{theorem:regret_simple}. In our results, we assume that the distribution according to which \epsg selects an attribute with probability $\epsilon$ at each round $t$ is uniform. This is just for simplicity and all our results hold for any distribution which places non-zero mass on all $z \in \mc{Z}$.

We first present an intuitive result that says if the \epsg strategy is too {\em exploratory} ($\epsilon$ is large), the excess risk will not converge to zero. We present an illustration of this result using Instance I of the $\toymodel$ introduced in Section~\ref{subsec:motivating_example} in Figure~\ref{fig:over_exploration} in Appendix~\ref{appendix:epsilon_greedy}. 


\begin{proposition}
\label{prop:epsilon_greedy}
If the \epsg strategy is implemented with $\epsilon> m \pi_{\min} := |\mc{Z}| \min_{z\in\mc{Z}} \pi^*(z)$, then its excess risk is $\Omega \lp 1 \rp$ with probability at least $1-\delta$ for $n$ large enough (see Equation~\ref{eq:tau0} in Appendix~\ref{appendix:over_exploration} for the precise condition on $n$).
\end{proposition}

\emph{Proof Outline.} 
The result follows from two observations: 
\tbf{(i)} For all $t$ larger than a term $\tau_0(\delta, \epsilon, \pi_{\min})$, defined in~\eqref{eq:tau0}, with  probability at least $1-\delta$, for any $z \in \mc{Z}$, we must have $\pi_t(z) \geq \frac{\pi_{\min} + \epsilon/m}{2}$, and
\tbf{(ii)} Suppose $\pi_{\min}$ is achieved by some attribute $z_{\min}$, i.e.,~$\pi^*(z_{\min}) = \pi_{\min}$. Then, the first statement implies that the excess risk of the \epsg algorithm is at least $\min_{\pi: \pi(z_{\min}) \geq (\pi_{\min}+\epsilon/m)/2}\; L(z_{\min}, f_{\pi}) - M^*$, which is strictly greater than zero by \assumpref{assump:ideal_case}. The detailed proof is reported in Appendix~\ref{appendix:over_exploration}. 
\hfill \qedsymbol

According to Proposition~\ref{prop:epsilon_greedy}, for the excess risk to converge to zero, the \epsg strategy must be implemented with $\epsilon \leq \pi_{\min}/m$. We next derive an upper-bound on the excess risk of \epsg, similar to the one in Theorem~\ref{theorem:regret_simple} for \algoref{algo:algo1full}. 

\begin{theorem}
\label{theorem:epsilon_greedy}
Let Assumptions~\ref{assump:ideal_case}-\ref{assump:ideal_case3} hold and \epsg implemented with $0<\epsilon<m\pi_{\min}$. If the query budget $n$ is sufficiently large (see Equations~\ref{eq:n0eps1} and~\ref{eq:n0eps2} in Appendix~\ref{appendix:over_exploration} for the exact requirements), then for any $0<\beta< \frac{m\pi_{\min}}{\epsilon}-1$, with probability at least $1-2\delta$, we have

\vspace{-0.15in}
\begin{small}
\begin{equation}
\label{eq:epsg_excess_risk}
\mc{R}_n(\text{\epsg})= \mc{O} \Big(\frac{|\mc{Z}|C}{q} \max_{z \in \mc{Z}} \; e_z(N_q) \Big), 
\end{equation}
\end{small}
\vspace{-0.175in}

where 
$q = \epsilon/m$, $N_q = qn\big(\pi_{\min} - q (1+\beta)\big)(1-\beta)$, and $C$ is the parameter introduced in Assumption~\ref{assump:ideal_case3}. 
\end{theorem}


\begin{remark}
\label{remark:epsg_bound}
The assumption on $\epsilon$ in Theorem~\ref{theorem:epsilon_greedy} implies that $q = \epsilon/m < \pi_{\min}$, and as a result $qn\pi_{\min} < \pi_{\min}^2n$. Given this and the monotonicity of the size of the confidence interval $e_{z}(N)$ w.r.t.~$N$, we may conclude that the bound on the excess risk of Alg.~\ref{algo:algo1full} (Equation~\ref{eq:excess-risk-Aopt}) is always tighter than the one for \epsg strategy (Equation~\ref{eq:epsg_excess_risk}). Note that for the classifiers with finite VC-dimension, the bound in~\eqref{eq:epsg_excess_risk} is of  the same order in $n$ as the one in~\eqref{eq:excess-risk-Aopt}, but with a larger leading constant. 
\end{remark}


\subsection{Proof of Proposition~\ref{prop:epsilon_greedy}}
\label{appendix:over_exploration}
\begin{figure}[ht]
    \centering
    \includegraphics[width=3in]{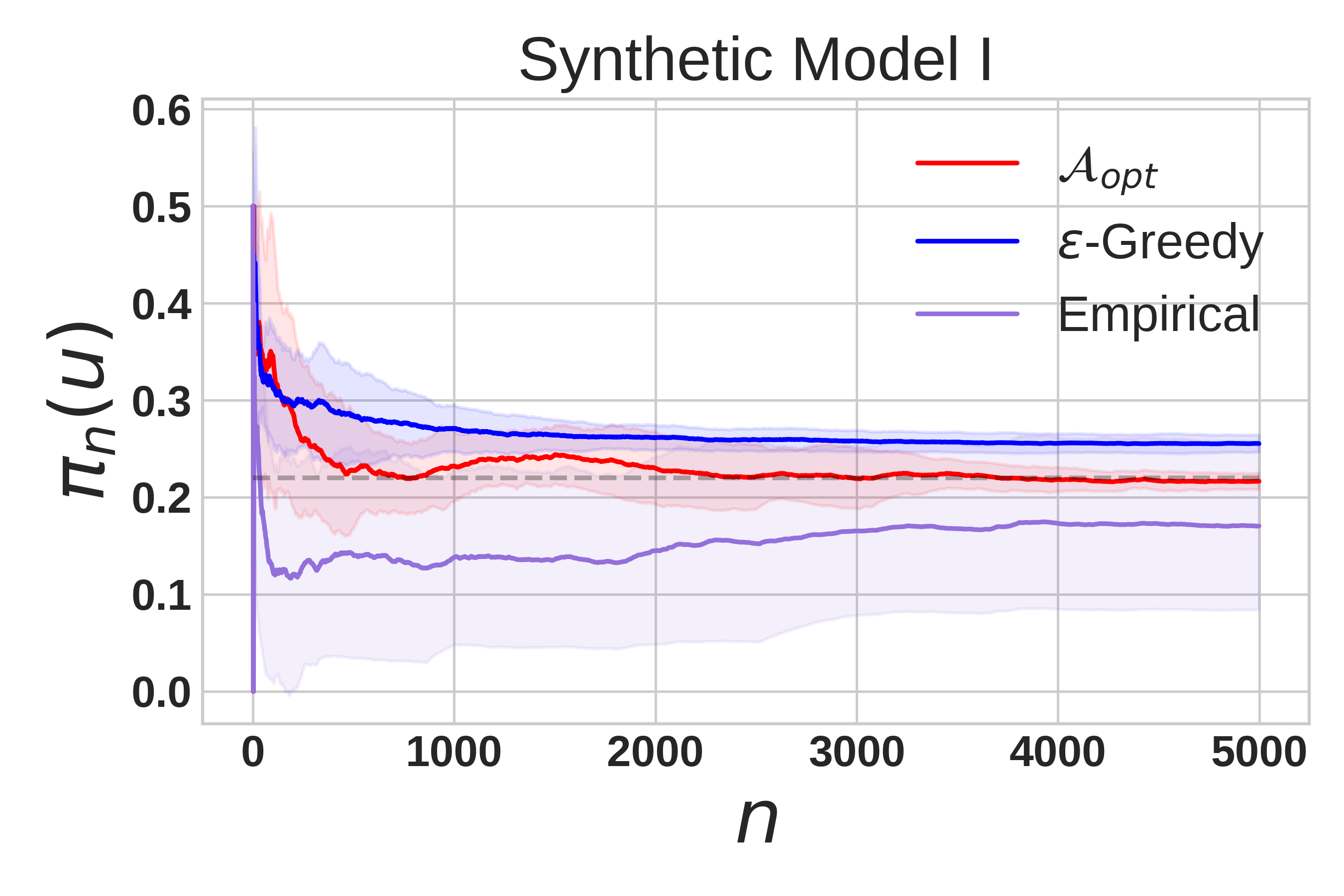}
    \caption{This figure shows an instance when the \epsg strategy is over-exploratory which results in the empirical mixture distribution $\pi_t$ being strictly away from $\pi^*$, and thus resulting in $\Omega(1)$ excess risk even as $n$ goes to infinity. For the \epsg strategy in this particular figure, we used $\epsilon=.5$ with $m=2$ and $\pi_{\min} = \min_{z } \pi^*(z) \approx 0.23$. This provides a numerical demonstration of the statement of Proposition~\ref{prop:epsilon_greedy}. Note that the $\Aopt$ strategy is more resilient to the wrong choice of the exploration parameter $c_0$: in this figure we used $c_0=1.0$~(much larger than the $0.1$ value used in experiments) and the corresponding $\pi_n(u)$ for $\Aopt$ still eventually converges towards $\pi^*(u)$.}
    \label{fig:over_exploration}
\end{figure}

Fix any $z \in \mc{Z}$, and decompose $N_{z,t}$ into $N_{z,t}^{(0)} + N_{z,t}^{(1)}$ where, $N_{z,t}^{(1)} = \sum_{i=1}^{t} \indi{z_i=z}Q_i$ is the number of times up to round $t$ the attribute $z$ was queried due to the exploration step~(i.e., $Q_t=1$) of the \epsg algorithm. 

Now, define $M_{z,0} = 0$ and $M_{z,t} = M_{z,t-1} + Q_t \indi{\tilde{z}_t=z} - \epsilon/m$. Due to the fact that $Q_t$ and $\tilde{z}_t$ are independent, it is easy to check that $(M_{z,t})_{t \geq 0}$ forms a martingale sequence. For some given $\delta>0$, introduce the notation $\delta_t = \frac{6\delta}{m \pi^2 t^2}$ and $b_t = \sqrt{ \frac{ t \log \lp \nicefrac{2}{\delta_t} \rp}{2} }$. We then have the following:
\begin{align*}
    \sum_{t=1}^n P\lp M_{z,t} > b_{t} \rp  & \leq  \sum_{t=1}^n \lp \prod_{i=1}^t \mbb{E}\lb \exp \lp \lambda Q_i \indi{\tilde{z}_i=z} - \epsilon/m \rp \middle \vert (Q_j)_{j=1}^{i-1} \rb \rp e^{-\lambda b_t}\\
    & \leq \sum_{t =1}^n e^{t \lambda^2/8 - \lambda b_t} \stackrel{(a)}{=} \sum_{t=1}^n e^{-2b_t^2/t} 
     \leq \sum_{t=1}^\infty \frac{\delta_t}{2}  = \frac{\delta}{2m}. 
\end{align*}
In the above display, \tbf{(a)} follows by setting $\lambda = 4b_t/t$. By repeating the same argument with the martingale sequence $\{-M_{z,t} : t \geq 0\}$, we get that $P\lp |M_{z,t}| \leq b_t, \forall t \geq 1, \; \forall z \in \mc{Z} \rp \geq 1- \delta$. In other words, it implies that, with probability at least $1-\delta$, we have the following: 
\begin{equation*}
    N_{z,t}^{(1)} \geq \frac{t}{\epsilon} - b_t, \quad \forall \; t \geq 1, \; \forall z \in \mc{Z}. 
\end{equation*}
This implies that $N_{z,t} \geq \nicefrac{\epsilon}{m} - b_t$ for all $z, t$. In particular, if $z_{\min}$ is the attribute such that $\pi^*(z_{\min}) = \min_{z \in \mc{Z}} \pi^*(z) = \pi_{\min}$, then this implies that $\hat{\pi}_t(z_{\min}) \geq \epsilon/m - b_t/t$ for all $t, z$. Using the fact that $\lim_{t \to \infty} b_t/t = 0$,  we  define $\tau_0 = \tau_0(\delta, \epsilon, \pi_{\min})$, as follows:
\begin{align}
\label{eq:tau0}
    \tau_0(\delta, \epsilon,\pi_{\min}) \coloneqq \min \left \{ t \geq 1 : \sqrt{ \frac{ \log\lp \nicefrac{m \pi^2 t^2}{3\delta} \rp}{2t}} \leq \frac{\epsilon - m\pi_{\min}}{2m} \right\}. 
\end{align}
This implies that with probability at least $1-\delta$, $\pi_t(z_{\min}) \geq (\pi_{\min}+\epsilon/m)/2$ for all $t \geq \tau_0$. Hence the excess risk of the \epsg algorithm under these conditions is at least $\min_{\pi: \pi(z_{\min}) \geq (\pi_{\min}+\epsilon/m)/2} \mbb{E}_{z_{\min}} \lb \loss \lp f_{\pi}, X, Y \rp \rb - M^*$, which is an $\Omega(1)$ term due to \assumpref{assump:ideal_case}.

\subsection{Proof of Theorem~\ref{theorem:epsilon_greedy}}
\label{appendix:proof_epsilon_greedy}
First we partition the interval $[1,n]_\N$ into $\Tau_0$ and $\Tau_1$, where $\Tau_j = \{ t: Q_t=j\}$ for $j=0,1$. In other words, $\Tau_0$ denotes the times at which the \epsg strategy is \emph{greedy} while $\Tau_1$ denotes the times at which $Q_t=1$ and the \epsg strategy is exploratory. 

Then we introduce the following terms: 
\begin{itemize}
    \item Define $\Zover$ and $\Zunder$ as in the proof of Theorem~\ref{theorem:regret_simple}. That is, $\Zover = \{z \in \mc{Z}: \pi_n(z) > \pi^*(z)\}$ and $\Zunder = \mc{Z} \setminus \Zover$. We assume, as before, that $\Zover \neq \emptyset$. Then,  we define $t_0 \coloneqq \max \{ t \in \Tau_0: z_t \in \Zover \}$, and denote the corresponding attribute (queried at time $t_0$) with $z_0$. 
    
    \item Then, by appealing to Lemma~\ref{lemma:increase}, we note that there must exist a nonempty $\Zover^{(1)} \subset \Zunder$ such that $\pi_n (z) > \pi_{t_0}(z)$ for all $z\in \Zover^{(1)}$. Using this define $t_1 = \max \{ t \in \Tau_0: z_t \in \Zover^{(1)}\}$ and use $z_1$ to denote the corresponding attribute queried at time $t_1$. We can proceed in this way to define $\{ (t_j, z_j): 2 \leq j \leq s\}$ for an appropriate $s \leq m-1$~(such that $t_s=n$). 
\end{itemize}

First, we define the $1-\delta$ probability event $\Omega_3$ as follows~(see proof of Proposition~\ref{prop:epsilon_greedy} in Appendix~\ref{appendix:over_exploration} for derivation): 
\begin{align}
\label{eq:omega3}
    \Omega_3 = \left \{ \left \lvert N_{z,t}^{(1)} - \frac{\epsilon}{m} t \right \rvert \leq \underbrace{\sqrt{ \frac{ t \log(m \pi^2 t^2/3\delta)}{2} }}_ {\coloneqq b_t} \right \}. 
\end{align}
For the rest of this proof, we will assume that the $(1-2\delta)$ probability event $\cap_{i=1}^3 \Omega_i$ holds, where $\Omega_1$ and $\Omega_2$ are the same uniform deviation results that were used in defining the UCB in~\eqref{eq:ucb}.  

Then we have the following: 

\begin{lemma}
\label{lemma:eps_greedy1}
 At time $t_0$, we have $\Ez{f_{\pi_{t_0}}} \leq M^* + \tilde{B}_0$ where $\tilde{B}_0  \coloneqq 4 \lp 1 + \nicefrac{C}{q - \beta_{t_0}/t_0} \rp \max_{z \in \mc{Z}} e_{z, N_{z, t_0}}$ and $q=\epsilon/m$.  
\end{lemma}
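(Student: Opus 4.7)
The plan is to mirror the proof of Lemma~\ref{lemma:over_represented} in Appendix~\ref{proof:excess_risk1}, with two substantive adaptations to the \epsg dynamics. First, at time $t_0 \in \Tau_0$ the attribute $z_0$ is selected by the greedy rule $z_0 \in \argmax_{z \in \mc{Z}} \widehat{L}_{t_0}(z, \hat{f}_{t_0})$ rather than by a UCB, so the one-sided ``optimism'' inequality used previously must be replaced by a two-sided comparison through uniform convergence. Second, the lower bound on $\pi_{t_0}(z)$ that was previously supplied by the forced-exploration step of $\Aopt$ is now supplied by the concentration event $\Omega_3$ in~\eqref{eq:omega3}, which guarantees $N_{z,t_0} \ge N_{z,t_0}^{(1)} \ge q t_0 - b_{t_0}$ and hence $\pi_{t_0}(z) \ge q - b_{t_0}/t_0$ for every $z \in \mc{Z}$, where $q = \epsilon/m$.

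Fix any $z \in \mc{Z}$. First I would peel off $f_{\pi_{t_0}}$ using Assumption~\ref{assump:ideal_case3} (applicable because $\hat{f}_{t_0}$ is a $2\rho_{t_0}$-minimiser of $\mbb{E}_{\pi_{t_0}}[\ell(\cdot,X,Y)]$ by ERM on $\mc{D}_t$ together with event $\Omega_1$), then switch to the empirical loss on $\mc{D}_z$ using $\Omega_2$, giving
\begin{equation*}
L(z, f_{\pi_{t_0}}) \le \widehat{L}_{t_0}(z, \hat{f}_{t_0}) + e_z(N_{z,t_0}) + \tfrac{2C\,\rho_{t_0}}{\pi_{t_0}(z)}.
\end{equation*}
Applying the greedy rule to bound $\widehat{L}_{t_0}(z, \hat{f}_{t_0})$ by $\widehat{L}_{t_0}(z_0, \hat{f}_{t_0})$ and then undoing the previous two steps in reverse at $z_0$ yields the central chain
\begin{equation*}
L(z, f_{\pi_{t_0}}) \le L(z_0, f_{\pi_{t_0}}) + e_z(N_{z,t_0}) + e_{z_0}(N_{z_0,t_0}) + 2C\rho_{t_0}\Big(\tfrac{1}{\pi_{t_0}(z)} + \tfrac{1}{\pi_{t_0}(z_0)}\Big).
\end{equation*}

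The final task is to argue $L(z_0, f_{\pi_{t_0}}) \le M^*$, and this is where the \epsg case departs most sharply from the UCB one. Since $t_0$ is the last \emph{greedy} round in which some element of $\Zover$ is queried, any later query of $z_0$ must come from an exploratory round; under $\Omega_3$ the exploratory count of $z_0$ over $[t_0+1, n]$ is at most $q(n-t_0)+b_n+b_{t_0}$, so combining with $\pi_n(z_0) > \pi^*(z_0)$ yields $\pi_{t_0}(z_0) \ge \pi^*(z_0)$ up to an $\mc{O}(b_{t_0}/t_0)$ deficit. Under the budget conditions of Thm.~\ref{theorem:epsilon_greedy} this deficit is of the same order as $\max_{z} e_z(N_{z,t_0})$, and the monotonicity condition in Assumption~\ref{assump:ideal_case} then delivers $L(z_0, f_{\pi_{t_0}}) \le M^*$ up to a correction of the same order. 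Substituting this bound together with $\rho_{t_0} \le \max_z e_z(N_{z,t_0})$ and the $\Omega_3$ lower bound $\pi_{t_0}(z) \ge q - \beta_{t_0}/t_0$ (with $\beta_{t_0}=b_{t_0}$) into the chain above gives the stated form $M^* + \tilde{B}_0$. The hard part is precisely this last step: unlike in the UCB analysis, we do not have $\pi_{t_0}(z_0) > \pi^*(z_0)$ for free, since post-$t_0$ exploratory rounds can inflate $\pi_n(z_0)$ without inflating $\pi_{t_0}(z_0)$, so the monotonicity deficit must be traded off against the exploration slack $b_{t_0}/t_0$ using both $\Omega_3$ and the budget assumptions.
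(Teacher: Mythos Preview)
Your chain of inequalities is the paper's: replace $f_{\pi_{t_0}}$ by $\hat f_{t_0}$ via Assumption~\ref{assump:ideal_case3}, pass to the empirical loss on $\mc{D}_z$, apply the greedy selection rule $\widehat L_{t_0}(z,\hat f_{t_0})\le\widehat L_{t_0}(z_0,\hat f_{t_0})$, and then unwind both steps at $z_0$; the $\Omega_3$ lower bound $\pi_{t_0}(z)\ge q-b_{t_0}/t_0$ is exactly what the paper uses to produce the factor $C/(q-b_{t_0}/t_0)$ in $\tilde B_0$.

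Where you go further than the paper is the last step $L(z_0,f_{\pi_{t_0}})\le M^*$. The paper's displayed chain actually terminates at $L(z_0,f_{\pi_{t_0}})+\tilde B_0$ and never argues the passage to $M^*$; it is tacitly reusing the $\Aopt$ template of Lemma~\ref{lemma:over_represented}, where $\pi_{t_0}(z_0)\ge\pi_n(z_0)>\pi^*(z_0)$ was automatic because no element of $\Zover$ is queried after $t_0$. You are right that this is not free once $t_0$ is restricted to $\Tau_0$, since exploratory rounds after $t_0$ can still hit $z_0$. Your bound on the post-$t_0$ exploratory count via $\Omega_3$, together with $q<\pi_{\min}\le\pi^*(z_0)$, in fact gives
\[
\pi_{t_0}(z_0)\;>\;\pi^*(z_0)\;+\;(\pi^*(z_0)-q)\,\tfrac{n-t_0}{t_0}\;-\;\tfrac{b_n+b_{t_0}}{t_0},
\]
so whenever the positive middle term dominates the slack you get $\pi_{t_0}(z_0)>\pi^*(z_0)$ and monotonicity delivers $L(z_0,f_{\pi_{t_0}})<M^*$ cleanly. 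The soft spot in your sketch is the phrase ``up to a correction of the same order'' for the borderline regime: Assumption~\ref{assump:ideal_case} supplies only monotonicity and continuity, not a Lipschitz modulus, so a small deficit in $\pi_{t_0}(z_0)$ cannot be converted into an $\mc{O}\big(\max_z e_z(N_{z,t_0})\big)$ overshoot of $M^*$ without additional argument. The paper's own proof is silent on exactly this point.
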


\begin{proof}
\begin{align*}
    \Ez{f_{\pi_{t_0}}} & \leq \Ehatz[t_0]{z, f_{\pi_{t_0}}} + 2e_{z}(N_{z, t_0}) \\
    & \leq \Ehatz[t_0]{z, \hat{f}_{t_0}} + 2e_{z}(N_{z, t_0})  + \frac{2C}{\pi_{t_0}(z)}\rho_{t_0} \\
    & \leq \Ehatz[ t_0]{ z_0, \hat{f}_{t_0}} + 2e_{z}(N_{z, t_0})  + \frac{2C}{\pi_{t_0}(z)}\rho_{t_0} \\
    & \leq \Ez[z_0]{f_{\pi_{t_0}}}+ 2\lp e_{z}(N_{z, t_0})  + e_{z_0}( N_{z_0, t_0}) \rp + 2C \rho_{t_0} \lp   \frac{1}{\pi_{t_0}(z_0)}+ \frac{1}{\pi_{t_0}(z)} \rp \\ 
    & \leq \Ez[z_0]{f_{\pi_{t_0}}}+ 4 \lp 1 + \frac{C}{\epsilon/m - b_{t_0}/t_0} \rp \max_{z \in \mc{Z}} e_{z}( N_{z, t_0})\\ 
    &  = \Ez[z_0]{f_{\pi_{t_0}}}+ \tilde{B}_0. 
\end{align*}
\end{proof}

\begin{lemma}
\label{lemma:eps_greedy2}
Suppose $n$ is large enough to satisfy the conditions in~\eqref{eq:n0eps2} for some fixed $0<\beta < \pi_{\min}/q - 1$. Then we have $\tilde{B}_0 \leq 4 \lp 1 + \nicefrac{C}{q(1-\beta)} \rp \max_{z \in \mc{Z}} e_{z}( N_q)$, where $N_q \coloneqq n q\lp \pi_{\min} - q(1+\beta) \rp  (1-\beta)$. 
\end{lemma}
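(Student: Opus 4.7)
My plan is to reduce the bound on $\tilde{B}_0$ to two separate quantitative controls under the high-probability event $\Omega_3$: first, that $b_{t_0}/t_0 \leq q\beta$, which controls the denominator inside $\tilde{B}_0$ and yields $q - b_{t_0}/t_0 \geq q(1-\beta)$; and second, that $\min_{z \in \mc{Z}} N_{z,t_0} \geq N_q$, which combined with the monotonicity of each $e_z(\cdot)$ gives $\max_z e_z(N_{z,t_0}) \leq \max_z e_z(N_q)$. Substituting both into the expression for $\tilde{B}_0$ from Lemma~\ref{lemma:eps_greedy1} immediately yields the stated bound, so the entire proof reduces to establishing these two inequalities for $n$ satisfying~\eqref{eq:n0eps2}.

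The first inequality is routine since $b_t = \Theta(\sqrt{t \log t})$, so $b_t/t \to 0$, and any lower bound of the form $t_0 = \Omega(n)$ combined with sufficiently large $n$ suffices. For the second inequality, I first apply $\Omega_3$ to write $N_{z,t_0} \geq N_{z,t_0}^{(1)} \geq qt_0 - b_{t_0} \geq qt_0(1-\beta)$ for every $z \in \mc{Z}$, where the last step uses the first inequality above. This reduces the claim to a lower bound of the form $t_0 \geq n(\pi_{\min} - q(1+\beta))$. The key observation for obtaining this is that $z_0 \in \Zover$ implies $N_{z_0,n} > n \pi^*(z_0) \geq n\pi_{\min}$, while the very definition of $t_0$ as the \emph{last} greedy query to an attribute in $\Zover$ means that no greedy query to $z_0$ occurs in $(t_0, n]$. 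Consequently, $N_{z_0,n} - N_{z_0,t_0}$ is precisely the number of exploration queries to $z_0$ in $(t_0, n]$, which by $\Omega_3$ is at most $N_{z_0,n}^{(1)} - N_{z_0,t_0}^{(1)} \leq q(n - t_0) + b_n + b_{t_0} \leq q(n-t_0) + 2 b_n$, using the monotonicity $b_{t_0} \leq b_n$.

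Rearranging $t_0 \geq N_{z_0,t_0} \geq n\pi_{\min} - q(n - t_0) - 2b_n$ gives
\[
t_0 \;\geq\; \frac{n(\pi_{\min} - q) - 2 b_n}{1 - q}.
\]
For $n$ satisfying~\eqref{eq:n0eps2}, the correction term $2b_n/(1-q)$ is at most $n q \beta$, and combining this with the trivial inequality $1/(1-q) \geq 1$ delivers $t_0 \geq n(\pi_{\min} - q(1+\beta))$. Substituting this into the bound $N_{z,t_0} \geq q t_0 (1-\beta)$ yields $N_{z,t_0} \geq n q (\pi_{\min} - q(1+\beta))(1 - \beta) = N_q$ for every $z \in \mc{Z}$, as required. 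The main obstacle is purely bookkeeping, namely specifying the two components of~\eqref{eq:n0eps2}: one ensuring $b_{t_0}/t_0 \leq q\beta$ (which has an implicit dependence on $t_0$, resolved by monotonicity of $b_t$ together with the preliminary crude bound $t_0 \geq n\pi_{\min}$ that also comes from $z_0 \in \Zover$), and the other ensuring $2b_n/(1-q) \leq n q \beta$. The hypothesis $\beta < \pi_{\min}/q - 1$ is precisely what guarantees $\pi_{\min} - q(1+\beta) > 0$, so that $N_q$ is strictly positive and the monotonicity step on $e_z$ is meaningful.
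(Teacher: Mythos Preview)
Your proposal follows the same two-step strategy as the paper: (i) control $q - b_{t_0}/t_0 \geq q(1-\beta)$, and (ii) lower-bound $\min_z N_{z,t_0}$ by $N_q$ via the exploration count $N_{z,t_0}^{(1)} \geq qt_0(1-\beta)$ together with a linear-in-$n$ lower bound on $t_0$. The final substitution into the expression for $\tilde{B}_0$ is identical.

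There is one slip. Your ``preliminary crude bound $t_0 \geq n\pi_{\min}$ that also comes from $z_0 \in \Zover$'' is not valid for the \epsg strategy. The argument you have in mind works for $\Aopt$, where $t_0$ is the last query to any $z\in\Zover$ and hence $N_{z_0,t_0}=N_{z_0,n}>n\pi^*(z_0)$. Here, however, $t_0$ is the last \emph{greedy} query to $\Zover$; exploration queries to $z_0$ can still occur in $(t_0,n]$, so $N_{z_0,t_0}$ may fall short of $N_{z_0,n}$ by roughly $q(n-t_0)$, and $t_0\geq n\pi_{\min}$ need not hold. Fortunately you do not actually need this crude bound: the inequality $t_0 \geq n(\pi_{\min} - q(1+\beta))$ that you derive in the body of the argument already suffices to force $b_{t_0}/t_0 \leq q\beta$ once $n$ is large enough, and this is precisely how the paper proceeds (condition~\eqref{eq:n0eps2} is exactly $n(\pi_{\min}-q(1+\beta)) \geq \tilde{n}_0(\beta)$).

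On the route to that lower bound, the paper is slightly more direct: it observes $N_{z_0,t_0} \geq N_{z_0,t_0}^{(0)} = N_{z_0,n}^{(0)} = N_{z_0,n} - N_{z_0,n}^{(1)} \geq n\pi^*(z_0) - (qn + b_n)$, which immediately gives $t_0 \geq n(\pi_{\min}-q)-b_n$ without any rearrangement. Your version instead bounds $N_{z_0,n}-N_{z_0,t_0}$ by $N_{z_0,n}^{(1)}-N_{z_0,t_0}^{(1)}$ and applies $\Omega_3$ at both endpoints, picking up a $2b_n$ rather than $b_n$ and then solving for $t_0$. Both are fine; the paper's decomposition just avoids the factor of $2$ and the algebraic step, so the constants in the requirement on $n$ match~\eqref{eq:n0eps1}--\eqref{eq:n0eps2} exactly rather than up to a constant.
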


Due to the definitions of event $\Omega_3$ and the time $t_0$, it must be the case that $N_{z_0, t_0} \geq N_{z_0, n}^{(0)} \geq \pi^*(z_0)n - N_{z_0, n}^{(1)} \geq \lp \pi^*(z_0) - \epsilon/m  = - b_n/n \rp n$. Next, we assume that $n$ is large enough, such that the following are satisfied for some $0<\beta$: 
\begin{itemize}
    \item 
First, we assume that $n$ is large enough to ensure that $b_n/n \leq \beta \epsilon/m$ for some $ 0 < \beta < m \pi_{\min}/\epsilon - 1$, i.e., 
\begin{align}
\label{eq:n0eps1}
    n \geq \tilde{n_0}(\beta) \coloneqq  \min \left \{ t \geq 1: \frac{\log t }{t} \leq \frac{ \beta^2 \epsilon^2}{m^2} - \frac{1}{2}\log\lp \nicefrac{m\pi^2}{3\delta} \rp \right \}.
\end{align}
This implies that $t_0 \geq n \lp \pi^*(z_0) - \epsilon/m(1+\beta) \rp \geq n \lp \pi_{\min} - \frac{\epsilon(1+\beta)}{m}\rp$. 

\item Next, we assume that $b_{t_0}/t_0 \leq \beta \epsilon/m$. A sufficient condition for this is 
\begin{align}
\label{eq:n0eps2}
n \lp \pi_{\min} - \frac{\epsilon(1+\beta)}{m} \rp \; \geq \; \tilde{n}_0(\beta) 
  \quad \Rightarrow \quad   n \; \geq \; \frac{ \tilde{n}_0(\beta)}{\pi_{\min} - \nicefrac{\epsilon(1+\beta)}{m}}.
\end{align}
\end{itemize}

Then, with the notation $N_q = n \lp \pi_{\min} - q (1+\beta) \rp \lp q(1-\beta)\rp$, where $q=\epsilon/m$, we have the following: 
\begin{equation*}
    \tilde{B}_0 = 4 \lp 1+ \frac{C}{q - b_{t_0}{t_0}} \rp \max_{z \in \mc{Z}} e_z(N_{z,t_0}) \leq 4 \lp 1+ \frac{C}{q(1-\beta)} \rp \max_{z \in \mc{Z}} e_z(N_q).
\end{equation*}
Now, proceeding in the same way as in Lemma~\ref{lemma:one_step} and Lemma~\ref{lemma:final_step}, we can define the terms $\tilde{B}_j$ for $j \geq 1$~(analogous to the terms $B_j$ introduced in Lemma~\ref{lemma:one_step}) to show that  with probability at least $1-2\delta$, the excess risk resulting from the \epsg strategy satisfies: 
\begin{equation*}
    \max_{z \in \mc{Z}} \Ez{f_{\pi_n}} - M^* \leq \sum_{j=0}^{s-1} \tilde{B}_j = \mc{O} \lp \frac{|\mc{Z}| C}{q (1-\beta)} \rp \max_{z \in \mc{Z}} e_z(N_q), 
\end{equation*}
as required.

\subsection{Comparison with \texorpdfstring{$\Aopt$}{A\_opt}}
The \epsg strategy differs from $\Aopt$ in two major ways: 
\begin{enumerate}
    \item The excess risk bound derived in Theorem~\ref{theorem:regret_simple} for $\Aopt$ is always tighter than the corresponding bound for \epsg strategy in Theorem~\ref{theorem:epsilon_greedy}. In particular, for the family of classifiers with finite VC dimension, both the algorithms achieve same convergence rates w.r.t. $n$, but the \epsg strategy has a larger leading constant. 
    
    \item From a practical point of view, the \epsg strategy is less robust to the choice of parameter $\epsilon$ as compared to the $\Aopt$ strategy. For instance, as shown in Figure~\ref{fig:over_exploration}, choosing a large value of $\epsilon$ may result in the mixture distribution~$(\pi_t)$ not converging to $\pi^*$, whereas even with much larger values of $c_0$, the $\pi_t$ from $\Aopt$ algorithm still eventually converges to $\pi^*$. 
\end{enumerate}



\section{Proof of Lower Bound }
\label{appendix:proof_lower} 

In this section, we first formally state the lower bound result and then present its proof.  We denote by $\mc{M}$ the class of $\toymodel$s introduced in Definition~\ref{def:toymodel}. We also define the following class of problems:

\begin{definition}
\label{def:Q}
Let $\mc{Q}$  denote the class of problems defined by the triplets $(\bm{\mu}, \mc{F}, \loss_{01})$, where $\bm{\mu} \in \mc{M}$ is an instance of the \toymodel, $\mc{F}$ is the class of linear classifiers in two dimensions, and $\loss_{01}$ is the $0-1$ loss. 
\end{definition}

For the function class $\mc{F}$, we know that $e_{z}(N) = \mc{O} ( \sqrt{ \log (n/\delta) / N} )$ for both $z \in \mc{Z} = \{u, v\}$, which implies that the expected excess risk achieved by both $\Aopt$ and \epsg strategies is of $\mc{O}\lp \sqrt{\log (n)/n} \rp$. We now prove  that this convergence rate (in terms of $n$) for this class of problems. 

\begin{proposition}
\label{prop:lower} 
Suppose $\mc{A}$ is any adaptive sampling scheme which is applied with a budget $n$ to a problem $Q \in \mc{Q}$ introduced in Definition~\ref{def:Q}. Then, we have 

\vspace{-0.2in}
\begin{small}
\begin{align}
    \max_{Q \in \mc{Q}}\; \mbb{E}_Q\lb \mc{R}_n \lp \mc{A} \rp  \rb = \Omega \lp 1/\sqrt{n} \rp. 
\end{align}
\end{small}
\vspace{-0.2in}
\end{proposition}

To prove this proposition, we consider two problem instances in the class of problems, $\mc{Q}$, used in the statement of Proposition~\ref{prop:lower}, denoted by $Q_\mu$ and $Q_\gamma$. The instance $Q_\mu$ has the synthetic model with mean vectors $\mu_{0u} = (-r, r), \mu_{1u} = (r, -r), \mu_{0v} = (-r', -r')$ and $\mu_{1v} = (r', r')$ for some $r'>r>0$, and $Q_\gamma$ has the mean vectors $\mu_{0u} = (-r', r'), \mu_{1u} = (r', -r'), \mu_{0v} = (-r, -r)$ and $\mu_{1v} = (r, r)$. For both these problem instances, implementing an adaptive sampling algorithm $\mc{A}$ with a budget $n$ induces a probability measure on the space $(\mc{X}\times\mc{Y} \times \mc{Z})^n$. We use $\mbb{P}_\mu$ and $\mbb{P}_\gamma$~(resp. $\mbb{E}_\mu$ and $\mbb{E}_\gamma$) to denote the  probability measures~(resp. expectations)  for the problem instances $Q_\mu$ and $Q_\gamma$ respectively. 

Then we have the following KL-divergence decomposition result. 

\begin{lemma}
\label{lemma:lower_proof1}
For any event $E$, we have the following:
\begin{equation*}
    \frac{n (r'-r)^2}{2} = D_{KL} \lp \mbb{P}_\mu, \mbb{P}_\gamma \rp \geq d_{KL} \lp \mbb{P}_\mu(E), \, \mbb{P}_\gamma(E) \rp \geq 2 \lp \mbb{P}_\mu(E) - \mbb{P}_\gamma(E) \rp^2. 
\end{equation*}
In the above display, $d_{KL}(a_1, a_2)$ for $a_1, a_2 \in [0,1]$ denotes the KL-divergence between two Bernoulli random variables with parameters $a_1$ and $a_2$ respectively. 

As a consequence of the above result, we have $|\mbb{P}_\mu(E) - \mbb{P}_\gamma(E)| \leq (r'-r)\sqrt{n/2}$. 
\end{lemma}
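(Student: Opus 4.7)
The plan is to establish the lemma by chaining together three standard information-theoretic tools: the chain rule for KL divergence along an adaptively collected sample path, the data processing inequality, and Pinsker's inequality.

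For the first (and most substantive) step, I would exploit the fact that any adaptive scheme $\mc{A}$ determines $Z_t$ as a measurable (possibly randomized, but oracle-independent) function of the history $(X_1, Y_1, \ldots, X_{t-1}, Y_{t-1})$, so the conditional law of $Z_t$ given the past is identical under $\mbb{P}_\mu$ and $\mbb{P}_\gamma$. Applying the chain rule for KL divergence then reduces the joint KL to
\[ D_{KL}(\mbb{P}_\mu, \mbb{P}_\gamma) = \sum_{t=1}^{n} \mbb{E}_\mu\big[D_{KL}(P^\mu_{X,Y \mid Z_t},\, P^\gamma_{X,Y \mid Z_t})\big]. \]
Since $P_{Y\mid Z}$ is $\text{Bernoulli}(1/2)$ under both problem instances, the only contribution comes from the Gaussian factor, which has the form $D_{KL}(\mc{N}(\mu_1, I_2), \mc{N}(\mu_2, I_2)) = \|\mu_1 - \mu_2\|^2/2$. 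A direct inspection of the four combinations $(y, z) \in \{0,1\} \times \{u, v\}$ shows that the corresponding mean vectors of $Q_\mu$ and $Q_\gamma$ differ by $(\pm(r'-r),\, \pm(r'-r))$ in every case, yielding an identical per-sample conditional KL. This is the crux of the construction of $Q_\mu$ and $Q_\gamma$: the per-step KL is independent of the attribute $Z_t$ chosen by the algorithm, so summing over the queries yields the stated closed-form value for $D_{KL}(\mbb{P}_\mu, \mbb{P}_\gamma)$ regardless of the policy $\mc{A}$.

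For the second step, I would apply the data processing inequality to the indicator $\mathbf{1}_E$ of any event $E$: since $\mathbf{1}_E$ is a measurable function of the observed sample path, its pushforward measures on $\{0,1\}$ are $\text{Bernoulli}(\mbb{P}_\mu(E))$ and $\text{Bernoulli}(\mbb{P}_\gamma(E))$, which immediately yields $D_{KL}(\mbb{P}_\mu, \mbb{P}_\gamma) \geq d_{KL}(\mbb{P}_\mu(E), \mbb{P}_\gamma(E))$. For the third step, Pinsker's inequality $d_{KL}(p, q) \geq 2 (p - q)^2$ applied to the resulting Bernoulli pair provides the final link, and the total-variation bound at the end of the lemma follows by algebraically inverting the chain of inequalities.

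The main obstacle is ensuring that the chain-rule decomposition is carried out correctly under adaptivity: specifically, that the algorithm's choice $Z_t$ contributes zero to the per-step KL. This holds because the same mapping $\mc{A}$ generates $Z_t$ under both measures, so after conditioning on the history the conditional law of $Z_t$ is identical; for randomized policies, conditioning on the external randomness as an auxiliary variable reduces the situation to the deterministic case. Once this is in place, the remaining steps are elementary Gaussian KL computations and textbook inequalities.
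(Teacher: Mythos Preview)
Your proposal is correct and mirrors the paper's proof: the paper invokes the divergence decomposition for adaptive bandit measures (citing \citet[Lemma~15.1]{lattimore2020bandit}), which is precisely the chain-rule argument you describe, together with the observation that the per-arm KL is $(r'-r)^2/2$ regardless of $z$, and then applies the data-processing inequality followed by Pinsker. The only cosmetic difference is that the paper writes the decomposition in the $\sum_z \mbb{E}_\mu[N_{z,n}]\,D_{KL}(Q_\mu(\cdot\mid z),Q_\gamma(\cdot\mid z))$ form rather than as a sum over rounds, but these are equivalent once one uses that the per-step KL does not depend on the chosen attribute.
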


\begin{proof}
The first inequality is a  consequence of the data-processing inequality for KL-divergence \citep[Corollary~2.2]{polyanskiy2014lecture}, while the second inequality follows from an application Pinsker's inequality~\citep[Theorem~6.5]{polyanskiy2014lecture}. 

We now show the derivation of the first equality in the statement. 
Let $\mc{H}_t$ denote the history at the beginning of round~$t$, i.e., $\mc{H}_t = \lp Z_1, X_1, Y_1, \ldots, Z_{t-1}, X_{t-1}, Y_{t-1} \rp$. Then for any sequence of $\lp Z_1, X_1, Y_1, \ldots, Z_n, X_n, Y_n \rp$, we have 
\begin{align*}
    \mbb{P}_\mu \lp Z_1, \ldots, Y_n \rp = \prod_{t=1}^n \mc{A}\lp Z_t | \mc{H}_{t-1} \rp Q_{\mu}\lp Y_t|Z_t \rp Q_{\mu}\lp X_t | Y_t, Z_t \rp. 
\end{align*}
We can write a similar expression for $\mbb{P}_{\gamma}$ as well. Now, proceeding as in \citep[Lemma~15.1]{lattimore2020bandit}, we  get the following divergence decomposition result 
\begin{align*}
    D_{KL}\lp \mbb{P}_{\mu}, \mbb{P}_{\gamma} \rp &= \mbb{E}_{\mu} \lb N_{u, n} \rb D_{KL}\lp Q_{\mu}(\cdot, \cdot|Z=u),  Q_{\gamma}(\cdot, \cdot|Z=u) \rp  + \mbb{E}_{\mu} \lb N_{v, n} \rb D_{KL}\lp Q_{\mu}(\cdot, \cdot|Z=v),  Q_{\gamma}(\cdot, \cdot|Z=v) \rp  \\
    & = \lp  \mbb{E}_{\mu} \lb N_{u, n} \rb + \mbb{E}_{\mu} \lb N_{v, n} \rb \rp \frac{ (r'-r)^2}{2} = \frac{n (r'-r)^2}{2}, 
\end{align*}
where the last equality uses the expression for KL-divergence between two multi-variate Gaussian distributions.

\end{proof}

\begin{lemma}
\label{lemma:lower_proof2}
Suppose $\pi_n$ denotes the mixture distribution returned by the algorithm $\mc{A}$ after $n$ rounds when applied to a problem $Q \in \{ Q_\mu, Q_\gamma\}$. Introduce the event $E = \{ \pi_n(u) > 0.5 \}$ and assume that $r'<2r$. Then we have the following: 
\begin{equation*}
    \mbb{E}_\mu \lb \mc{R}_n (\mc{A}) \rb  \geq \frac{c_r}{2\sqrt{2}}(r'-r) \mbb{P}_\mu(E^c)  \quad\text{and} \quad 
    \mbb{E}_\gamma \lb \mc{R}_n (\mc{A}) \rb  \geq \frac{c_r}{2\sqrt{2}}(r'-r) \mbb{P}_\gamma(E), 
\end{equation*}
where $c_r \coloneqq \min_{x \in [r,2r]} \left \vert \Phi'\lp \frac{x}{\sqrt{2}} \rp \right \vert$ and $\Phi(\cdot)$ denotes the cumulative distribution function~(cdf) of a standard Normal random variable. 
\end{lemma}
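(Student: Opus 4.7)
The approach is to pointwise bound the realized excess risk $\mc{R}_n(\mc{A})$ on the relevant event in each problem instance, then take expectation using $\mc{R}_n \geq 0$ on the complement.

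First I characterize the minimax-optimal quantities exactly. Since each conditional distribution in the \toymodel is isotropic Gaussian with mean vectors symmetric about the origin, every optimal linear classifier passes through the origin and is determined by its angle $\theta$. Writing $a(\theta) := \sin\theta - \cos\theta$ and $b(\theta) := \sin\theta + \cos\theta$, in $Q_\mu$ we have $L(u, f_\theta) = \Phi(r\,a(\theta))$ and $L(v, f_\theta) = \Phi(-r'\,b(\theta))$. Equating these (the minimax balance) gives $\tan\theta^*_\mu = -(r'-r)/(r+r')$, and substituting into the mixture first-order condition yields
\begin{equation*}
\pi^*_\mu(u) \;=\; \frac{r'^2}{r^2+r'^2} \;>\; \tfrac{1}{2}, \qquad M^*_\mu \;=\; \Phi\!\left(-\frac{2rr'}{\sqrt{2(r^2+r'^2)}}\right).
\end{equation*}
By the $u \leftrightarrow v$ symmetry between the two problem instances, $\pi^*_\gamma(u) = r^2/(r^2+r'^2) < 1/2$ and $M^*_\gamma = M^*_\mu$.

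Next, on event $E^c$ we have $\pi_n(u) \leq 1/2 < \pi^*_\mu(u)$, so Assumption~\ref{assump:ideal_case} gives $L(u, f_{\pi_n}) \geq L(u, f_{1/2})$, where $f_{1/2}$ is the $Q_\mu$-optimal classifier for the uniform mixture, with some angle $\theta_{1/2} \in (\theta^*_\mu, \pi/4)$. Thus $\mc{R}_n(\mc{A}) \geq L(u, f_{1/2}) - M^*_\mu$. Applying the mean value theorem to $\Phi$ between $r\,a(\theta_{1/2})$ and $r\,a(\theta^*_\mu) = -2rr'/\sqrt{2(r^2+r'^2)}$ yields
\begin{equation*}
L(u, f_{1/2}) - M^*_\mu \;=\; \Phi'(\xi)\cdot r\,\bigl[a(\theta_{1/2}) - a(\theta^*_\mu)\bigr]
\end{equation*}
for some interior $\xi$. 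Under the hypothesis $r' < 2r$ both arguments lie in an interval on which $|\Phi'|$ is bounded below by $c_r$, so matters reduce to establishing $r\,\bigl[a(\theta_{1/2}) - a(\theta^*_\mu)\bigr] \geq (r'-r)/(2\sqrt{2})$. The entirely symmetric argument in $Q_\gamma$ (using event $E$ and the role-swapped quantities) gives the companion inequality, and the stated expectation bounds then follow immediately.

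The last step is the main technical hurdle, since the first-order condition $r\,b(\theta_{1/2})\,\phi(r\,a(\theta_{1/2})) + r'\,a(\theta_{1/2})\,\phi(r'\,b(\theta_{1/2})) = 0$ is transcendental and $\theta_{1/2}$ has no closed form. My plan is to handle it via the implicit function theorem applied at the reference point $r'=r$, where $\theta_{1/2} = \theta^*_\mu = 0$ by symmetry; a direct computation shows that $\partial \theta_{1/2}/\partial r'$ and $\partial \theta^*_\mu/\partial r'$ differ by a positive amount at that reference, which, combined with $a'(0) = 1$ and the hypothesis $r' < 2r$ to control higher-order terms, yields a lower bound of the required linear order in $r'-r$. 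Should the resulting constants prove unfavorable, an alternative is to bypass $\theta_{1/2}$ by comparing $L(u, f_{\pi_n})$ to the loss $L(u, f_0) = \Phi(-r)$ of the horizontal classifier via a second application of monotonicity, for which the MVT comparison with $M^*_\mu = \Phi(-2rr'/\sqrt{2(r^2+r'^2)})$ admits a closed-form evaluation of the required form.
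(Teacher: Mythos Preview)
Your overall architecture --- drop the favorable event, lower-bound the realized excess risk pointwise on the unfavorable event, then integrate --- matches the paper exactly. The divergence is entirely in how the pointwise lower bound is obtained.

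The paper does \emph{not} pass through the uniform-mixture classifier $f_{1/2}$ at all. It asserts directly (step (a) in its proof) that on $E^c$ the excess risk is at least a closed-form expression of the type $\tfrac{1}{2}\bigl(\Phi(\cdot\,r) - \Phi(\cdot\,r')\bigr)$, appealing only to the explicit Gaussian/linear-classifier geometry; one mean-value step on $\Phi$ then yields the stated constant $c_r/(2\sqrt{2})$ immediately. Your route instead invokes Assumption~\ref{assump:ideal_case} to reduce to $L(u,f_{1/2})-M^*$, and this is precisely where you run into trouble: the optimizer $\theta_{1/2}$ of the uniform mixture has no closed form, so you are forced into an implicit-function-theorem expansion around $r'=r$ that you do not carry out and whose constants you yourself flag as possibly ``unfavorable''. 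That is a genuine gap for proving the lemma \emph{as stated}: the IFT argument, even if completed, would give a bound of order $r'-r$ with some constant depending on $r$, not the specific $c_r/(2\sqrt{2})$ that the lemma claims.

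Your fallback via the ``horizontal classifier'' $f_0$ is closer in spirit to the paper, but as written it also has a gap. To invoke monotonicity you need $f_0 = f_{\pi_0}$ for some mixture with $\pi_0(u)\geq 1/2$; whether the axis-aligned classifier corresponds to such a $\pi_0$ depends on whether $r'\phi(r')\gtrless r\phi(r)$, which is not guaranteed by $r<r'<2r$ alone (the map $x\mapsto x\phi(x)$ is non-monotone). Even granting that, the resulting MVT comparison $\Phi(-r)-\Phi\bigl(-\sqrt{2}\,rr'/\sqrt{r^2+r'^2}\bigr)$ does not reduce to $(r'-r)/(2\sqrt{2})$ times a single $\Phi'$ value; the constant you get is a more complicated rational function of $r,r'$.

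In short: your scaffold is correct, but the route through $f_{1/2}$ manufactures a hard sub-problem that the paper simply sidesteps with a direct closed-form bound. If your goal is only the downstream $\Omega(1/\sqrt n)$ of Proposition~\ref{prop:lower}, any $\Theta(r'-r)$ bound suffices and your IFT sketch could be pushed through; but to recover the lemma with the stated constant, look for a one-step bound on $\mc{R}_n$ on $E^c$ that avoids $\theta_{1/2}$ altogether.
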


\begin{proof}
We present the details only for the first inequality as the second inequality follows in an entirely analogous manner by replacing $E^c$ with $E$. 
\begin{align*}
    \mbb{E}_\mu \lb \mc{R}_n(\mc{A}) \rb &=     \mbb{E}_\mu \lb \mc{R}_n(\mc{A}) \indi{E} \rb +     \mbb{E}_\mu \lb \mc{R}_n(\mc{A}) \indi{E^c} \rb \\
    & \geq     \mbb{E}_\mu \lb \mc{R}_n(\mc{A}) \indi{E^c} \rb \\
    & \stackrel{(a)}{\geq} \lp \frac{ \Phi(r/\sqrt{2}) - \Phi(r'/\sqrt{2})}{2} \rp \mbb{P}_\mu (E^c) \\
    & \geq \min_{x \in [r,2r]}\left \vert \Phi(x/\sqrt{2}) \right \vert \lp \frac{ r' - r}{2 \sqrt{2}} \rp \mbb{P}_\mu\lp E^c \rp \\
    & = \frac{c_r}{2 \sqrt{2}}(r'-r) \mbb{P}_\mu\lp E^c \rp. 
\end{align*}
The key observation in the proof which relies on the choice of $\mc{F}$ as the set of all linear classifiers, is in step~\tbf{(a)} above. This step uses the fact that under the event $E^c$, when $\pi_n(u) \leq 0.5$, the minimax loss must be at least greater than $\frac{\Phi(r/\sqrt{2}) - \Phi(r'/\sqrt{2})}{2}$. Here $\Phi(\cdot)$ denotes the cdf of the standard normal random variable. 
\end{proof}

The final result now follows by combining the results of Lemma~\ref{lemma:lower_proof1} and Lemma~\ref{lemma:lower_proof2}. In particular, we have the following: 
\begin{align*}
\max_{Q \in \mc{Q}} \mbb{E}_Q \lb \mc{R}_n \lp \mc{A} \rp \rb \geq \max_{Q \in \{Q_\mu, Q_\gamma\} } \mbb{E}_{Q}[\mc{R}_n\lp \mc{A} \rp] & \stackrel{\text{(a)}}{\geq} \frac{1}{2} \lp \mbb{E}_\mu \lb \mc{R}_n \lp \mc{A} \rp \rb  + \mbb{E}_\gamma \lb \mc{R}_n \lp \mc{A} \rp \rb  \rp \\
& \stackrel{\text{(b)}}{\geq} \frac{c_r}{4\sqrt{2}} (r' - r) \lp \mbb{P}_\mu \lp E^c \rp + \mbb{P}_\gamma \lp E \rp \rp \\
& \stackrel{\text{(c)}}{\geq} \frac{c_r}{4\sqrt{2}} (r' - r) \lp 1-  \left \lvert \mbb{P}_\mu \lp E \rp - \mbb{P}_\gamma \lp E \rp  \right \rvert \rp \\
& \stackrel{\text{(d)}}{\geq} \frac{c_r}{4\sqrt{2}} (r' - r) \lp 1-  (r'-r)\sqrt{ \frac{n}{2}}\, \rp \\
& \stackrel{\text{(e)}}{\geq} \frac{c_r}{16 \sqrt{n}}.
\end{align*}
In the above display: \\
{\bf (a)} uses the fact that average is smaller than maximum, \\
{\bf (b)} lower bounds $\mbb{E}_\mu \lb \mc{R}_n \lp \mc{A} \rp \rb$ and $\mbb{E}_\gamma \lb \mc{R}_n \lp \mc{A} \rp \rb$ using the result of Lemma~\ref{lemma:lower_proof2}, \\
{\bf (c)} uses the fact that $\mbb{P}_{\mu}(E^c) + \mbb{P}_\gamma(E) =1 - \mbb{P}_{\mu}(E^c) + \mbb{P}_\gamma(E) \geq 1- |\mbb{P}_{\mu}(E^c) - \mbb{P}_\gamma(E)|$, \\
{\bf (d)} follows by using the bound $|\mbb{P}_{\mu}(E^c) - \mbb{P}_\gamma(E)| \leq (r'-r)\sqrt{n/2}$ derived in Lemma~\ref{lemma:lower_proof1}, and finally, \\ 
{\bf (e)} follows by setting $r' = r+ 1/\sqrt{2n}$. 
\section{Details of Experiments}
\label{appendix:experiments}

\subsection{Synthetic Datasets}

\emph{Data.} We used the two synthetic models introduced in Section~\ref{subsec:motivating_example} for generating the training set. Here we provide a formal definition for \toymodelii, an instance of which is illustrated in Figure~\ref{fig:trenddata}.

\begin{definition}[\toymodelii]
\label{def:toymodelii}
Set $\X = \mbb{R}^2$, $\mc{Y} = \{0, 1 \}$, and $\mc{Z} = \{u, v\}$ and fix an angle $\theta$ and a constant $a\in\mbb{R}$. Then $P_{X|Y=y,Z=u}=\mathcal{U}(c_y,c_y+a)$. That is, within attribute $Z=u$ and for either label $Y=y$, $X$ is uniformly distributed along one dimension.  To get the conditional distributions for $Z=v$ we draw from mixtures of three Gaussians: {\small $\widetilde{P}_{X=x|Y=y,Z=v} = \sum_{i=1}^3 \rho_{i,y} P_{i,y}(x)$} where each $P_{i,y} = \mathcal{N}(\mu_{i,y},\Sigma_{i,y})$ and $\rho_{i,y}$ are non-negative and sum to one over $i$.  The resulting sample is then uniformly rotated counterclockwise by $\theta$ in the plane.  
\end{definition}

To elaborate on the intuition provided for this model in the main text, our goal here is to design a problem where one attribute, here $Z=u$, has a low maximum expected accuracy, for a linear classifier, that can be attained with relatively few samples.  Here, any linear classifier will misclassify at least half of all points in the overlap of the intervals $(c_0,c_0+a)$ and $(c_1,c_1+a)$. And any linear classifier which intercepts this overlap will attain the maximal possible expected accuracy on $Z=u$. Then for $Z=v$ we choose the $\rho_{i,y}$ and covariances such that most mass is contained in easily separable clusters, so that a classifier can attain high accuracy on $Z=v$ with few samples.  But, the sparser clusters are chosen to have some overlap between $Y=1$ and $Y=0$.  The overlap and sparseness require a learner draw more samples from $Z=v$ to accurately learn the optimal boundary.  

Thus $\Aopt$ and \texttt{Greedy} will continually sample from $Z=u$, as any linear classifier will have relatively low accuracy on it, despite the fact that additional samples do not improve performance on this attribute.  In contrast, $\Aopt +$ identifies this stagnation in performance and samples from $Z=v$, over time drawing sufficient samples from the sparse clusters to maximize accuracy on both attributes.   

\emph{Classifier.} We set $\mc{F}$ to the set of logistic regression classifiers. More specifically we used the \ttt{LogisticRegression} implementation in the \ttt{scikit-learn} package. 

\emph{Details of experiments.} For the experiments on \toymodel, pictured in Figure~\ref{fig:experiment0}, for each trial we set $n=1000$ and tracked the $\pi_t(u)$ values for the three algorithms. We repeated the experiment for $100$ trials and plot the resulting mean $\pi_t(u)$ values in the curves, along with the one-standard-deviation regions. 

For experiments on \toymodelii, pictured in Figure~\ref{fig:trendacc0} and Figure~\ref{fig:trendacc1}, we used the heuristic algorithm, $\Aopt+$, with $c_1=0.1$ and the Mann-Kendall statistic tracking the accuracy after the previous 20 sample draws for each attribute.  We ran 100 trials for each scheme and report the mean accuracy over both attributes, along with one-standard-deviation regions.

\subsection{Real Datasets} 

We used the following datasets:
\begin{itemize}
    \item \href{https://susanqq.github.io/UTKFace/}{\emph{UTKFace}} dataset. This dataset consists of large number of face images with annotations of age, gender and ethnicity. We used $Y = \{ \ttt{Male, Female}\}$ and set $\mc{Z} = \{ \ttt{White, Black, Asian, Indian, Other}\}$. 

    \item \href{https://github.com/zalandoresearch/fashion-mnist}{\emph{FashionMNIST}} dataset. This dataset  consists of $10$ different classes, which were  paired off to get five different binary classification tasks: \{\ttt{(Tshirt, Shirt), (Trousers, Dress), (Pullover, Coat), (Sandals, Bag), (Sneakers, AnkleBoots)}\}
    
    \item \href{https://www.cs.toronto.edu/~kriz/cifar.html}{\emph{Cifar10}} dataset. This dataset also consists of $10$ different classes, which were paired off as follows: \{ \ttt{(airplane, ship), (automobile, truck), (bird, cat), (deer, dog), (frog, horse)}\}.
    
    \item \href{https://archive.ics.uci.edu/ml/datasets/adult}{\emph{Adult}} dataset. This is a low dimensional, binary classification problem where the inputs are a variety of demographic data about an individual and the task is to predict whether the individual makes more or less than $\$50,000$/year.  
     
    \item \href{https://archive.ics.uci.edu/ml/datasets/statlog+(german+credit+data)}{\emph{German}} dataset. This is another low dimensional, binary classification problem where the inputs are demographic and financial information about an individual and the task is to categorize them as low or high risk for defaulting on a loan.
    
\end{itemize}

\emph{Data Transforms and Augmentation.} We used the following pre-processing operations for the different datasets: 
\begin{itemize}
    \item \emph{UTKFace.} For this dataset, we first resized the dataset to size $3 \times 50\times 50$ from the original $3\times 200 \times 200$. Then we also applied random horizontal flip with probability $0.5$. For training we used the images from the file \ttt{UTKFace.tar.gz}, while for testing we used the file \ttt{crop\_part1.tar.gz}. (Both of these files can be found at this 
    \href{https://drive.google.com/drive/folders/0BxYys69jI14kU0I1YUQyY1ZDRUE}{link} shared by the owner of the dataset).
    
    \item \emph{FashionMNIST.} We only employed the normalization transform in this case, and used the default training and test splits. 
    
    \item \emph{Cifar10.} We  employed a random crop transform~(to size $3\times 28 \times 28$ with padding $1$),  a random horizontal flip transform~(with probability $0.5$) and a normalization transform, and used the default training and test split. 
    
    \item \emph{Adult.} For the categorical inputs we used a one-hot encoding and normalized all numerical inputs to $[0,1]$.  We used the provided training/test split.
    
    \item \emph{German} We used the same pre-processing as for the {\em Adult} dataset here.  As described in the main paper, this dataset only provides 1000 examples and does not have a canonical train/test split.  Moreover, results were very sensitive to the choice of split.  So for every experiment we generated a new, random 70/30 training/test split and report results averaged over 500 such trials.  
\end{itemize}

\emph{Details of Classifiers.} We implemented the CNN using Pytorch. The CNN used for both {\em Cifar10} and {\em UTKFace} had the same architecture consisting of: 
\vspace{-1em}
\begin{itemize}\itemsep-0.2em
\item First \ttt{Conv2d} layer with $32$ output channels, kernel size $3$, padding = $1$, followed by \ttt{ReLU} followed by a \ttt{MaxPool2d} layer with kernel size $2$ and stride $2$. 

\item Second \ttt{Conv2d} layer with $64$ output channels, kernel size $3$, followed by \ttt{ReLU} followed by \ttt{MaxPool2d} with kernel size $2$. 

\item The two \ttt{Conv2d} layers were followed by $3$ fully connected layers with output sizes $600$, $120$ and $2$ respectively. 
\item  For training the neural network we used the Adam optimizer with $\ttt{lr} = 0.001$. 
\end{itemize}
The CNN used for {\em FashionMNIST} was identical, except with the second convolutional layer omitted, due to the simpler nature of the dataset.  

For both {\em Adult} and {\em German} datasets we again used the \ttt{LogisticRegression} implementation in the \ttt{scikit-learn} package.  

\emph{Computing Infrastructure used.}  The image dataset experiments were run on \href{https://colab.research.google.com}{Google Colab} using the free GPU instances and on a shared computing cluster which provides a variety of different GPUs.  So we cannot provide the exact details of the GPUs we used, but all experiments in the paper can comfortably run in less than day on a GTX 1080 TI.  

\subsection{Additional Experimental Results}
\begin{figure}[b]
\captionsetup[subfigure]{labelformat=empty}
     \centering
    \subcaptionbox{
    \label{fig:cifaropt}}{\includegraphics[width=0.43\columnwidth]{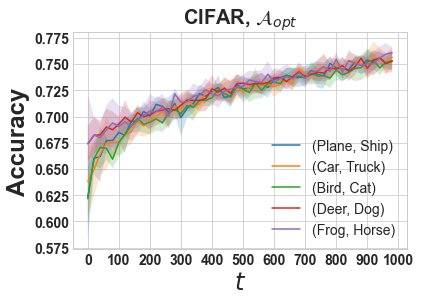}}
    \hfill   
    \subcaptionbox{
    \label{fig:cifarunif}}{\includegraphics[width=0.43\columnwidth]{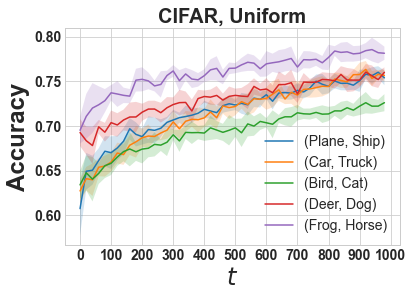}} \\ 
\vspace{-2.6em} 
\caption{Test accuracy for each attribute in {\em Cifar10} as a function of the time step, $t$, for both $\Aopt$ and \ttt{Uniform} sampling schemes, averaged over 10 trials. }
\label{fig:cifarattributes}
\vspace{-.25cm}
\end{figure}

\begin{wrapfigure}[16]{r}{.45\textwidth}
    \vspace{-1.0cm}
    \begin{center}
        \includegraphics[width=0.45\textwidth]{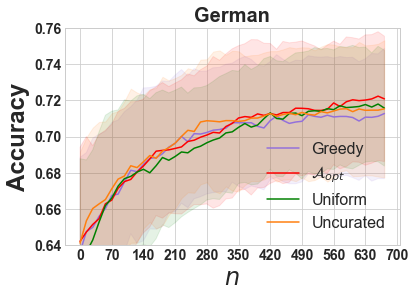}
    \end{center}
    \vspace{-.15in}
    \caption{Minimum test accuracy over all attributes for {\em German}  dataset as a function of the sampling budget $n$, averaged over 500 trials}
    \label{fig:german}
\end{wrapfigure}

Here we present results for experiments on additional datasets, these are all analogous to results presented in the main paper on other datasets.

First we have the {\em German} dataset, which is from the UCI Repository~\citep{Dua19} and qualitatively similar to the {\em Adult} dataset.  We again use a LR classifier and the exact implementation of each algorithm.  We set $\mc{Z}$ to be male or female. The results, in Figure~\ref{fig:german}, show, on average, an advantage for $\Aopt$ over both \texttt{Uniform} and \texttt{Greedy} schemes.  But this dataset contains only 1000 examples in total. We generate a 70/30 training/test split, but find that our results strongly depend on this split. To mitigate this, we run 500 trials and generate a new random split for each, but still find very high variance in our results, indicated by the shaded region in the figure.

\begin{wrapfigure}[15]{r}{.45\textwidth}
    \vspace{-0.3cm}
    \begin{center}
        \includegraphics[width=0.45\textwidth]{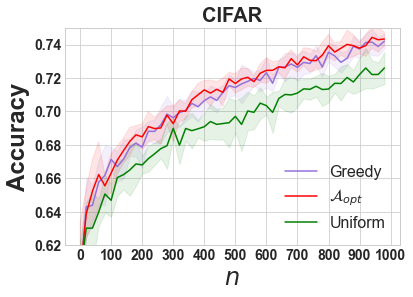}
    \end{center}
    \vspace{-.15in}
    \caption{Minimax error as a function of training round for {\em Cifar10}}
    \label{fig:cifarminimax}
\end{wrapfigure}

Figure~\ref{fig:cifarminimax} shows the minimax error on {\em Cifar10}. This again demonstrates a significant improvement in worst case accuracy for the adaptive schemes over the \texttt{Uniform} scheme, which is equivalent to \texttt{Uncurated} for this dataset.  Figures~\ref{fig:cifarattributes} and~\ref{fig:faceattributes} show accuracy across all attributes as a function of training round for {\em Cifar10} and {\em UTKFace} for both $\Aopt$ and \texttt{Uniform} schemes, as in the analogous results for {\em FashionMNIST} this demonstrates the utility of $\Aopt$ for improving fairness between groups and lends credence to our assumption of the existence of an equalizing sampling distribution for real world.

Finally Table~\ref{tab:real_datasets} summarizes the final test set accuracy, with standard deviation ranges, for each dataset and sampling scheme at the end of their respective training periods.  As in other results, the adaptive schemes show an advantage over \texttt{Uniform} in all cases, and the efficacy of the \texttt{Uncurated} scheme depends on the nature of the dataset and chosen attributes.

\begin{figure}[t]
\captionsetup[subfigure]{labelformat=empty}
     \centering
    \subcaptionbox{
    \label{fig:faceopt}}{\includegraphics[width=0.43\columnwidth]{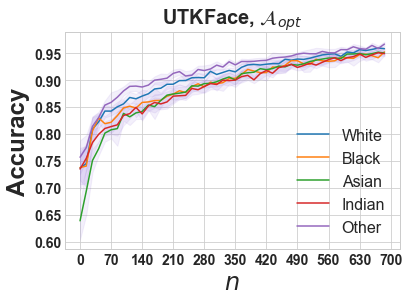}}
    \hfill   
    \subcaptionbox{
    \label{fig:faceunif}}{\includegraphics[width=0.43\columnwidth]{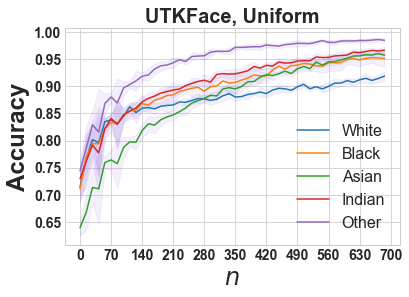}} \\ 
\vspace{-2.6em} 
\caption{Test accuracy for each attribute in {\em UTKFace} as a function of the time step, $t$, for both $\Aopt$ and \ttt{Uniform} sampling schemes, averaged over 10 trials. }
\label{fig:faceattributes}
\vspace{-.25cm}
\end{figure}

\begin{table}[ht]
    \centering
    \begin{small}
    \scalebox{0.8}{
    \begin{tabular}{|c|c|c|c|c|} \hline 
         \tbf{Dataset} & $\bm{\mc{A}_{opt}}$ & \tbf{\ttt{Greedy}} & \tbf{\ttt{Uniform}} & \tbf{\ttt{Uncurated}} \\ \hline 
         UTKFace & $0.946 \pm 0.003$ & $0.946\pm 0.012$ & $0.919\pm 0.008$ & $0.933\pm 0.007$ \\
         
         FashionMNIST & $0.936\pm 0.004$ & $0.924 \pm 0.010$ & $0.893\pm0.002$ & $0.893 \pm 0.002$ \\ 
         
         Cifar10 & $0.743\pm 0.004$ & $0.742 \pm 0.005$ & $0.726 \pm 0.010$ & $0.726 \pm 0.010$ \\ 
         
         Adult & $0.801\pm 0.001$ & $0.800 \pm 0.002$ & $0.798 \pm 0.002$ & $0.797 \pm 0.003$ \\
         
         German & $0.721\pm 0.035$ & $0.713 \pm 0.036$ & $0.716 \pm 0.032$ & $0.715 \pm 0.038$ \\
         
         \hline 
    \end{tabular}
    } 
    \end{small}
    \caption{Minimum test accuracy over different attributes achieved by the classifiers returned by the 4 sampling schemes.}
    \label{tab:real_datasets}
\end{table}

\end{appendix}

\end{document}